\documentclass{article} 
\usepackage{iclr2023_conference,times}


\usepackage{hyperref}
\usepackage{url}

\usepackage{booktabs}       
\usepackage{amsfonts}       
\usepackage{nicefrac}       
\usepackage{microtype}      
\usepackage{xcolor}         
\usepackage{amsmath, amsthm, amssymb}
\usepackage[pdftex]{graphicx}
\usepackage{subcaption}
\usepackage{enumitem}
\usepackage{wrapfig}
\usepackage{multirow}
\newtheorem{thm}{Theorem}
\newtheorem{lem}{Lemma}

\usepackage{wrapfig,lipsum,booktabs}

\providecommand{\zvar}{\mathbf{z}}
\providecommand{\wvar}{\mathbf{w}}

\providecommand{\uvar}{\mathbf{u}}
\providecommand{\vvar}{\mathbf{v}}
\providecommand{\zset}{\mathcal{Z}}
\providecommand{\wset}{\mathcal{W}}

\def\gS{{\mathcal{S}}}
\def\gB{{\mathcal{B}}}
\def\sR{{\mathbb{R}}}
\DeclareMathOperator*{\argmin}{arg\,min}
\newcommand{\normlone}{L^1}
\newcommand{\normltwo}{L^2}


\usepackage{soul}    

\title{Finding the Global Semantic Representation in GAN through Fréchet Mean}


\author{Jaewoong Choi \\
  Korea Institute for Advanced Study\\
  \texttt{chjw1475@kias.re.kr} \\
  \And
  Geonho Hwang, \, Hyunsoo Cho, \, Myungjoo Kang\thanks{Corresponding author} \\
  Seoul National University \\
  \texttt{\{hgh2134,hscho100,mkang\}@snu.ac.kr}
}

\iclrfinalcopy 
\begin{document}

\maketitle

\begin{abstract}
The ideally disentangled latent space in GAN involves the global representation of latent space with semantic attribute coordinates. In other words, considering that this disentangled latent space is a vector space, there exists the global semantic basis where each basis component describes one attribute of generated images. 
In this paper, we propose an unsupervised method for finding this global semantic basis in the intermediate latent space in GANs. This semantic basis represents sample-independent meaningful perturbations that change the same semantic attribute of an image on the entire latent space. The proposed global basis, called Fréchet basis, is derived by introducing Fréchet mean to the local semantic perturbations in a latent space. Fréchet basis is discovered in two stages. First, the global semantic subspace is discovered by the Fréchet mean in the Grassmannian manifold of the local semantic subspaces. Second, Fréchet basis is found by optimizing a basis of the semantic subspace via the Fréchet mean in the Special Orthogonal Group. Experimental results demonstrate that Fréchet basis provides better semantic factorization and robustness compared to the previous methods. Moreover, we suggest the basis refinement scheme for the previous methods. The quantitative experiments show that the refined basis achieves better semantic factorization while constrained on the same semantic subspace given by the previous method.
\end{abstract}

\section{Introduction}
Generative Adversarial Networks (GANs, \citep{goodfellow2014generative}) have achieved impressive success in high-fidelity image synthesis, such as ProGAN \citep{karras2018progressive}, BigGAN \citep{brock2018large}, and StyleGANs \citep{karras2019style, karras2020training, karras2020analyzing, AliasFreeGAN}. Interestingly, even when a GAN model is trained without any information about the semantics of data, its latent space often represents the semantic property of data \citep{radford2015unsupervised, karras2019style}. To understand how GAN models represent the semantics, several studies investigated the disentanglement \citep{bengio2013representation} property of latent space in GANs \citep{goetschalckx2019ganalyze, jahanian2019steerability, plumerault2020controlling, shen2020interpreting}. Here, a latent space in GAN is called \textit{disentangled} if there exists an optimal basis of the latent space where each basis coefficient corresponds to one disentangled semantics (generative factor).

One approach to studying the disentanglement property is to find meaningful latent perturbations that induce the disentangled semantic variation on generated images \citep{ramesh2018spectral, harkonen2020ganspace, shen2020closed, LocalBasis}. This approach can be interpreted as investigating how the semantics are represented around each latent variable. We classify the previous works on meaningful latent perturbations into \textit{local} and \textit{global} methods depending on whether the proposed perturbation is sample-dependent or sample-ignorant. In this work, we focus on the global methods \citep{harkonen2020ganspace, shen2020closed}. If the latent space is ideally disentangled, the optimal semantic basis becomes the global semantic perturbation that represents a change in the same generative factor on the entire latent space. In this regard, these global methods are attempts to find the best-possible semantic basis on the target latent space. 
Throughout this work, the semantic subspace represents the subspace generated by the corresponding semantic basis.

In this paper, we propose an unsupervised method for finding the global semantic perturbations in a latent space in GAN, called \textbf{Fréchet Basis}. Fréchet Basis is based on the \textit{Fréchet mean} on the Riemannian manifold. Fréchet mean is a generalization of centroid to the general metric space \citep{FrechetMeanOrg} and the Riemannian manifold is the metric space \citep{lee2013smooth}. 
In particular, Fréchet Basis is discovered in two steps (Fig \ref{fig:concept}). First, we find the global semantic subspace $\gS_{s}$ of latent space as Fréchet mean in the Grassmannian manifold \citep{boothby} of the intrinsic tangent spaces. Here, the intrinsic tangent space represent the local semantic subspace \citep{LocalBasis}.
Second, Fréchet basis $\gB_{s}$ is discovered by selecting the optimal basis of $\gS_{s}$ via Fréchet mean in the Special Orthogonal Group \citep{lang2012algebra}. Our experiments show that Fréchet basis provides better semantic factorization and robustness compared to the previous unsupervised global methods. Moreover, the second step in finding Fréchet basis provides the basis refinement scheme for the previous global methods. In our experiments, the basis refinement achieves better semantic factorization than the previous methods while keeping the same semantic subspace. Our contributions are as follows:
\begin{enumerate}[topsep=-1pt, itemsep=0pt]
    \item We propose unsupervised global semantic perturbations, called Fréchet basis. Fréchet basis is discovered by introducing Fréchet mean to the local semantic perturbations.
    \item We show that Fréchet basis achieves better semantic factorization and robustness compared to the previous global approaches.
    \item We propose the basis refinement scheme, which optimizes the semantic basis on the given semantic subspace. We can refine the previous global approaches by applying the basis refinement on their semantic subspaces.
\end{enumerate}

\begin{figure}[t]
  \centering
  \includegraphics[width=0.7\linewidth]{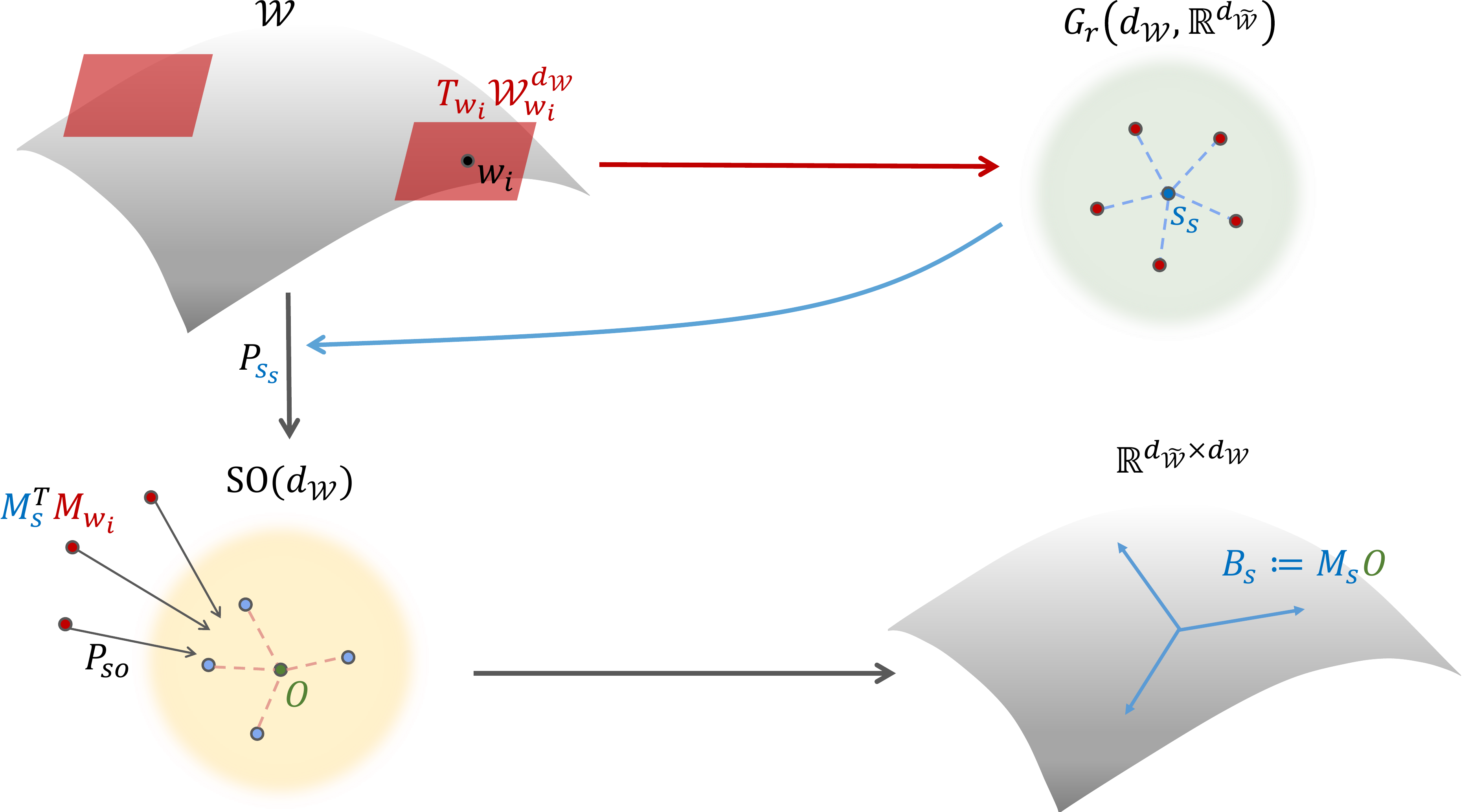} 
  \caption{\textbf{Overview of Fréchet basis.} The global semantic subspace $\gS_{s}$ is defined as the Fréchet mean of intrinsic tangent spaces $T_{\wvar_{i}} \wset^{d_{\wset}}_{\wvar_{i}}$ in the Grassmannian manifold $Gr(d_{\wset}, \sR^{d_{\Tilde{\wset}}})$. Fréchet basis $\gB_{s}$ is discovered by selecting the optimal basis of  $\gS_{s}$ using the Fréchet mean in the Special Orthogonal Group.
  }
  \label{fig:concept}
  \vspace{-10pt}
\end{figure}

\section{Related Works and Background} \label{sec:related_works}
\paragraph{Latent Perturbation for Image Manipulation}
The latent space of GANs often represents the semantics of data even when the model is trained without the supervision of the semantic attributes. Early approaches to understanding the semantic property of latent space showed that the vector arithmetic on latent space leads to the semantic arithmetic on the image space \citep{radford2015unsupervised, upchurch2017deep}. In this regard, a line of research has been conducted to find meaningful latent perturbations that perform image manipulation in disentangled semantics.  
We categorize the previous works into \textit{local} and \textit{global} methods according to sample dependency. The local method finds meaningful perturbations for each latent variable for one semantic attribute, e.g., \citet{ramesh2018spectral}, Latent Mapper in StyleCLIP \citep{patashnik2021styleclip}, Attribute-conditioned normalizing flow in StyleFlow \citep{abdal2021styleflow}, Local image editing in \citet{LowRankSubspace}, and Local Basis \citep{LocalBasis}.
By contrast, the global methods offer the sample-independent meaningful perturbations for each latent space, e.g., Global directions in StyleCLIP \citep{patashnik2021styleclip}, GANSpace \citep{harkonen2020ganspace}, and SeFa \citep{shen2020closed}. Among them, StyleCLIP is a supervised method requiring text descriptions of each generated image to train CLIP model \citep{CLIP}. Throughout this paper, we investigate unsupervised methods such as GANSpace and SeFa. GANSpace \citep{harkonen2020ganspace} suggested the principal components of latent space obtained by performing PCA as global meaningful perturbations. SeFa \citep{shen2020closed} proposed the singular vectors of the first weight matrix as global disentangled perturbations.

\paragraph{Unsupervised global disentanglement score}
The disentanglement of latent space is expressed as the correspondence between the semantic attributes of data and the axes of latent space. 
Because the definition of disentanglement depends on attributes, most of the existing disentanglement metrics for latent spaces are supervised ones, e.g., DCI score \citep{DCIMetric}, $\beta$-VAE metric \citep{BetaVAE}, and FactorVAE metric \citep{FactorVAE}. They require the attribute annotations of generated images. This requirement restricts the broad applicability of disentanglement evaluation on real datasets. To address this restriction, \citet{Distortion} proposed an unsupervised global disentanglement score, called \textit{Distortion}. Distortion metric measures the variation of tangent space on the \textit{learned latent manifold} $\wset$. Hence, Distortion metric relies purely on the geometric property of the latent space and does not require attribute labels. 

\paragraph{Background}
\citet{LocalBasis} suggested a framework for analyzing the semantic property of intermediate latent space by its local geometry. This analysis is performed on the \textit{learned latent manifold} $\wset=f(\zset)$, where $f$ denotes the subnetwork $f$ from the input noise space $\zset$ to the target latent space. Here, we assume $\zset$ is the entire Euclidean space, i.e., $\zset = \sR^{d_{\zset}}$ for some $d_{\zset}$. Note that this is satisfied for the usual Gaussian prior $p(\zvar) = \mathcal{N}(0, I_{d_{\zset}})$.
\citet{LocalBasis} proposed a method for finding the $k$-dimensional local approximation $\wset^{k}_{\wvar}$ of $\wset$ around $\wvar=f(\zvar) \in \wset$. This local approximation $\wset^{k}_{\wvar}$ is discovered by the low-rank approximation problem of $df_{\zvar}$ and this solution is given by SVD.
Then, $\wset^{k}_{\wvar}$ is given as follows:
For the $i$-th singular vector $\uvar^{\zvar}_{i}\in \sR^{d_{\zset}}$, $\vvar^{\wvar}_{i}\in \sR^{d_{\wset}}$, and $i$-th singular value $\sigma^{\zvar}_{i}\in \sR$ of $df_{\zvar}$ with $\sigma^{\zvar}_1\geq \cdots\geq \sigma^{\zvar}_{m}$ and $m=\min(d_{\zset}, d_{\wset})$,
\begin{align}
    df_{\zvar}(\uvar^{\zvar}_{i}) &= \sigma^{\zvar}_{i} \cdot \vvar^{\wvar}_{i} \quad \text{for}\,\, \forall i, \qquad 
    \text{Local Basis}(\wvar = f(\zvar)) = \{ \vvar^{\wvar}_{i} \}_{1\leq i\leq n}, 
    \label{eq:LocalBasis} \\
    \wset^{k}_{\wvar} &= \left\{ f\left(\zvar + \sum_{i} t_{i} \cdot \uvar^{\zvar}_{i} \right) \mid t_{i} \in (-\epsilon_{i} , \epsilon_{i} ),\text{ for } 1 \leq i \leq k \right\}, \label{eq:approx_mfd} \\
    T_{\wvar} \wset^{k}_{\wvar} &= \operatorname{span}\{ \vvar^{\wvar}_{i}: 1 \leq i \leq k \} \label{eq:tangent_span}.
\end{align}
\citet{LocalBasis} showed that the codomain singular vectors, called Local Basis (Eq \ref{eq:LocalBasis}), serve as the local semantic perturbations around $\wvar$. In this respect, \textit{the tangent space at $\wvar$ represents the local semantic subspace because} it is spanned the local semantic perturbations (Eq \ref{eq:tangent_span}).

Upon this framework, \citet{Distortion} proposed the intrinsic local dimension estimation scheme for the latent manifold $\wset$ through the robust rank estimate \citep{RankEstimate} of $df_{\zvar}$. Geometrically, the intrinsic local dimension represents the number of dimensions required to properly approximate the denoised $\wset$. \textit{\citet{Distortion} showed that this local dimension corresponds to the number of local semantic perturbations.} 
Using this correspondence, \citet{Distortion} introduced the unsupervised disentanglement score called \textit{Distortion}.
Distortion metric is defined as the normalized variation of intrinsic tangent space on the latent manifold. The normalized variation is expressed as the ratio of the distance between two tangent spaces at two \textit{random} $\wvar \in \wset$ (Eq \ref{eq:distortion_rand}) to the distance between two tangent spaces at two \textit{close} $\wvar$ (Eq \ref{eq:distortion_local}). 
The distance between tangent spaces is measured by the dimension-normalized Geodesic Metric $d^{k}_{geo}$ \cite{Distortion} in Grassmannian manifold \cite{boothby}.
Specifically, \textit{\textbf{Distortion}} of $\wset$ is defined as  $\mathcal{D}_{\wset}=I_{rand} / I_{local}$ with
\begin{align}
    &I_{rand} = \mathbb{E}_{\zvar_{i} \sim p(\zvar), \wvar_{i} = f(\zvar_{i})} 
    \left[ d^{k}_{\mathrm{geo}}\left( \,T_{\wvar_{1}} \wset^{k}_{\wvar_{1}},T_{\wvar_{2}} \wset^{k}_{\wvar_{2}} \right) \,\, \textrm{ for }\,\, k = \min(k_{1}, k_{2}) \, \right], \label{eq:distortion_rand} \\
    &I_{local} = \mathbb{E}_{\zvar_{1} \sim p(\zvar), |\zvar_{2} - \zvar_{1}|=\epsilon}
    \left[ d^{k}_{\mathrm{geo}}\left( \,T_{\wvar_{1}} \wset^{k}_{\wvar_{1}},T_{\wvar_{2}} \wset^{k}_{\wvar_{2}} \right) \,\, \textrm{ for } \,\, k = \min(k_{1}, k_{2}) \, \right]. \label{eq:distortion_local}
\end{align}
where $k_i$ denotes the local dimension estimate at $\wvar_{i} = f(\zvar_{i})$.
Interestingly, although Distortion metric does not exploit any semantic information, Distortion metric provides a strong correlation between the supervised disentanglement score and the global-basis-compatibility \citep{Distortion}.

\section{Fréchet Mean Global Basis}
In this section, we propose an unsupervised method for finding global linear perturbation directions that make the same semantic manipulation on the entire latent space, called \textbf{Fréchet basis}. If we have such global meaningful perturbations, the vector space representation of latent space along these semantic basis provides the global semantic representation of a model. The proposed scheme is based on finding the Fréchet mean \citep{FrechetMeanOrg, karchermean} of the local disentangled perturbations. The scheme is in two steps. First, the optimal subspace representing the global semantics is discovered by the Fréchet mean in the Grassmannian manifold \citep{boothby} of intrinsic tangent spaces \citep{LocalBasis} on the target latent space. Second, the optimal basis is obtained as the Fréchet mean in the Special Orthogonal Group \citep{lang2012algebra} of the projected local disentangled perturbations.

\subsection{Method}
\paragraph{Notation}
Throughout this work, we follow the notation presented in Sec \ref{sec:related_works}. Let $\Tilde{\wset}=\sR^{d_{\Tilde{\wset}}}$ be a ambient target latent space where we want to find global semantic perturbations. We analyze the learned latent manifold $\wset=f(\zset) \subset \Tilde{\wset}$ embedded in this latent space, which is given as an image of the subnetwork from the input noise $\zset$ to $\Tilde{\wset}$.

\paragraph{Motivation}
We investigate the problem of discovering global semantic perturbations through the local geometry of learned latent manifold $\wset$. Recently, \citet{Distortion} discovered that the intrinsic tangent space $T_{\wvar} \wset^{k}_{\wvar}$ at each $\wvar \in \wset$ represents the local semantic variation of the generated image from $\wvar$.
Specifically, the intrinsic local dimension at $\wvar$, denoted as $k$ in $T_{\wvar} \wset^{k}_{\wvar}$, corresponds to the number of local semantic perturbations. The top-$k$ components of Local Basis \citet{LocalBasis} are these local semantic perturbations and are the basis vectors of $T_{\wvar} \wset^{k}_{\wvar}$ (Eq \ref{eq:tangent_span}). Hence, the intrinsic tangent space $T_{\wvar} \wset^{k}_{\wvar}$ describes the local semantic variation of an image because it is spanned by the local meaningful perturbations.

In this regard, we interpret the global semantic variation as \textit{the mean of these local semantic variations}. One of the most popular methods for defining the mean of subspaces is through \textit{Fréchet mean} \citep{FrechetMeanRef}. Fréchet mean is a generalization of the mean in vector space to the general metric space. The mean of vectors is the minimizer of the sum of squared distances to each vector.
Similarly, Fréchet mean $\mu_{fr}$ in the metric space $X$ with metric $d$ is defined as the minimizer of squared metrics, i.e., for $x_1, x_2, \ldots, x_{n} \in X$,
\begin{equation}
    \mu_{fr} = \argmin_{\mu \in X} \sum_{1\leq i\leq n} d 
    \left( \mu, x_{i}  \right)^{2}.
\end{equation}
In particular, a Riemannian manifold is an example of metric space where we can introduce the Fréchet mean \citep{lou2020differentiating}. In this work, we utilize the Grassmannian manifold \citep{boothby} to find the subspace of latent space for the global semantic representation and the Special Orthogonal Group \citep{lang2012algebra} to choose the optimal basis on it.

\subsubsection{Global Semantic Subspace} \label{sec:global_semantic_space}
Our goal is to find a Riemannian manifold where we can embed these 
intrinsic tangent spaces $\{ T_{\wvar_{i}} \wset^{k_{i}}_{\wvar_{i}} \}_{1 \leq i \leq n}$ describing local semantic variations at each $\wvar_{i}$. The Grassmannian manifold $Gr(k, V)$ denotes the set of $k$-dimensional linear subspaces of vector space $V$ \citep{boothby}. 
Hence, \textit{these tangent spaces can be embedded to one Grassmannian manifold $Gr\left(d_{\wset}, \sR^{d_{\Tilde{\wset}}}\right)$ by matching their dimensions} to the dimension of learned latent manifold $d_{\wset}$. 
Specifically, we match the dimensions of tangent spaces by refining or extending them to the subspaces spanned by the top-$d_{\wset}$ components of Local Basis (Eq \ref{eq:tangent_span}). This is equivalent to approximating $\wset$ with the $d_{\wset}$-dimensional local estimate $\wset_{\wvar}^{d_{\wset}}$ at all $\wvar \in \wset$ (Eq \ref{eq:approx_mfd}).
We estimate the layer-wise dimension $d_{\wset}$ of learned latent manifold $\wset$ by averaging local dimensions $\{ k_{i} \}_{1 \leq i \leq n}$ of $n$ i.i.d. samples,
\begin{equation}
    T_{\wvar_{i}} \wset^{d_{\wset}}_{\wvar_{i}} = \operatorname{span}\{ \vvar^{\wvar_{i}}_{i}: 1 \leq i \leq d_{\wset} \} \in Gr\left(d_{\wset}, \sR^{d_{\Tilde{\wset}}}\right),
\end{equation}
where $\vvar^{\wvar_{i}}_{i}$ denotes Local Basis at $\wvar_{i}$ (Eq \ref{eq:LocalBasis}).
Then, we define the \textbf{global semantic subspace} $\gS_{s}$ of $\wset$ as the Fréchet mean on $Gr\left(d_{\wset}, \sR^{d_{\Tilde{\wset}}}\right)$ with the geodesic metric $d_{geo}$ \citep{grassgeo}:
\begin{equation} \label{eq:fm_grass}
    \gS_{s} = \argmin_{\mu \in Gr(d_{\wset}, \sR^{d_{\Tilde{\wset}}})} \sum_{1\leq i\leq n} d_{geo}\left( \mu, \, T_{\wvar_{i}} \wset^{d_{\wset}}_{\wvar_{i}}  \right)^{2}.
\end{equation}
Here, the geodesic metric $d_{geo}$ is defined as $d_{\mathrm{geo}}(W,W^{\prime})=\left(\sum^{k}_{i=1}\theta^2_i\right)^{1/2}$ for $W,W^{\prime} \in Gr(k, \sR^{n})$ where $\theta_{i}$ denotes the $i$-th principal angle between $W$ and $W^{\prime}$. That is, $\theta_i =\cos^{-1}(\sigma_{i}(M_{W}^{\top} \, M_{W^{\prime}} ))$ where $M_{W} \in \sR^{n\times k}$ denotes the column-wise concatenation of orthonormal basis for $W$. For optimization, we used the gradient descent algorithm in the Pymanopt \citep{Pymanopt}.

\subsubsection{Global Semantic Basis} \label{sec:global_semantic_basis}
The aim of this work is to find global meaningful perturbations that represent the disentangled semantics. However, the global semantic subspace $\gS_{s}$ is discovered by solving an optimization problem in the Grassmannian manifold, the set of subspaces. We need an additional step to find a specific basis on $\gS_{s}$. In particular, we utilize the Fréchet mean on the Special Orthogonal Group.

\paragraph{Why Special Orthogonal Group}
Let the columns of $M_{\gS}, M_{\gS}^{\prime} \in \sR^{d_{\Tilde{\wset}} \times d_{\wset}}$  be the two distinct orthonormal basis of $\gS_{s}$. Then, there exists an orthogonal matrix $O \in \sR^{d_{\wset} \times d_{\wset} }$, i.e., $O^{\top}O=OO^{\top} = I$, which satisfies $M_{\gS}^{\prime} = M_{\gS} O$. Therefore, \textit{finding the optimal basis of $\gS_{s}$ is equivalent to finding the orthogonal matrix $O$ given the initial $M_{\gS}$.} The Special Orthogonal Group $SO(n)$ consists of $n \times n $ orthogonal matrices with determinant $+1$ \citep{lang2012algebra}. Note that the determinant of an arbitrary orthogonal matrix is $+1$ or $-1$.
We consider $SO(n)$ instead of the orthogonal matrices for two reasons. First, our task is independent of the flipping ($(-1)$-multiplication) of each basis component. The positive perturbation along $v$ is identical to the negative perturbation along $-v$. The flipping of a basis component in $M_{\gS}^{\prime}$ leads to the flipping of the corresponding column in $O$. This results in the $(-1)$-multiplication at the determinant of $O$. Therefore, without loss of generality, we may assume that $O$ is a special orthogonal matrix. Second, the Orthogonal Group is disconnected while the Special Orthogonal Group is connected. Hence, the Orthogonal Group is inadequate for finding the Fréchet mean, which is optimized by the gradient descent algorithm.

\paragraph{Basis Refinement}
We propose the optimization scheme for finding the global semantic basis from the global semantic subspace $\gS_{s}$. Here, we denote the column-wise concatenation of local semantic basis at each $\wvar_{i}$ as $M_{\wvar_{i}} \in \sR^{d_{\Tilde{\wset}} \times d_{\wset}}$, i.e., each column is the top-$d_{\wset}$ Local Basis at $\wvar \in \wset$. Note that the column space of $M_{\wvar_{i}}$ is the local semantic subspace $T_{\wvar_{i}} \wset^{d_{\wset}}_{\wvar_{i}}$. Likewise, $M_{\gS}$ refers to an initial orthonormal basis of $\gS_{s}$. As an overview, the proposed scheme is as follows:
\begin{enumerate}[topsep=-1pt]
    \item[(i)] Project each local semantic basis to the $d_{\wset}$-dimensional global semantic subspace $\gS_{s}$.
    \item[(ii)] Project these projected local semantic basis to $SO(d_{\wset})$.
    \item[(iii)] Find the Fréchet mean $O$ in $SO(d_{\wset})$ and embed $O$ back to the ambient space $\sR^{d_{\Tilde{\wset}}}$.
\end{enumerate}
Before the above optimization, we preprocess each local semantic basis at $\wvar_{i}$, i.e., the columns of  $M_{\wvar_{i}}$, to be positively aligned to each column of $M_{\gS}$, i.e., $\langle M_{\wvar_{i}}[:, i], \, M_{\gS}[:, i] \rangle > 0 \textrm{ for all }i$.
(i) As a first step, we project each local semantic basis at $\wvar_{i}$ onto the global semantic subspace $\gS_{s}$, i.e., $M_{\gS}^{\top}M_{\wvar_{i}}$. (ii) Then, the matrix of projected local semantic basis $M_{\gS}^{\top}M_{\wvar_{i}} \in \sR^{d_{\wset} \times d_{\wset}}$ is projected to $SO(d_{\wset})$\footnote{
This projection is underdetermined for the matrix with determinant $0$ because of the subspace generated by singular vectors with $\sigma=0$. Because these matrixes are measure-zero set, this did not happen in practice.
}.
The projection on the orthogonal group $P_{o}$ and the proposed projection on the Special Orthogonal Group $P_{so}$ can be obtained via SVD. (See the appendix for proof.):
Let $X = U \Sigma V^{\top}$ be a SVD of $X \in \sR^{d_{\wset} \times d_{\wset}}$,
\begin{equation}
    P_{so}(X) = U \, \operatorname{diag}\left(1, 1, \ldots, 1, \operatorname{det}\left(P_{o}(X) \right) \right) \, V^{\top} \quad \textrm{ where } \,\, P_{o}(X) = U V^{\top}.
\end{equation}
(iii) Finally, we find the optimal orthogonal matrix $O$, that transforms the initial basis $M_{\gS}$ to the \textbf{global semantic basis} $\gB_{s}$, via Fréchet mean of projected local semantic basis $\{ P_{so} \left( M_{\gS}^{\top}M_{\wvar_{i}} \right) \}_{i} \subset SO(d_{\wset})$.
\begin{equation} \label{eq:fm_basis}
    \gB_{s} = M_{\gS} O \quad \textrm{ where } \,\,
    O = \argmin_{\mu \in SO(d_{\wset})} \sum_{1\leq i\leq n} d\left( \mu, \, P_{so} \left( M_{\gS}^{\top}M_{\wvar_{i}} \right)  \right)^{2}.
\end{equation}
Here, the Riemannian metric on $SO(d_{\wset})$ is defined as $d(X_{1}, X_{2}) = \|\operatorname{Skew}\left( \log( X_{1}^{\top} X_{2} )\right)\|_{F}$ for $X_{1}, X_{2} \in SO(d_{\wset})$ where $\log$ denotes a matrix logarithm and $\operatorname{Skew}$ refers to the skew-symmetric component of a matrix, i.e., $\operatorname{Skew}(X_{1}) = \frac{1}{2} \left( X_{1} - X_{1}^{\top} \right)$. As in the Grassmannian manifold, we utilized the Pymanopt \citep{Pymanopt} implementation for finding the Fréchet mean on $ SO(d_{\wset})$. Then, the global semantic basis $\gB_{s}$ is obtained as the embedding of Fréchet mean $O$ to $\sR^{d_{\Tilde{\wset}}}$, i.e., $\gB_{s} = M_{\gS} O$. We call this global semantic basis $\gB_{s}$ \textbf{Fréchet Basis}.

\begin{figure}[t]
    \centering
    \begin{subfigure}[b]{0.4\linewidth}
    \centering
    \includegraphics[width=\linewidth]{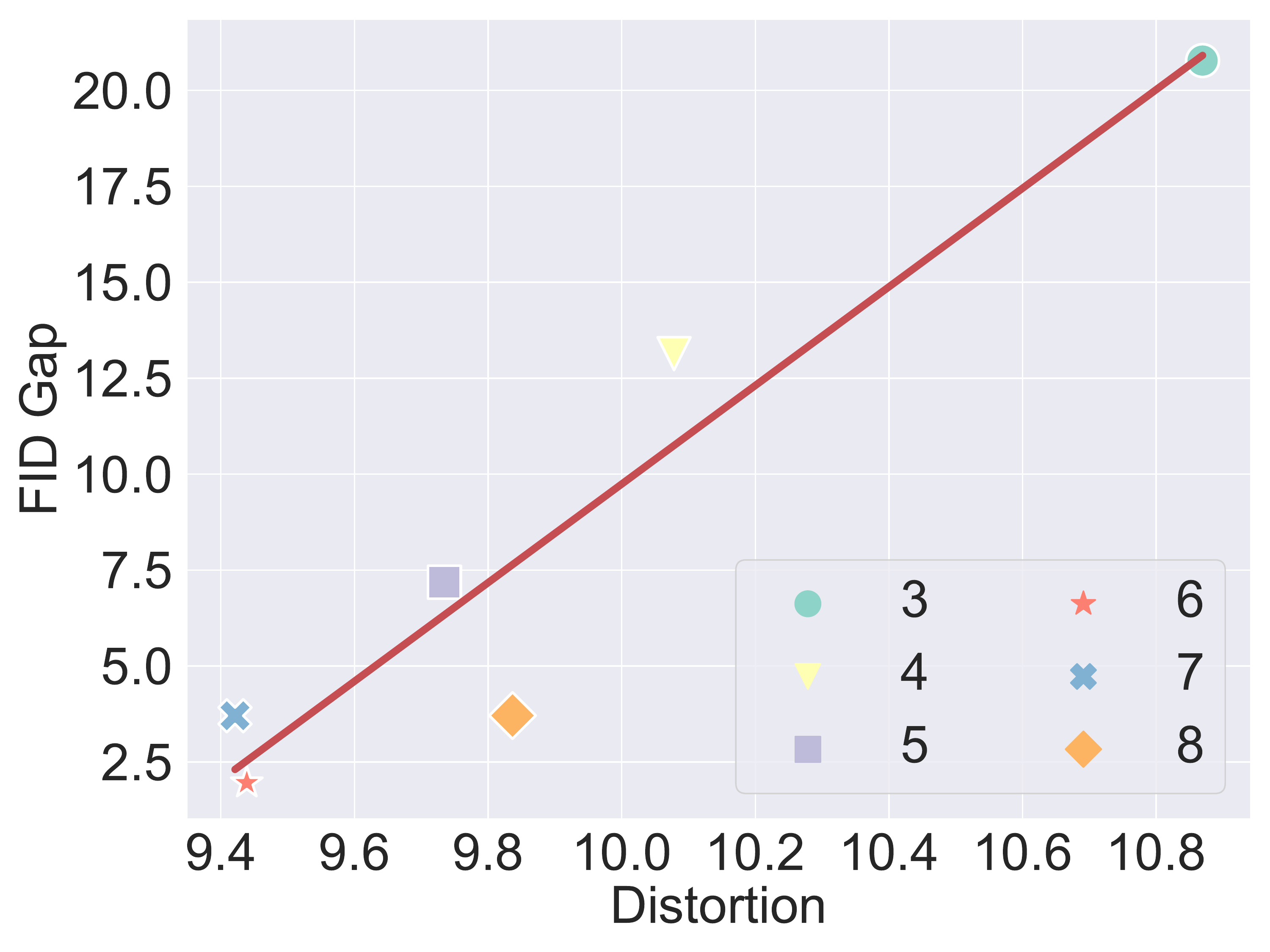}
    \caption{StyleGAN2-cat - $\rho=0.95$}
    \end{subfigure}
    \quad
    \begin{subfigure}[b]{0.4\linewidth}
    \centering
    \includegraphics[width=\linewidth]{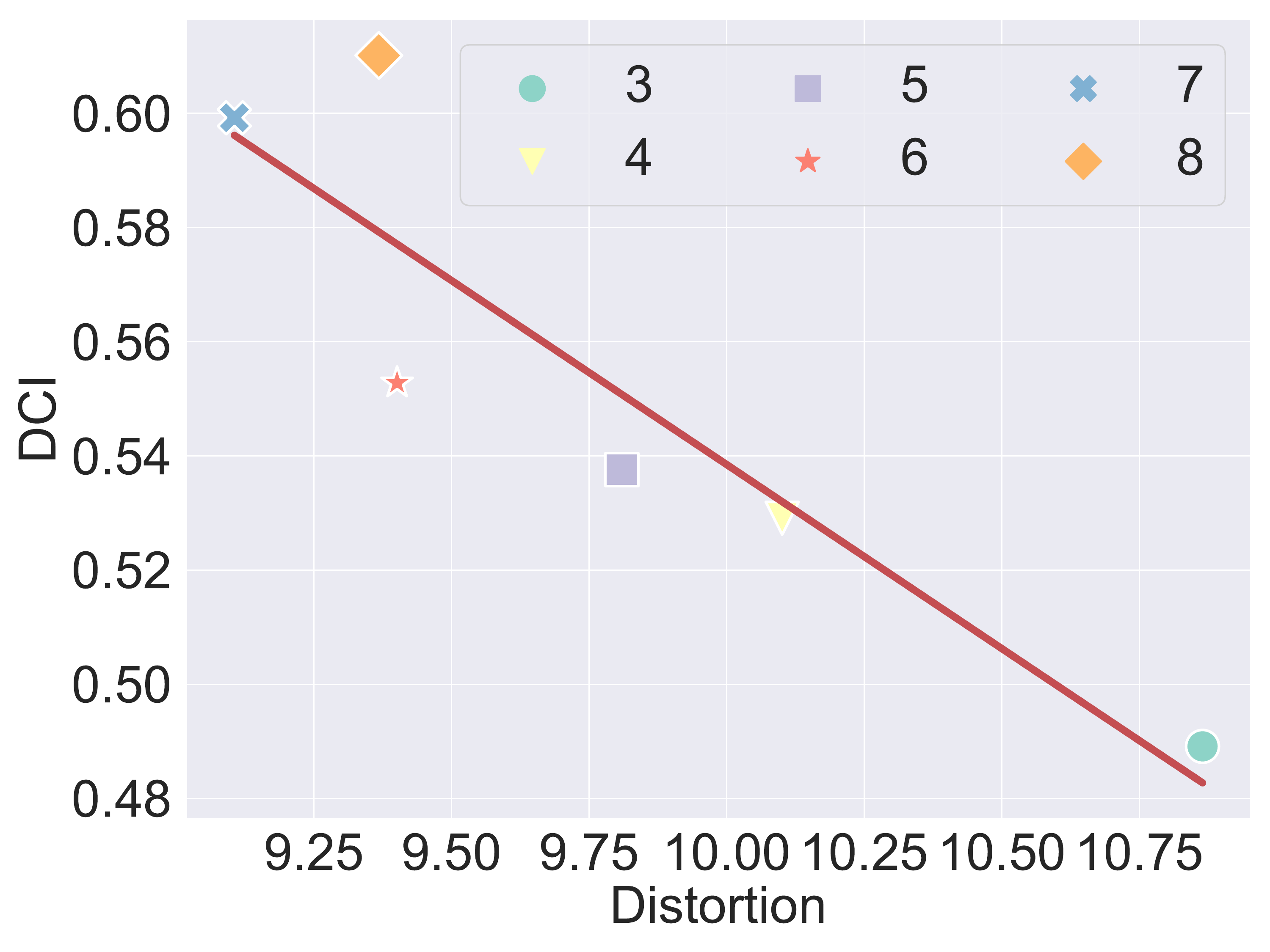}
    \caption{StyleGAN2-e - $\rho=-0.91$}
    \end{subfigure}
    \caption{
    \textbf{Correlations of $\normltwo$-Distortion ($\downarrow$) to FID gap ($\downarrow$) and DCI ($\uparrow$)} when $\theta_{pre}=0.005$. $\normlone$-Distortion shows the correlations of 0.98 to FID-Gap in StyleGAN2-cat and of -0.91 to DCI in StyleGAN2-e. (See the appendix for full correlation results in six models.)
    }
    \label{fig:L2Dist_cor}
    \vspace{-10pt}
\end{figure}

\subsection{Fréchet Basis as $\normltwo$-Distortion minimizer} \label{sec:L2_distortion}
The Fréchet basis can be interpreted as the minimizer of the unsupervised disentanglement metric, Distortion \citep{Distortion}. Distortion metric is based on the inconsistency of intrinsic tangent spaces. Specifically, Distortion metric $\mathcal{D}_{\wset}$ is the ratio between the inconsistency at two random $\wvar \in \wset$ and at two close $\wvar \in \wset$, i.e., $\mathcal{D}_{\wset}=I_{rand} / I_{local}$ (Eq \ref{eq:distortion_rand} and \ref{eq:distortion_local}). Here, the intrinsic tangent space represents the local semantic variations. From this point of view, the Distortion-based global semantic subspace $\gS_{\mathcal{D}}$ would be a representative of these tangent spaces that minimize the inconsistency to each tangent space.
\begin{equation}
    \gS_{\mathcal{D}} = \argmin_{\mu \,\leq \,\sR^{d_{\Tilde{\wset}}}} I_{global}(\mu) 
    \quad \textrm{ with } \,\,
    I_{global}(\mu) = \mathbb{E}_{\zvar \sim p(\zvar), \, \wvar = f(\zvar)} 
    \left[ d^{k}_{\mathrm{geo}}\left( \, \mu^{k}, \, T_{\wvar} \wset^{k}_{\wvar} \right)  \, \right],
\end{equation}
where $\mu$ is a subspace of $\sR^{d_{\Tilde{\wset}}}$, $k$ refers to the local dimension at $\wvar$ and $\mu^{k}$ denotes the $k$-dimensional refinement of $\mu$.
The Fréchet basis assumes that the entire latent manifold $\wset$ is approximated with $d_{\wset}$-dimensional local estimate at all $\wvar \in \wset$. Under this assumption, $I_{global}(\mu)$ becomes
\begin{equation}
    I_{global}(\mu) = \left(1/\sqrt{d_{\wset}}\right) \cdot \mathbb{E}_{\zvar \sim p(\zvar), \, \wvar = f(\zvar)} 
    \left[ d_{\mathrm{geo}}\left( \, \mu, \, T_{\wvar} \wset^{d_{\wset}}_{\wvar} \right)  \, \right] \,\, \textrm{ for } \, \mu \in Gr(d_{\wset}, \sR^{d_{\Tilde{\wset}}}). 
\end{equation}
The comparison with Eq \ref{eq:fm_grass} shows that the global semantic subspace by Fréchet mean $\gS_{s}$ can be interpreted as $\normltwo$-Distortion minimizer, i.e., $d_{geo}^2$ instead of $d_{geo}$. Although the original $\normlone$-Distortion was proven to provide high correlations with the global-basis-compatibility and the supervised disentanglement score, $\normltwo$-Distortion was not tested \citep{Distortion}. Therefore, we evaluated whether $\normltwo$-Distortion is also a meaningful metric to verify the validity of minimizing it. 
Following the experiments in \citet{Distortion}, we assessed the global-basis-compatibility by the FID \citep{heusel2017gans} Gap between Local Basis and GANSpace \citep{harkonen2020ganspace} under the same perturbation intensity. Also, DCI score \citep{DCIMetric} is adopted as the supervised disentanglement score. We utilized 40 binary attribute classifiers pre-trained on CelebA \citep{CelebA} to annotate the 10k generated images. Figure \ref{fig:L2Dist_cor} demonstrates that $\normltwo$-Distortion achieves high correlations comparable to the original Distortion score in the global-based compatibility and DCI. These results prove that our framework of minimizing $\normltwo$-Distortion by Fréchet mean is also valid.

\begin{figure}[t]
    \centering
    \begin{subfigure}[b]{0.4\linewidth}
    \centering
    \includegraphics[width=1\linewidth]{figure/semantic/StyleGAN2_ffhq_Bald_Frechet_1dim_gsidx21_ptb5.0_frechet_sublayer8.pdf}
    \caption{\textbf{Bald} - Fréchet Basis}
    \label{fig:trvs_stylegan2_frechet}
    \end{subfigure}
    \quad \quad
    \begin{subfigure}[b]{0.4\linewidth}
    \centering
    \includegraphics[width=1\linewidth]{figure/semantic/StyleGAN2_ffhq_Bald_GANspace_1dim_gsidx21_ptb5.0_frechet_sublayer8.pdf}
    \caption{\textbf{Bald} - GANSpace ($v_{21}$, 2-4)}
    \label{fig:trvs_stylegan2_ganspace}
    \end{subfigure}

    \begin{subfigure}[b]{0.4\linewidth}
    \centering
    \includegraphics[width=1\linewidth]{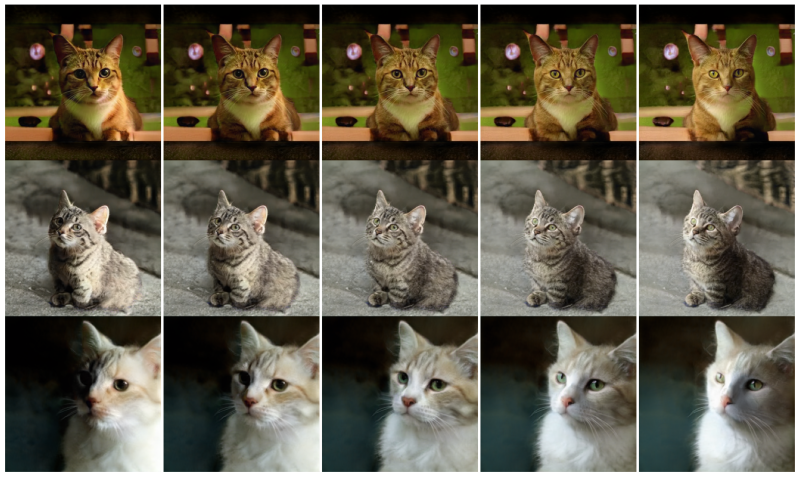}
    \caption{\textbf{Light Position} - Fréchet Basis}
    \label{fig:trvs_stylegan2-cat_frechet}
    \end{subfigure}
    \quad \quad
    \begin{subfigure}[b]{0.4\linewidth}
    \centering
    \includegraphics[width=1\linewidth]{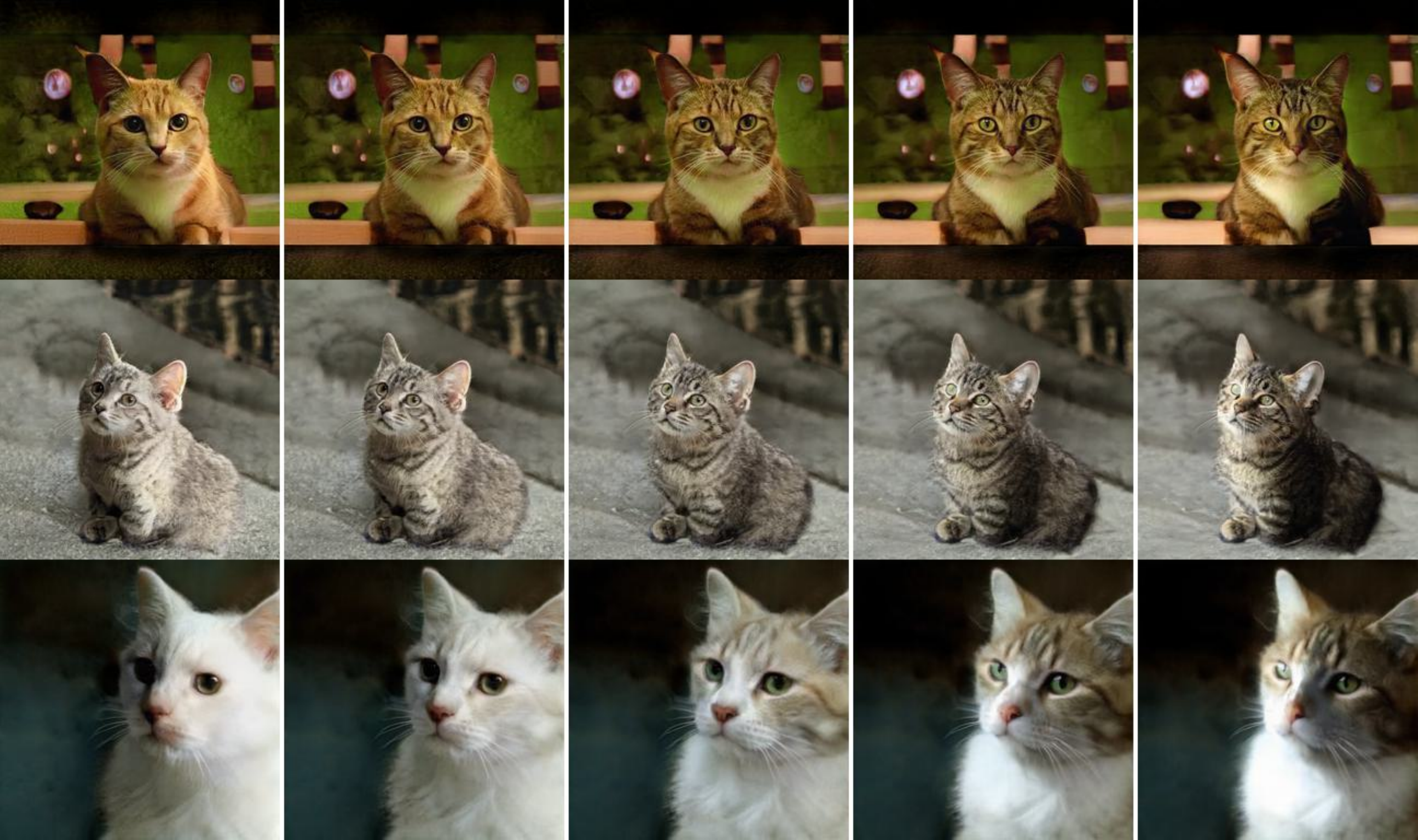}
    \caption{\textbf{Light Position} - GANSpace ($v_{28}$, 8)}
    \label{fig:trvs_stylegan2-cat_ganspace}
    \end{subfigure}
    \caption{
    \textbf{Comparison of Semantic Factorization} between Fréchet basis and GANSpace. ($v_{i}$, $l_1$-$l_2$) denotes the layer-wise edit along the $i$-th GANSpace component at the $l1$-$l2$ layers. The image traversals are performed on StyleGAN2-FFHQ (Fig \ref{fig:trvs_stylegan2_frechet}, \ref{fig:trvs_stylegan2_ganspace}) and StyleGAN2-LSUN cat (Fig \ref{fig:trvs_stylegan2-cat_frechet}, \ref{fig:trvs_stylegan2-cat_ganspace}).
    }
    \label{fig:comparison_semantic_traversal}
    \vspace{-10pt}
\end{figure}

\section{Experiments}
\subsection{Fréchet Basis as Global Semantic Perturbations} \label{sec:frechet_eval}
We evaluate the Fréchet basis as the global semantic perturbations on the intermediate layers of the mapping network in various StyleGAN models \citep{karras2019style, karras2020analyzing}. 
For each StyleGAN model, we used the layers from $3$rd to $8$th because the local dimension estimate is rather unstable for the $1$st and $2$nd layers depending on the preprocessing hyperparameter $\theta_{pre}$ \citep{Distortion}. We chose these intermediate layers for evaluation because they are diverse and properly disentangled latent spaces. In this manner, the Fréchet basis can be tested on six latent spaces for each pre-trained StyleGAN model. In this section, all Fréchet basis are discovered using 1,000 i.i.d. samples of Local Basis with $\theta_{pre}=0.01$. The max iteration is set to 200 when optimizing Fréchet mean with Pymanopt.
The evaluation is performed on two properties: Semantic Factorization and Robustness. Fréchet basis is compared with GANSpace \citep{harkonen2020ganspace} and SeFa \citep{shen2020closed} because these two methods are also unsupervised global basis (Sec \ref{sec:related_works}). Note that GANSpace and Fréchet basis can be applied to arbitrary latent space, but SeFa is only applicable to $\wset$-space \citep{karras2019style}, i.e., the last layer of the mapping network in StyleGANs.

\paragraph{Semantic Factorization}
In Fig \ref{fig:comparison_semantic_traversal} and \ref{fig:comparison_semantic_dci}, we evaluated the semantic factorization of Fréchet basis. Figure \ref{fig:comparison_semantic_traversal} shows how the image changes as we perturb the latent variable along each global basis. For a fair comparison, we took the annotated basis in GANSpace \citep{harkonen2020ganspace} and compared those with Fréchet basis on $\wset$-space of three StyleGAN models. Because GANSpace performs layer-wise edits, we matched the set of layers, where the perturbed latent variable is fed, in the synthesis network as annotated. The corresponding  Fréchet basis component is selected by the cosine-similarity. 
Each subfigure shows the three images traversed with the same global basis, perturbation intensity, and the set of perturbed layers. The original image is placed at the center. Hence, these subfigures also show the semantic consistency of the global basis. 
In StyleGAN2 trained on FFHQ, GANSpace shows image failure on the left side and semantic inconsistency on the third row (not representing hairy on the left) (Fig \ref{fig:trvs_stylegan2_ganspace}). In StyleGAN2 trained on LSUN-cat \citep{yu15lsun}, GANSpace presents entangled semantic manipulation (Fig \ref{fig:trvs_stylegan2-cat_ganspace}). The latent traversal along GANSpace changes the light position as annotated, but also darkens the striped pattern of cats. On the other hand, Fréchet basis achieves better semantic factorization without showing those problems (Fig \ref{fig:trvs_stylegan2_frechet} and \ref{fig:trvs_stylegan2-cat_frechet}). (See the appendix \ref{sec:app_samples} for additional examples of other attributes and datasets. Also, since Fréchet basis is an average of Local Basis, we compared these two methods and GANSpace in the appendix \ref{sec:app_comparison_to_LB}.)

For the quantitative comparison of semantic factorization, we compared DCI as in Sec \ref{sec:L2_distortion}. DCI is a supervised disentanglement metric that assesses the \textit{axis-wise alignment} of semantics. Hence, we measured DCIs of the latent space representations with two global basis, GANSpace and Fréchet basis. Specifically, we converted a latent variable $\wvar \in \sR^{d_{\Tilde{\wset}}}$ into the $d_{\wset}$-dimensional representation with each global basis: For a $d_{\wset}$-dimensional global basis $\{ \vvar_{i}^{global} \}_{1 \leq i \leq d_{\wset}},$ 
\begin{equation} \label{eq:basis_represent}
    \textrm{Representation of $\wvar$ with $\{ \vvar_{i}^{global} \}_{1 \leq i \leq d_{\wset}}$} = \left( \wvar\cdot\vvar_{1}^{global}, \, \wvar\cdot\vvar_{2}^{global}, \, \ldots, \,
    \wvar\cdot\vvar_{d_{\wset}}^{global}
    \right).
\end{equation}
Figure \ref{fig:comparison_semantic_dci} presents the DCI results. StyleGAN2-e denotes the config-e model of StyleGAN2 \citep{karras2020analyzing}, and all three models are trained on FFHQ \citep{karras2019style}. The intrinsic dimension of each latent space is estimated with $\theta_{pre}=0.01$ \citep{Distortion}. 
In all latent spaces except for the $5$-th layer in StyleGAN2-e, the latent space achieves a higher DCI score when represented with Fréchet basis. This quantitative result shows that Fréchet basis provides a better semantic factorization along each basis component at the \textit{same latent space}.

\begin{figure}[t]
    \centering
    \begin{subfigure}[b]{0.32\linewidth}
    \centering
    \includegraphics[width=1\linewidth]{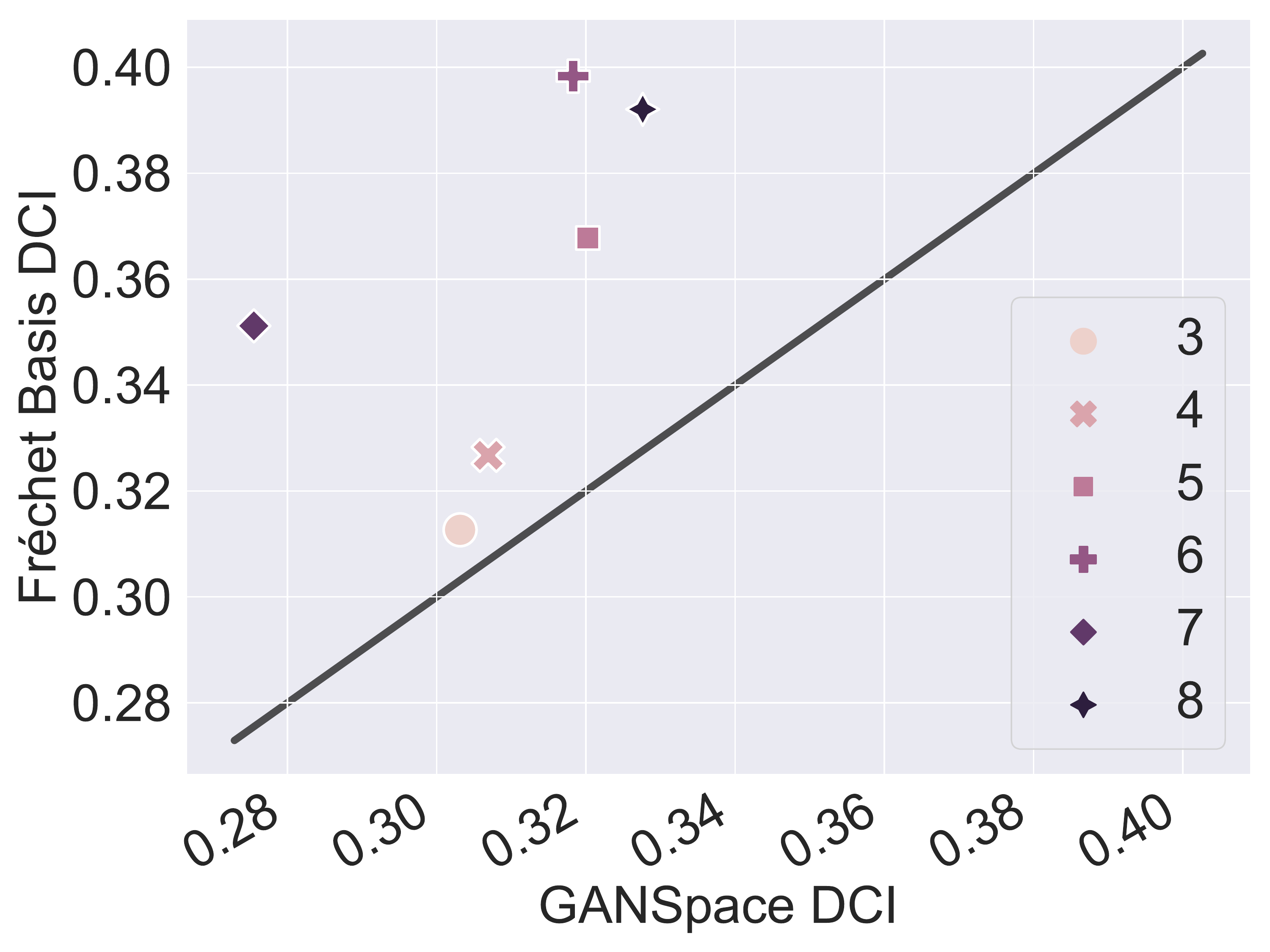}
    \caption{StyleGAN1-FFHQ}
    \end{subfigure}
    \begin{subfigure}[b]{0.32\linewidth}
    \centering
    \includegraphics[width=1\linewidth]{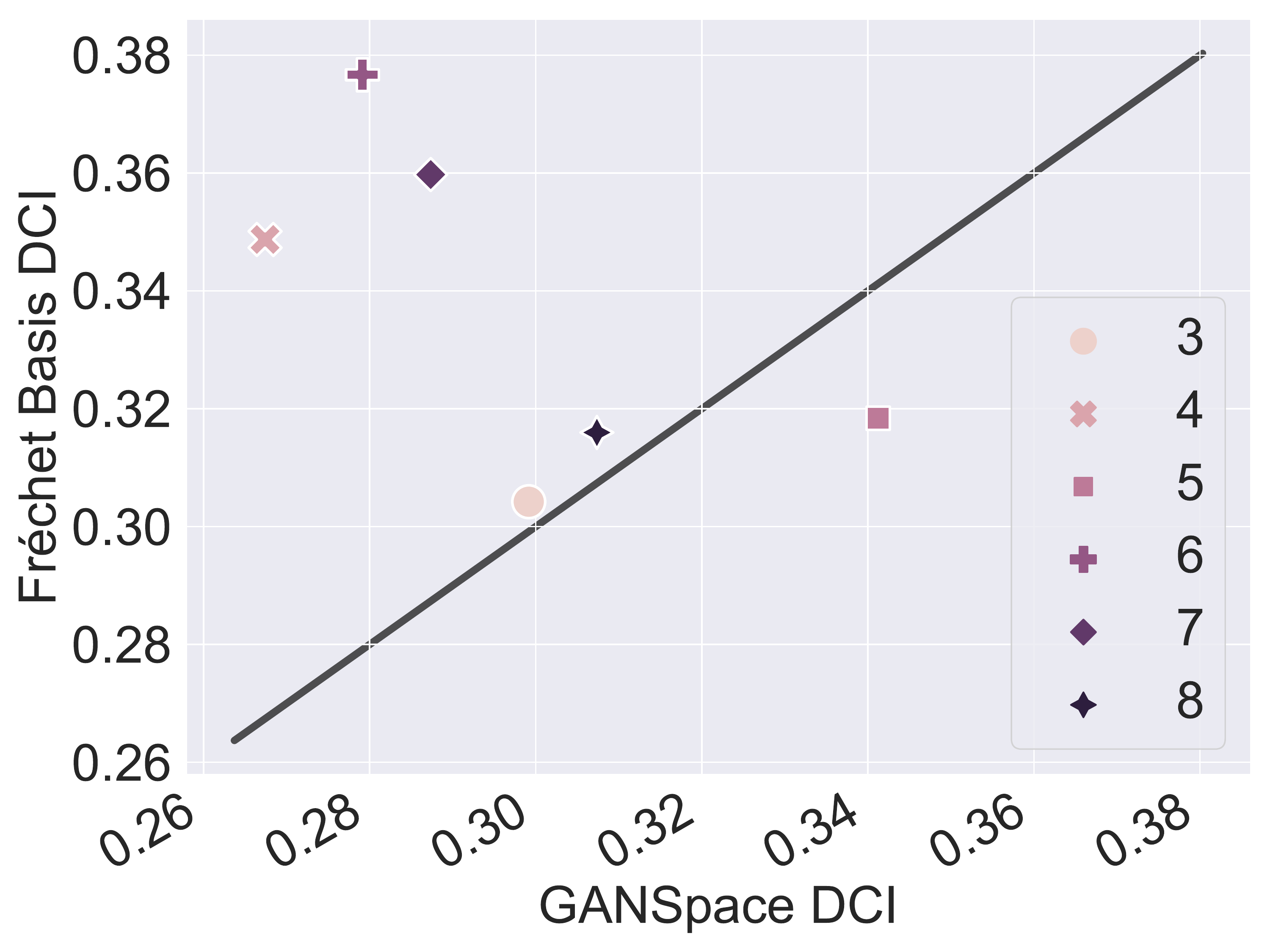}
    \caption{StyleGAN2-e-FFHQ}
    \end{subfigure}
    \begin{subfigure}[b]{0.32\linewidth}
    \centering
    \includegraphics[width=1\linewidth]{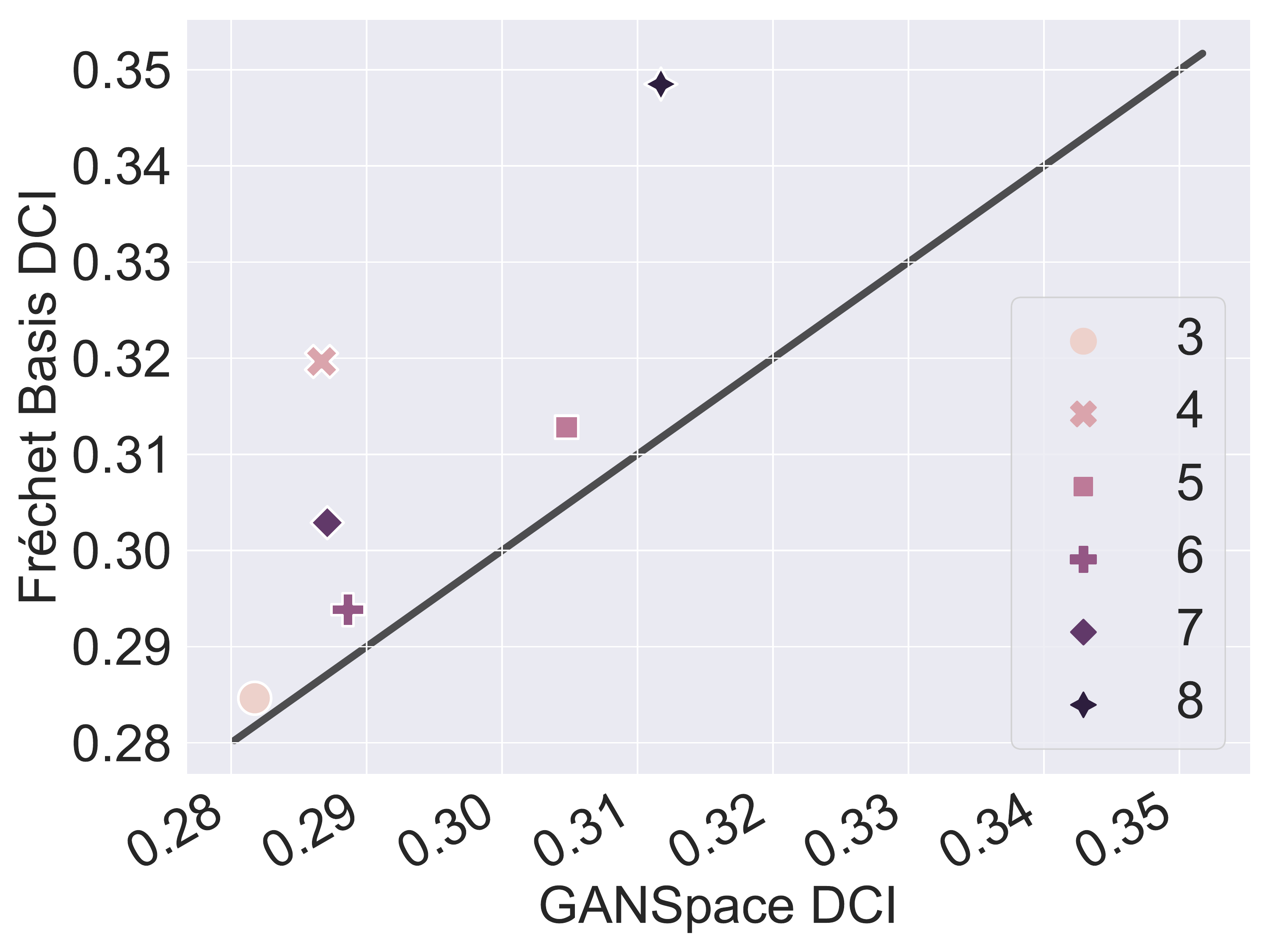}
    \caption{StyleGAN2-FFHQ}
    \end{subfigure}
    \caption{
    \textbf{Quantitative Comparison of Semantic Factorization.} DCI score ($\uparrow$) is a supervised disentanglement metric that evaluates the axis-wise alignment of semantics. Fréchet basis outperforms GANSpace when a point is located \textit{above} the black line. (The black line illustrates $y=x$.)
    }
    \label{fig:comparison_semantic_dci}
\end{figure}

\begin{figure}[t]
    \centering
    \begin{subfigure}[b]{0.32\linewidth}
    \centering
    \includegraphics[width=1\linewidth]{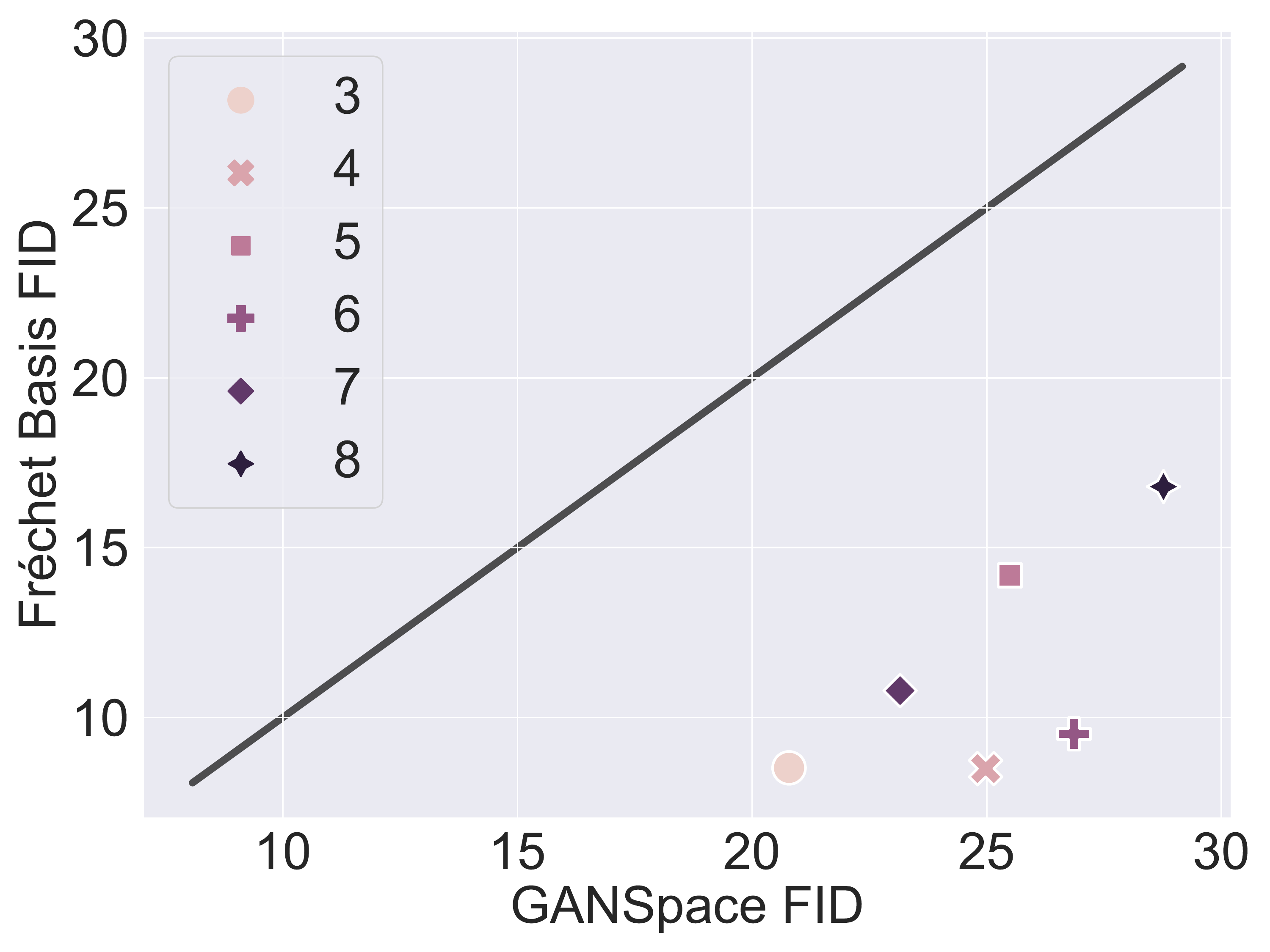}
    \caption{StyleGAN1-FFHQ}
    \end{subfigure}
    \begin{subfigure}[b]{0.32\linewidth}
    \centering
    \includegraphics[width=1\linewidth]{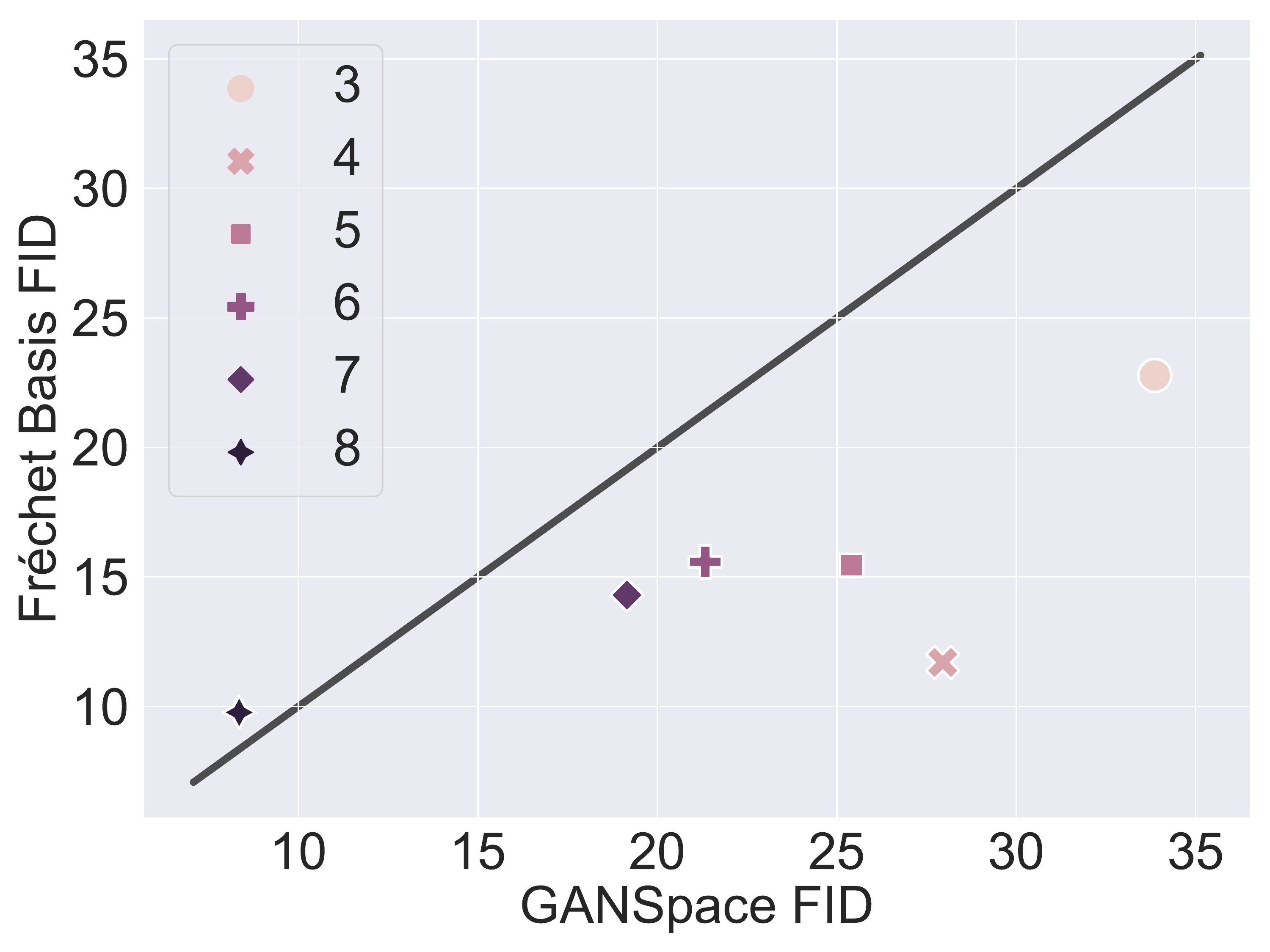}
    \caption{StyleGAN2-e-FFHQ}
    \end{subfigure}
    \begin{subfigure}[b]{0.32\linewidth}
    \centering
    \includegraphics[width=1\linewidth]{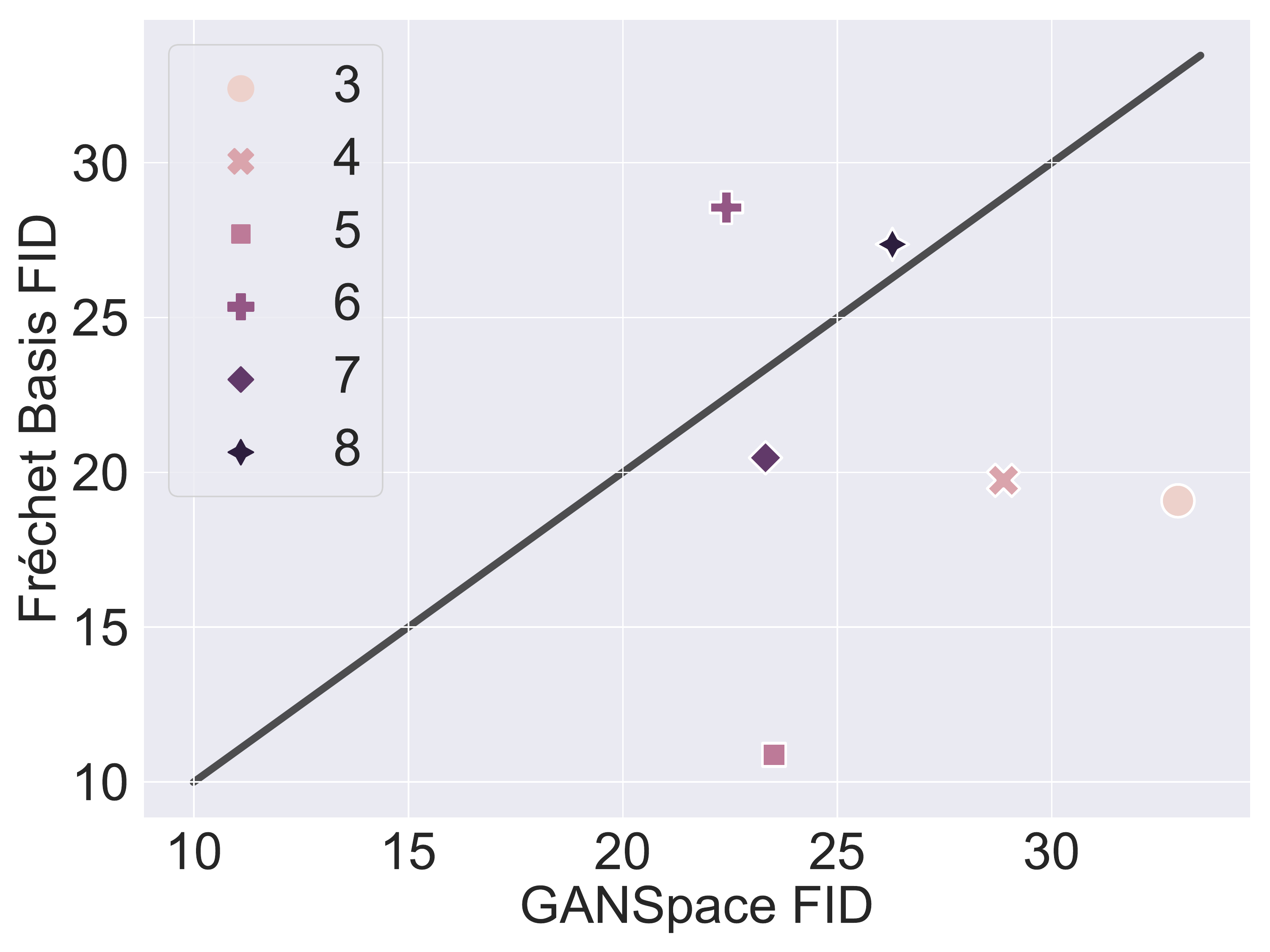}
    \caption{StyleGAN2-FFHQ}
    \end{subfigure}
    \caption{
    \textbf{Quantitative Comparison of Robustness.} The image fidelity under the latent traversal is evaluated by FID ($\downarrow$). Fréchet basis performs better when a point is placed \textit{below} the black line.}
    \label{fig:comparison_robust_fid}
    \vspace{-10pt}
\end{figure}

\paragraph{Robustness}
We tested the robustness of Fréchet basis by comparing the image fidelity under the latent perturbation. For each global basis, we evaluated FID \citep{heusel2017gans} of 50k i.i.d. latent perturbed images. The perturbation direction is selected to be the 1st component for the GANSpace. In Fréchet basis, we chose the component with the highest cosine-similarity to the 1st component of GANSpace. The perturbation intensity is 2 in StyleGAN1, and 5 in StyleGAN2-e and StyleGAN2. The FID scores of the latent spaces in three models are provided in Fig \ref{fig:comparison_robust_fid}. (See the appendix for FID scores under various perturbations intensity.) We think this higher robustness is because the global semantic subspace $\gS_{s}$ is the Fréchet mean of the intrinsic tangent spaces of learned latent manifold. The strong robustness of traversing along the tangent space at each latent variable was observed in \citet{LocalBasis}. Therefore, traversing along Fréchet basis can be interpreted as traversing along the mean of these locally robust perturbation directions, because Fréchet basis is a basis of $\gS_{s}$.

\begin{wraptable}{r}{0.4 \textwidth}
\vspace{-34pt}
\caption{\textbf{Basis Refinement by Fréchet mean}.
} 
\vspace{-10pt}
\begin{center}
\scalebox{0.92}{
\begin{tabular}{lll}
\toprule
\multicolumn{1}{c}{\bf Global Basis}  &\multicolumn{1}{c}{\bf FID ($\downarrow$)} &\multicolumn{1}{c}{\bf DCI ($\uparrow$)}
\\ \midrule
Fréchet basis       & \textbf{7.49} & \textbf{0.350} \\ \midrule
GANSpace            & 8.90  & 0.312 \\
GANSpace-refine     & 8.63  & 0.336 \\ \midrule
SeFa                & 11.79  & 0.307 \\
SeFa-refine         & 7.89  & 0.337 \\ \bottomrule
\end{tabular}
}
\end{center}
\vspace{-10pt}
\label{tab:basis_refine}
\end{wraptable}

\subsection{Basis refinement by Fréchet mean} \label{sec:basis_refine}
Fréchet basis consists of two parts: finding the \textit{global semantic subspace} $\gS_{s}$ and selecting the \textit{global semantic basis} $\gB_{s}$ from the discovered semantic subspace. In particular, the second step can be utilized to refine a given global basis from previous methods. We ran this basis refinement on the subspace generated by the existing global methods, GANSpace and SeFa. This experiment reveals the contribution of the second step. 

Let $\gS_{global}$ be the $d_{\wset}$-dimensional semantic subspace generated by the global basis $\{ \vvar_{i}^{global} \}_{1 \leq i \leq d_{\wset}}$. Then, we optimized the constrained-optimal global basis $\gB_{global}$ in $\gS_{global}$ via Sec \ref{sec:global_semantic_basis}. 
Table \ref{tab:basis_refine} shows FID and DCI scores of Fréchet basis, GANSpace, GANSpace refinement, SeFA, and SeFa refinement in $\wset$-space of StyleGAN2 trained on FFHQ. These two scores are evaluated in the same manner as in Sec \ref{sec:frechet_eval} with the perturbation intensity 2 for FID. Most importantly, Fréchet basis achieves the best FID and DCI. Also, the basis refinement monotonically improves the two scores of the two previous methods, GANSpace and SeFa. The comparison with the previous global method and its refinement proves the contribution of basis optimization. Also, the superior performance of Fréchet basis over the two refinements shows the contribution of subspace optimization.

\begin{figure}[t]
    \centering
    \begin{subfigure}[b]{0.35\linewidth}
    \centering
    \includegraphics[width=\linewidth]{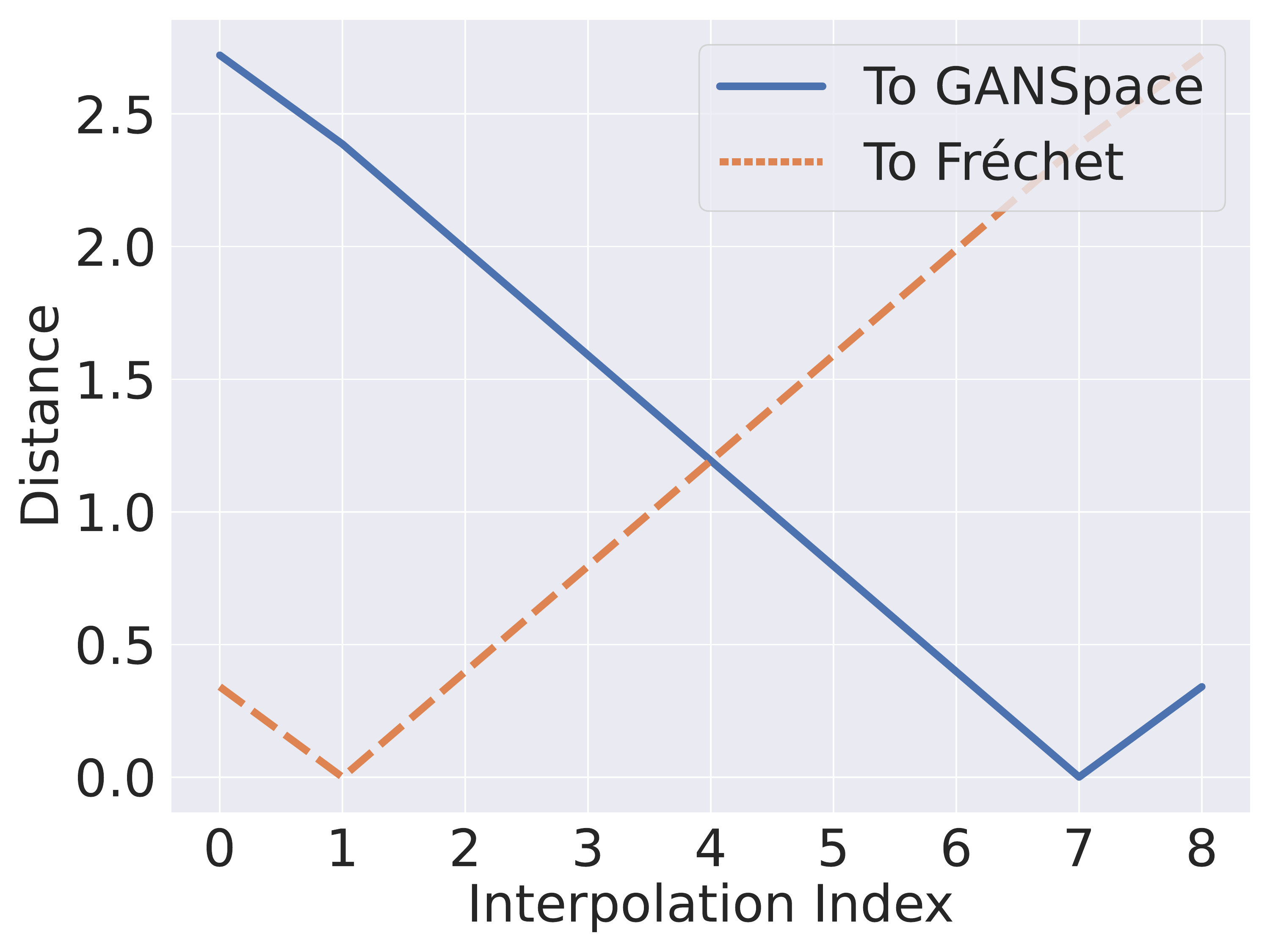}
    \caption{Grassmannian Distance}
    \label{fig:interp_dist}
    \end{subfigure}
    \quad \quad
    \begin{subfigure}[b]{0.35\linewidth}
    \centering
    \includegraphics[width=\linewidth]{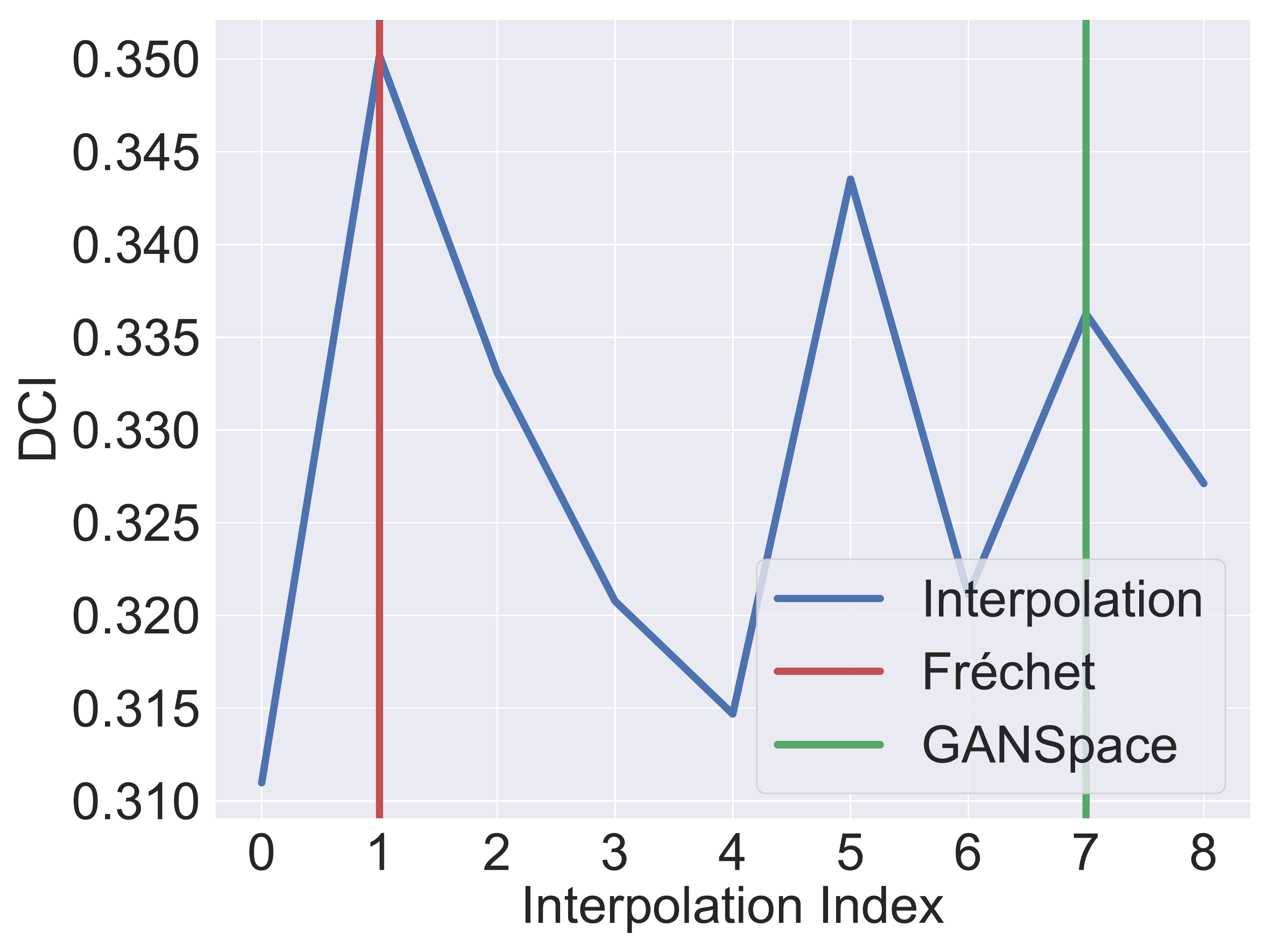}
    \caption{DCI score of Interpolation basis $\gB_{i}$}
    \label{fig:interp_dci}
    \end{subfigure}
    \caption{
    \textbf{Geodesic Interpolation} from Fréchet basis ($i=1$) to GANSpace ($i=7$).
    }
    \label{fig:geodesic_interp}
    \vspace{-10pt}
\end{figure}

\subsection{Geodesic Interpolation on Grassmannian Manifold}
We further investigate the optimality of global semantic subspace $\gS_{s}$ by analyzing the interpolation from $\gS_{s}$ to GANSpace subspace $\gS_{GS}$. Similar to Sec \ref{sec:basis_refine}, this experiment examines the contribution of the first step in Fréchet basis. In the Grassmannian manifold $Gr(k, \sR^{n})$, there exists at least one length-minimizing geodesic between any two subspaces in $Gr(k, \sR^{n})$. Moreover, there is an explicit parametrization of this length-minimizing geodesic (Eq \ref{eq:GrasGeodesic}) \citep{GrasGeodesic}. 
For $\mathcal{X}, \mathcal{Y} \in Gr(k, \sR^{n})$, let $X, Y \in \sR^{n \times k}$ be the column-wise concatenation of their orthonormal basis. Then, the length-minimizing geodesic $\Gamma(\mathcal{X}, \mathcal{Y}, t)$ from $\mathcal{X}$ to $\mathcal{Y}$ is defined as:
\begin{equation} \label{eq:GrasGeodesic}
    \Gamma\left(\mathcal{X}, \mathcal{Y}, t \right)=\operatorname{span}\{\, \left(X V \cos (\Theta t) +U \sin (\Theta t) \right) V^{\top} \,\} \quad \mathrm{ for } \,\,  t\in [0,1],
\end{equation}
where $X^{\top}Y$ is non-singular, $(Y - X X^{\top} Y)(X^{\top}Y)^{-1} = U \Sigma V^{\top}$ is the thin SVD, and $\Theta = \arctan \Sigma$. Note that the last $V^{\top}$ multiplication is unnecessary as the subspace. However, we added it to match the basis to $X$ at $t=0$. We used this geodesic to perform the interpolation and extrapolation from $\gS_{s}$ to $\gS_{GS}$:
\begin{equation}
    \gS_{i} = \Gamma \left(\mathcal{X}, \mathcal{Y}, \, (i-1) / n \right) \quad \textrm{ for } \,\, i = 0, 1, \, \dots, \, n+2.
\end{equation}
with $n=6$. Note that $i=0, \, n+2$ represent the extrapolations of $t= (-1/n), \, 1+(1/n)$. Because the above interpolation is performed on a subspace-scale, we conducted the basis refinement (Sec \ref{sec:basis_refine}) for each interpolation subspace $\gS_{i}$ to find the interpolation basis $\gB_{i}$. Figure \ref{fig:geodesic_interp} presents the geodesic metric $d_{geo}$ in the Grassmannian manifold for each interpolation subspace to GANSpace and Fréchet basis (Fig \ref{fig:interp_dist}), and the DCI score evaluated at each interpolation basis $\gB_{i}$ (Fig \ref{fig:interp_dci}) on $\wset$-space of StyleGAN2-FFHQ. 
First, Fig \ref{fig:interp_dist} shows that $\gS_{i}$ performs interpolation from  Fréchet basis at $i=0$ to GANSpace at $i=7$. This interpolation is linear in the Grassmannian metric $d_{geo}$. Second, the DCI score at each interpolation basis $\gB_{i}$ are presented in Fig \ref{fig:interp_dci}. Fréchet basis at $i=0$ achieves the best DCI score among the interpolation basis. Note that the DCI score of original GANSpace without refinement is 0.312 (Tab \ref{tab:basis_refine}), which is lower than the score after refinement at $i=7$ as in Sec \ref{sec:basis_refine}.

\section{Conclusion}
In this paper, we proposed the unsupervised global semantic basis on the intermediate latent space in a GAN, called Fréchet basis. Fréchet basis is discovered by utilizing the Fréchet mean on the Grassmannian manifold and the Special Orthogonal Group. Our experiments demonstrate that Fréchet basis achieves better semantic factorization and robustness than the previous unsupervised global methods. In addition, we suggest the basis refinement scheme using the Fréchet mean. Given the same semantic subspace generated by the previous global methods, the refined basis attains better semantic factorization and robustness.

\section*{Acknowledgments}
This work was supported by a KIAS Individual Grant [AP087501] via the Center for AI and Natural Sciences at Korea Institute for Advanced Study, the NRF grant [2012R1A2C3010887], and the MSIT/IITP ([1711117093], [2021-0-00077], [No. 2021-0-01343, Artificial Intelligence Graduate School Program(SNU)]).

\bibliography{iclr2023_conference}
\bibliographystyle{iclr2023_conference}

\newpage
\appendix
\section{Projection onto Special Orthogonal Group}
In this section, we provide proof for the projection of an invertible matrix $A \in GL(n)$ onto the special orthogonal group $SO(n)$. Formally, the set of invertible matrices $GL(n)$, orthogonal group $O(n)$, and special orthogonal group $SO(n)$ are defined as follows:
\begin{align}
    GL(n)& := \{ A \in \sR^{n\times n} : \,\, \operatorname{det}(A) \neq 0 \}, \\
    O(n)& := \{ A \in \sR^{n\times n} : \,\, A^{\top}A = A A^{\top} = I \}. \\
    SO(n)& := \{ A \in \sR^{n\times n} : \,\, A^{\top}A = A A^{\top} = I, \, \operatorname{det}(A) = 1 \}.
\end{align}
For a non-invertible matrix, the projection onto $SO(n)$ is not uniquely defined because of the subspace generated by singular vectors with $\sigma$ = 0. However, the set of non-invertible matrices has measure-zero in the set of $n \times n $ matrices $\sR^{n \times n}$. Thus, this did not happen in practice during the Fréchet basis optimization.

\begin{thm}
    The following optimization problem, i.e., the projection of $A \in GL(n)$ onto the Special Orthogonal Group $SO(n)$, 
\begin{equation}
    \argmin_{X \in SO(n)} \,\, \| X-A \|_{F} 
    \quad \textrm{ where } A \in GL(n),
\end{equation}
has a solution
\begin{equation}
    P_{so}(A)= X^{*} = U \, \operatorname{diag}\left(1, 1, \ldots, 1, \operatorname{det}\left(U V^{\top} \right) \right) \, V^{\top}
\end{equation}
where $A = U \Sigma V^{\top}$ is the Singular Value Decomposition (SVD) of $A$ with the projection onto orthogonal group $P_{o}(A) = U V^{\top}$. (The projection is unique if we assume $\sigma_{1} > \sigma_{2} > \ldots > \sigma_{n} > 0$ where $\{\sigma_{i}\}_{1\leq i \leq n}$ denote the singular values of A.)
\end{thm}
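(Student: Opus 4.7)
The plan is to convert the Frobenius-norm minimization into a linear trace maximization over $SO(n)$, diagonalize via the SVD of $A$, and finish by a case split on $\det(UV^{\top})$. First, I would expand
\begin{equation*}
\|X-A\|_{F}^{2} = \operatorname{tr}(X^{\top}X) - 2\operatorname{tr}(X^{\top}A) + \|A\|_{F}^{2} = n - 2\operatorname{tr}(X^{\top}A) + \|A\|_{F}^{2},
\end{equation*}
using $X^{\top}X = I$ since $X \in SO(n)$. Hence minimizing $\|X-A\|_{F}$ is equivalent to maximising the linear functional $\operatorname{tr}(X^{\top}A)$ over $SO(n)$.

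Second, I would substitute the SVD $A = U\Sigma V^{\top}$ and make the change of variables $Y := U^{\top}XV$. This is a bijection of $O(n)$ onto itself, and a direct calculation gives
\begin{equation*}
\operatorname{tr}(X^{\top}A) = \operatorname{tr}(V Y^{\top}\Sigma V^{\top}) = \operatorname{tr}(Y^{\top}\Sigma) = \sum_{i=1}^{n}\sigma_{i}\,Y_{ii},
\end{equation*}
while the constraint $\det(X)=1$ translates to $\det(Y) = \det(U)\det(V) = \det(UV^{\top})$. The unit-norm columns of $Y$ force $|Y_{ii}| \leq 1$, so the objective is bounded above by $\sum_{i}\sigma_{i}$. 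When $\det(UV^{\top}) = +1$, the unconstrained maximiser $Y^{*}=I$ is feasible and returns $X^{*} = UV^{\top}$, recovering the stated formula in this branch.

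The main obstacle is the branch $\det(UV^{\top}) = -1$, where $Y = I$ is forbidden. I would attack it via Lagrange multipliers on the constraint $Y^{\top}Y = I$: the stationarity condition forces $Y^{\top}\Sigma$ (equivalently $\Sigma Y$) to be symmetric, so $(\Sigma Y)^{2} = \Sigma Y \cdot Y^{\top}\Sigma = \Sigma^{2}$. Under strict separation $\sigma_{1} > \cdots > \sigma_{n} > 0$ the eigenvalues of $\Sigma^{2}$ are simple with eigenvectors $e_{i}$, and matching coefficients in $\Sigma(Ye_{i}) = \epsilon_{i}\sigma_{i}e_{i}$ (read off from the eigendecomposition of the symmetric matrix $\Sigma Y$) pins every critical point to a signed diagonal $Y = \operatorname{diag}(\epsilon_{1},\ldots,\epsilon_{n})$ with $\epsilon_{i}\in\{\pm 1\}$. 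The determinant identity $\prod_{i}\epsilon_{i} = \det(Y) = -1$ forces an odd number of sign flips, and maximising $\sum_{i}\sigma_{i}\epsilon_{i}$ under this parity constraint is achieved uniquely (by strict separation) by flipping only the smallest singular value, giving $Y^{*} = \operatorname{diag}(1,\ldots,1,-1)$. Back-substituting $X^{*} = UY^{*}V^{\top}$ unifies both branches into the single formula $X^{*} = U\operatorname{diag}(1,\ldots,1,\det(UV^{\top}))V^{\top}$, and the same strict-inequality argument yields uniqueness under the separation hypothesis.
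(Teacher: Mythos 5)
Your proof is correct and follows the same overall architecture as the paper's: expand $\|X-A\|_F^2$ to reduce to maximizing $\operatorname{tr}(X^\top A)$ over $SO(n)$, derive the stationarity condition that forces a symmetric matrix, use strict separation of singular values to pin the optimizer to a signed diagonal, and then use the parity constraint from the determinant to flip only the smallest singular value. The one place you genuinely improve on the paper is the diagonality step: the paper devotes a standalone Lemma~\ref{lem:diagonal} to showing that a special-orthogonal $X'$ with $(X')^\top\Sigma_0$ symmetric must be diagonal, via an argument about uniqueness of SVD-like decompositions, whereas you get the same conclusion in two lines from the identity $(\Sigma Y)^2 = \Sigma Y\, Y^\top \Sigma = \Sigma^2$ and the simplicity of the eigenvalues of $\Sigma^2$. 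Your change of variables $Y = U^\top X V$ (keeping $\Sigma$ positive and tracking $\det Y = \det(UV^\top)$) is also cleaner than the paper's introduction of $\Sigma_0$ with an absorbed sign flip, and it lets both branches fall out of one unified computation. What the paper's version buys is arguably slightly more explicit bookkeeping of exactly which sign-flips are applied to $U$, $V$, and $\Sigma$, which can be reassuring to a reader worried about orientation conventions; substantively, though, your route is the more economical one.
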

\begin{proof}
1. If $P_{o}(A) = U V^{\top} \in SO(n)$, done. $\left(\because SO(n) \subset O(n) \textrm{ and } \operatorname{det}\left(P_{o}(A) ) \right)=1 \right)$.

2. If $P_{o}(A) = U V^{\top} \notin SO(n)$, i.e., $\operatorname{det}(A) < 0$,
\begin{equation}
    \| X - A \|_{F}^{2} = \| I - X^{\top}A \|_{F}^{2} = n - 2 \operatorname{tr}(X^{\top}A) + \operatorname{tr}(A^{\top}A).
\end{equation}
For any skew-symmetric matrix $K$, define $f(t)$ as 
\begin{equation}
    f(t) = -2 \operatorname{tr} (A^{\top}X e^{tK}), \quad \textrm{ for } \,\, t \in \sR.
\end{equation}
Note that $X e^{tK} \in SO(n)$ and $\operatorname{tr}(A^{\top}A)$ is given. Therefore, if $X$ is the minimizer of $\| X - A \|_{F}$, we have $f^{\prime}(0) = -2 \operatorname{tr} (A^{\top}X K) = 0$ for all skew-symmetric $K$. Thus, $A^{\top}X$ is symmetric.

Without loss of generality, we may assume that $A= U_{0} \Sigma_{0} V^{\top}_{0}, X = U_{0} X^{\prime} V^{\top}_{0}$ for some $U_{0}, V_{0} \in SO(n)$ where $\Sigma_{0}$ is the diagonal matrix as $\Sigma$ in SVD but $\Sigma_{0}$ might have the negative elements. This decomposition can be obtained by flipping the singular vectors in $U, V$ of SVD to make $U_{0}, V_{0} \in SO(n)$ and letting $X^{\prime} = U_{0}^{\top} X V_{0}$. Explicitly, from $\operatorname{det}(A) < 0$,
\begin{equation}
\begin{aligned}
    U_{0} = U \, \operatorname{diag}\left(1, \ldots, 1, \operatorname{det}\left(U \right)
    \right), &\qquad
    V_{0} = V \, \operatorname{diag}\left(1, \ldots, 1, \operatorname{det}\left(V \right)
    \right), \\
    \Sigma_{0} = \operatorname{diag}&\left(\sigma_{1}, \ldots, \sigma_{n-1}, -\sigma_{n}
    \right).
\end{aligned}
\end{equation}
Then, since $X^{\top} A = \left(A^{\top}X \right)^{\top}$,
\begin{equation}
    X^{\top} A = V_{0} \left( X^{\prime} \right)^{\top} \Sigma_{0} V^{\top}_{0} \textrm{ is symmetric and has negative determinant.}
\end{equation}
Therefore, $\left( X^{\prime} \right)^{\top} \Sigma_{0}$ is also symmetric and thus diagonalizable. 
\begin{equation}
\begin{aligned}
    \| X - A \|_{F}^{2} &= \| I - X^{\top}A \|_{F}^{2} = 
    \|I - V_{0} \left( X^{\prime} \right)^{\top} \Sigma_{0} V^{\top}_{0} \|_{F}^{2} \\
    &= \|I - \left( X^{\prime} \right)^{\top} \Sigma_{0} \|_{F}^{2} = \sum_{i} \left(1 - \lambda_{i} \right)^{2},
\end{aligned}
\end{equation}
where $\lambda_{i}$ denotes the $i$-th eigenvalue of $\left( X^{\prime} \right)^{\top} \Sigma_{0}$ with $|\lambda_{1}| \geq |\lambda_{2}| \geq \ldots \geq |\lambda_{n}|$. Note that since $X^{\prime} = U_{0}^{\top} X V_{0} \in SO(n)$, the $i$-th singular value $\sigma_{i}$ of $A$ satisfies $\sigma_{i} = |\lambda_{i}|$. 
Moreover, an odd number of signed singular values in $\Sigma_{0}$ is negative because $\operatorname{det}(A) <0$. Also, $\operatorname{det}(\left( X^{\prime} \right)^{\top} \Sigma_{0}) = \operatorname{det}(X^{\top} A) < 0$ implies that an odd number of eigenvalues is negative.
Hence, $\| X - A \|_{F}$ is minimized when $\lambda_{i} = \sigma_{i}$ for $1 \leq i \leq n-1$ and $\lambda_{n} = -\sigma_{n}$. 
If $X^{\prime}$ is the diagonal matrix satisfying this condition, $X = U \, \operatorname{diag}\left(1, 1, \ldots, 1, -1 \right) \, V^{\top}$. Lemma \ref{lem:diagonal} proves that when $\sigma_{1} > \sigma_{2} > \ldots > \sigma_{n}$ is satisfied, the uniqueness of $X^{\prime}$ is guaranteed.
\end{proof} 

\begin{lem} \label{lem:diagonal}
Let $X^{\prime} \in SO(n)$ and $\Sigma_{0} = \operatorname{diag} \left(\sigma_{1}, \ldots, \sigma_{n-1}, -\sigma_{n} \right)$ with $\sigma_{1} > \ldots > \sigma_{n-1} > \sigma_{n} > 0$.
\begin{equation}
    \textrm{ If } \, Y = \left( X^{\prime} \right)^{\top} \Sigma_{0} \, \textrm{ is symmetric, } \,\, X^{\prime} \textrm{ is a diagonal matrix. }
\end{equation}
\end{lem}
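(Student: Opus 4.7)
The plan is to introduce the auxiliary matrix $Y=(X')^{\top}\Sigma_{0}$ and exploit its assumed symmetry together with the orthogonality of $X'$ to force $Y$ to be diagonal; once $Y$ is diagonal, the diagonality of $X'$ follows from invertibility of $\Sigma_{0}$. The first step is therefore to compute $Y^{2}$ using $Y^{\top}=Y$ and $X'(X')^{\top}=I$:
\[
    Y^{2}\;=\;Y^{\top}Y\;=\;\Sigma_{0}^{\top}\,X'(X')^{\top}\,\Sigma_{0}\;=\;\Sigma_{0}^{2}\;=\;\operatorname{diag}(\sigma_{1}^{2},\ldots,\sigma_{n}^{2}).
\]
The hypothesis $\sigma_{1}>\cdots>\sigma_{n}>0$ is exactly what makes the diagonal entries of $\Sigma_{0}^{2}$ pairwise distinct; this distinctness will be the essential ingredient in the next step.

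Next I would observe that any square matrix commutes with its own square, so $Y\cdot\Sigma_{0}^{2}=Y\cdot Y^{2}=Y^{2}\cdot Y=\Sigma_{0}^{2}\cdot Y$. Writing this commutation relation entrywise gives $Y_{ij}(\sigma_{i}^{2}-\sigma_{j}^{2})=0$ for every $(i,j)$, and since the $\sigma_{i}^{2}$ are pairwise distinct this forces $Y_{ij}=0$ whenever $i\neq j$. Hence $Y$ is diagonal.

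Finally, every diagonal entry of $\Sigma_{0}$ is nonzero, so $\Sigma_{0}$ is invertible, and from $Y=(X')^{\top}\Sigma_{0}$ we recover $(X')^{\top}=Y\,\Sigma_{0}^{-1}$, which is a product of two diagonal matrices and therefore itself diagonal. Transposing gives the claim that $X'$ is diagonal.

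I do not foresee any substantial obstacle in this argument; the only point requiring care is the invocation of the strict inequalities $\sigma_{1}>\cdots>\sigma_{n}>0$, which is precisely what allows the commutation with $\Sigma_{0}^{2}$ to eliminate every off-diagonal entry of $Y$. Without the distinctness assumption, $Y$ could be an arbitrary symmetric matrix respecting the eigenspace block structure of $\Sigma_{0}^{2}$, and the conclusion would fail; this is the same genericity hypothesis already needed for uniqueness of $P_{so}$ in the main theorem, so no new assumption is required.
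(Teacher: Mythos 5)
Your proof is correct, and it takes a noticeably cleaner route than the paper. Both arguments rest on the same computation $Y^{\top}Y = \Sigma_{0}^{2}$ and on the fact that the $\sigma_{i}^{2}$ are pairwise distinct, but you exploit this via the elementary observation that $Y$ commutes with $Y^{2}=\Sigma_{0}^{2}$: the entrywise identity $Y_{ij}(\sigma_{i}^{2}-\sigma_{j}^{2})=0$ immediately kills all off-diagonal entries of $Y$, and then $X'=(Y\Sigma_{0}^{-1})^{\top}$ is diagonal. The paper instead diagonalizes the symmetric matrix $Y$ by the spectral theorem, views $Y=(X')^{\top}\Sigma_{0}I$ and $Y=PDP^{\top}$ as two ``SVD-like'' factorizations, and argues that the standard basis vectors must be both domain and codomain singular vectors of $Y$ because the eigenspaces of $Y^{\top}Y$ are one-dimensional. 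Your commutation argument reaches the same conclusion without invoking the spectral theorem, SVD uniqueness, or any discussion of singular vectors, so it is both shorter and more self-contained; the paper's version makes the SVD structure explicit, which connects more directly to the surrounding theorem but at the cost of extra machinery. One small cosmetic point: your entrywise identity comes out as $Y_{ij}(\sigma_{j}^{2}-\sigma_{i}^{2})=0$ rather than $Y_{ij}(\sigma_{i}^{2}-\sigma_{j}^{2})=0$, but the sign is immaterial to the conclusion.
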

\begin{proof}
Since $Y$ is symmetric, it is diagonalizable with the orthogonal matrix $P \in O(n)$.
\begin{equation}
    Y = \left( X^{\prime} \right)^{\top} \Sigma_{0} = P D P^{t}.
\end{equation}
We interpret these  two decompositions $Y = \left( X^{\prime} \right)^{\top} \Sigma_{0} I = P D P^{t}$ as two SVD-like representations of $Y$ because $\left( X^{\prime} \right)^{\top} \in SO(n)$ and $I^{\top}=I$. The ordered singular vectors in the domain of $Y$ are uniquely determined as the basis for each eigenspace of $Y^{\top}Y$. Note that if the dimension of eigenspace is bigger than 1, there is a freedom of choosing a basis in it. 

From $\Sigma_{0}$, the possible eigenvalues of $Y$ are  $\{ \pm \sigma_{i} \}_{1 \leq i \leq n}$ and the eigenvalues of $Y^{\top}Y$ are $\{ \sigma_{i}^{2} \}_{1 \leq i \leq n}$. Therefore, since the standard basis $\{ e_{i}\}_{1 \leq i \leq n}$ of $\sR^{n}$ are the domain singular vectors of $Y$ from $Y = \left( X^{\prime} \right)^{\top} \Sigma_{0} I$ and $Y$ is diagonalizable,
\begin{equation}
    Y(e_{i}) =
    \begin{cases}
        \sigma_{i} e_{i}  \qquad \textrm{ for } \,\, 1 \leq i \leq n-1 \quad &(\because \sigma_{1} > \ldots > \sigma_{n-1} > 0) \\
        -\sigma_{n} e_{n} \qquad \textrm{ for } \,\, i = n  \quad &(\because \sigma_{n-1} > \sigma_{n} > 0).
    \end{cases}
\end{equation}
Hence, the standard basis are also the codomain singular vectors of $Y$, which implies that $X^{\prime}$ is diagonal.
\end{proof} 

\section{Implementation Detail}
In this section, we summarize the hyperparameters for Frechet basis presented in the experimental results in Section 4.
\begin{itemize}
    \item Preprocossing hyperparameter $\theta_{pre}$ for local dimension estimation \cite{Distortion}: $\theta_{pre}=0.01$.
    \item Global Semantic Subspace Optimization
    \begin{itemize}
        \item Number of samples $n=1000$.
        \item Max iteration in Frechet mean Optimization using Pymanopt \cite{Pymanopt} $= 1,000$.
        \item Max time in Frechet mean Optimization using Pymanopt $= 2,000$.
    \end{itemize}
    
    \item Global Semantic Basis Optimization
    \begin{itemize}
        \item Number of samples $n=1000$.
        \item Max iteration in Frechet mean Optimization using Pymanopt $= 200$.
        \item Max time in Frechet mean Optimization using Pymanopt $= 10,000$.
    \end{itemize}
\end{itemize}

\clearpage
\section{Full correlation results of $\normltwo$-Distortion}

\begin{figure}[h]
    \centering
    \begin{subfigure}[b]{0.325\linewidth}
    \centering
    \includegraphics[width=\linewidth]{figure/FID_L2/geodesic_fid_StyleGAN2-cat_ptb_5_0.005_corr_0.952787523280185_L2.pdf}
    \caption{StyleGAN2-Cat - $\rho=0.95$}
    \end{subfigure}
    \begin{subfigure}[b]{0.325\linewidth}
    \centering
    \includegraphics[width=\linewidth]{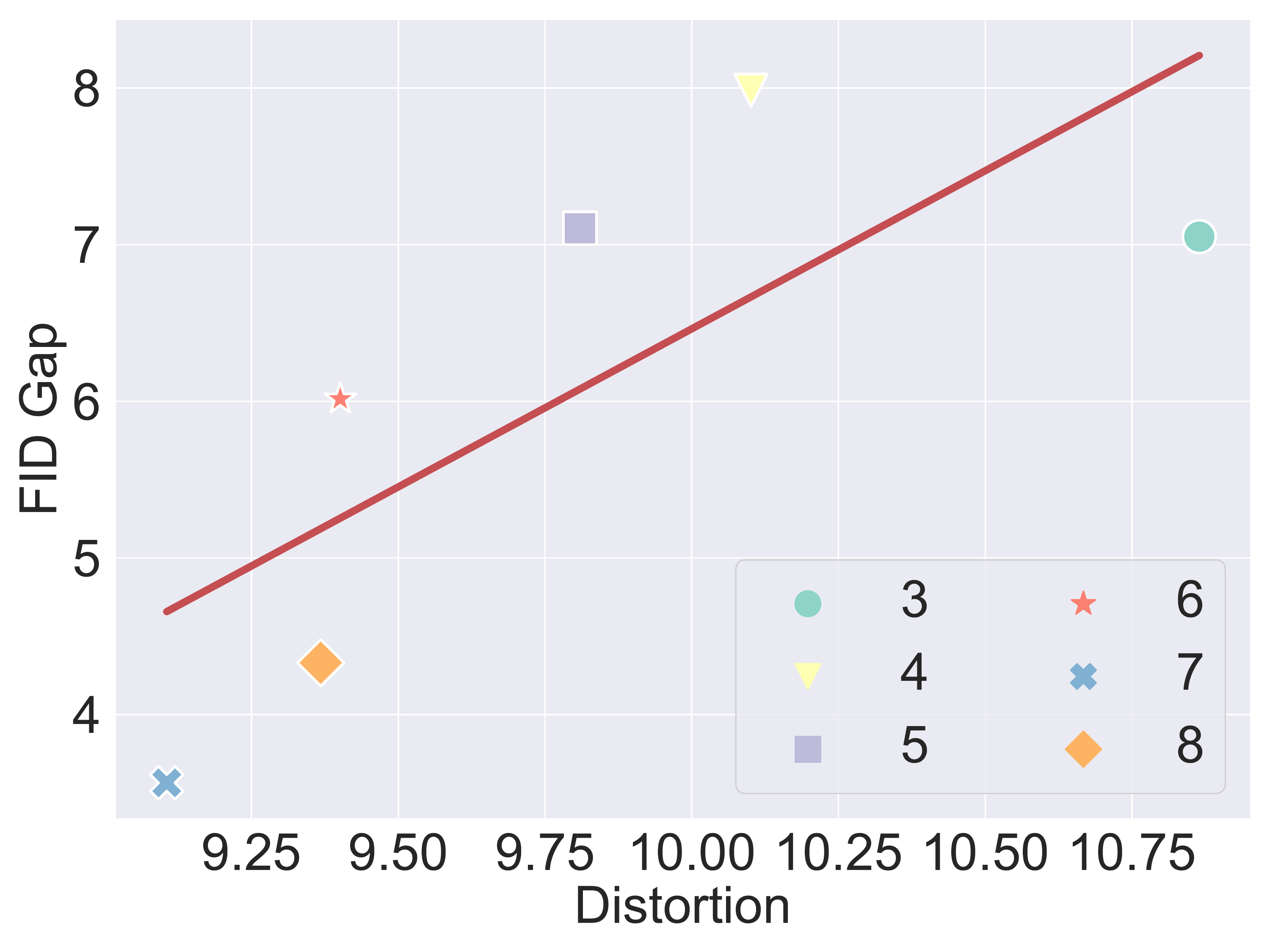}
    \caption{StyleGAN2-e - $\rho=0.75$}
    \end{subfigure}
    \begin{subfigure}[b]{0.325\linewidth}
    \centering
    \includegraphics[width=\linewidth]{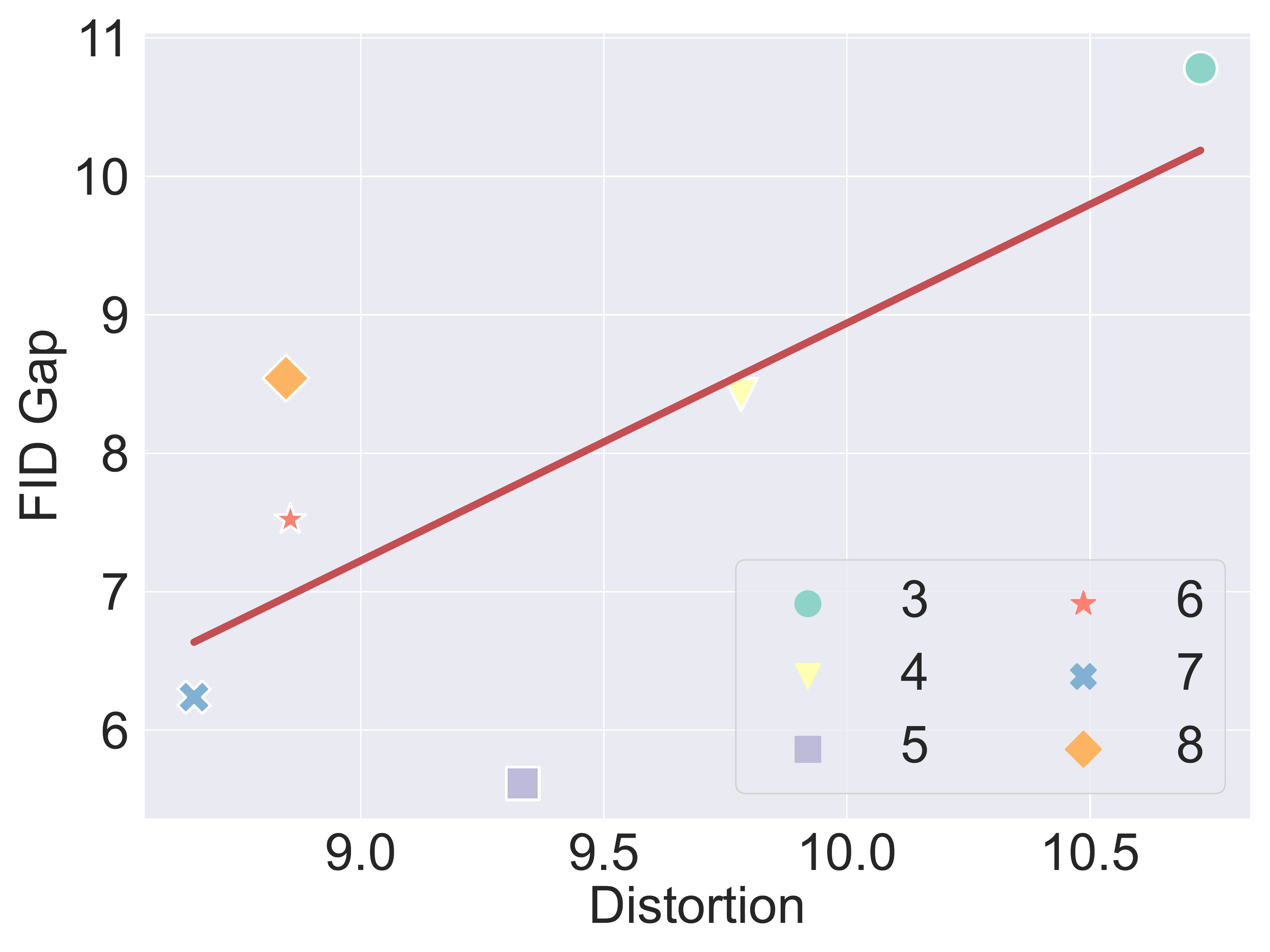}
    \caption{StyleGAN2 - $\rho=0.73$}
    \end{subfigure}
    \caption{
    \textbf{Correlation between L2-Distortion metric ($\downarrow$) and FID gap ($\downarrow$)} when $\theta_{pre}=0.005$. The correlations $\rho$ are 0.98 for StyleGAN2-cat, 0,73 for StyleGAN2-e, and 0.70 for StyleGAN2 in $\normlone$-Distortion.
    }
\end{figure}

\begin{figure}[h]
    \centering
    \begin{subfigure}[b]{0.325\linewidth}
    \centering
    \includegraphics[width=\linewidth]{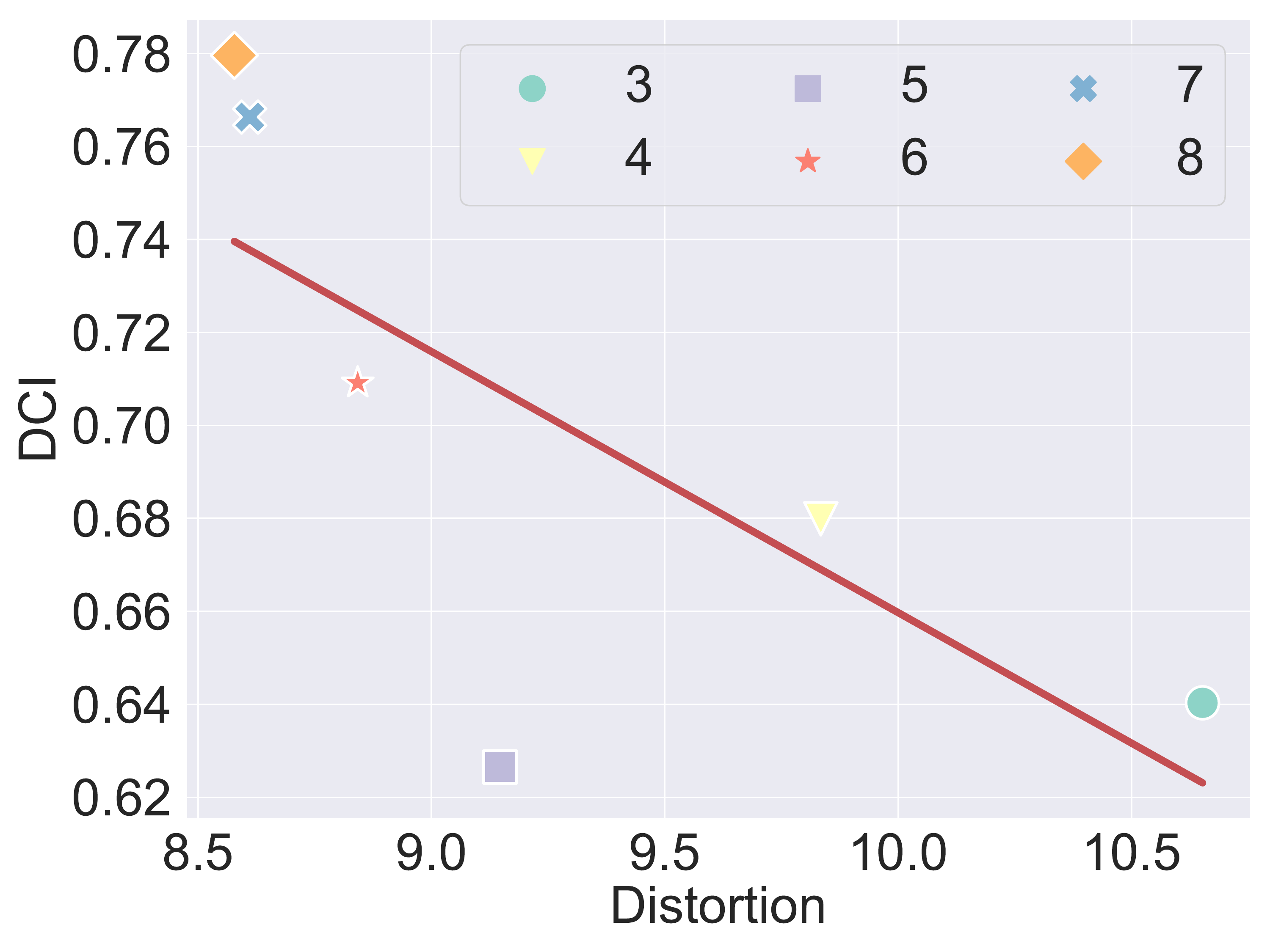}
    \caption{StyleGAN1 - $\rho=-0.72$}
    \end{subfigure}
    \begin{subfigure}[b]{0.325\linewidth}
    \centering
    \includegraphics[width=\linewidth]{figure/DCI_L2/geodesic_DCI_StyleGAN2-e_0.005_corr_-0.9095333576605878_L2.pdf}
    \caption{StyleGAN2-e - $\rho=-0.91$}
    \end{subfigure}
    \begin{subfigure}[b]{0.325\linewidth}
    \centering
    \includegraphics[width=\linewidth]{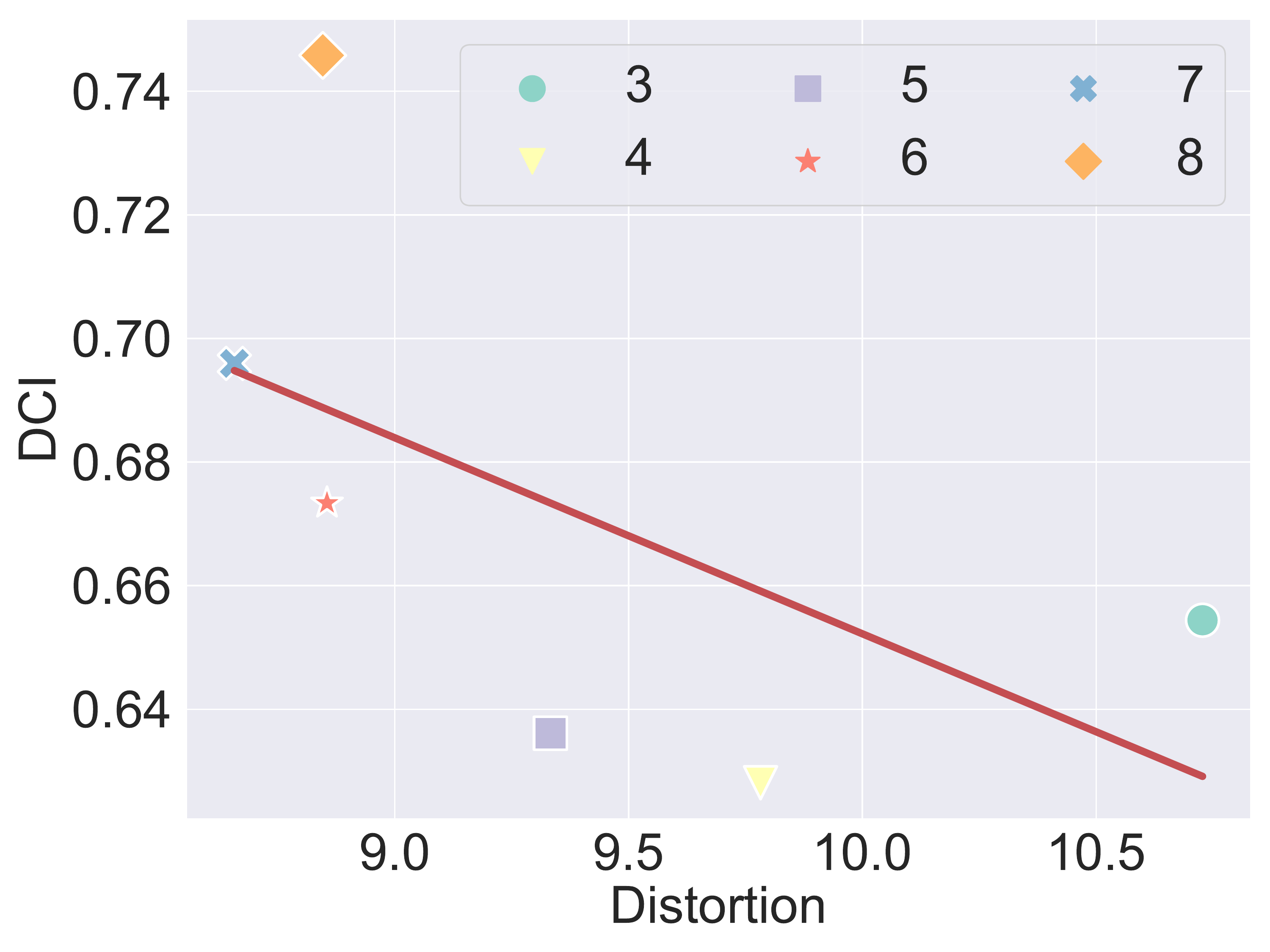}
    \caption{StyleGAN2 - $\rho=-0.57$}
    \end{subfigure} 
    \caption{
    \textbf{Correlation between L2-Distortion metric ($\downarrow$) and DCI ($\uparrow$)} when $\theta_{pre}=0.005$. The correlations $\rho$ are 0.98, 0,73, and 0.70 for StyleGAN2-cat, StyleGAN2-e, and StyleGAN2 for L1-Distortion.
    }
\end{figure}

\newpage
\section{Quantitative comparison of robustness (FID) for various perturbation intensity}

\begin{figure}[h]
    \centering
    \begin{subfigure}[b]{0.4\linewidth}
    \centering
    \includegraphics[width=1\linewidth]{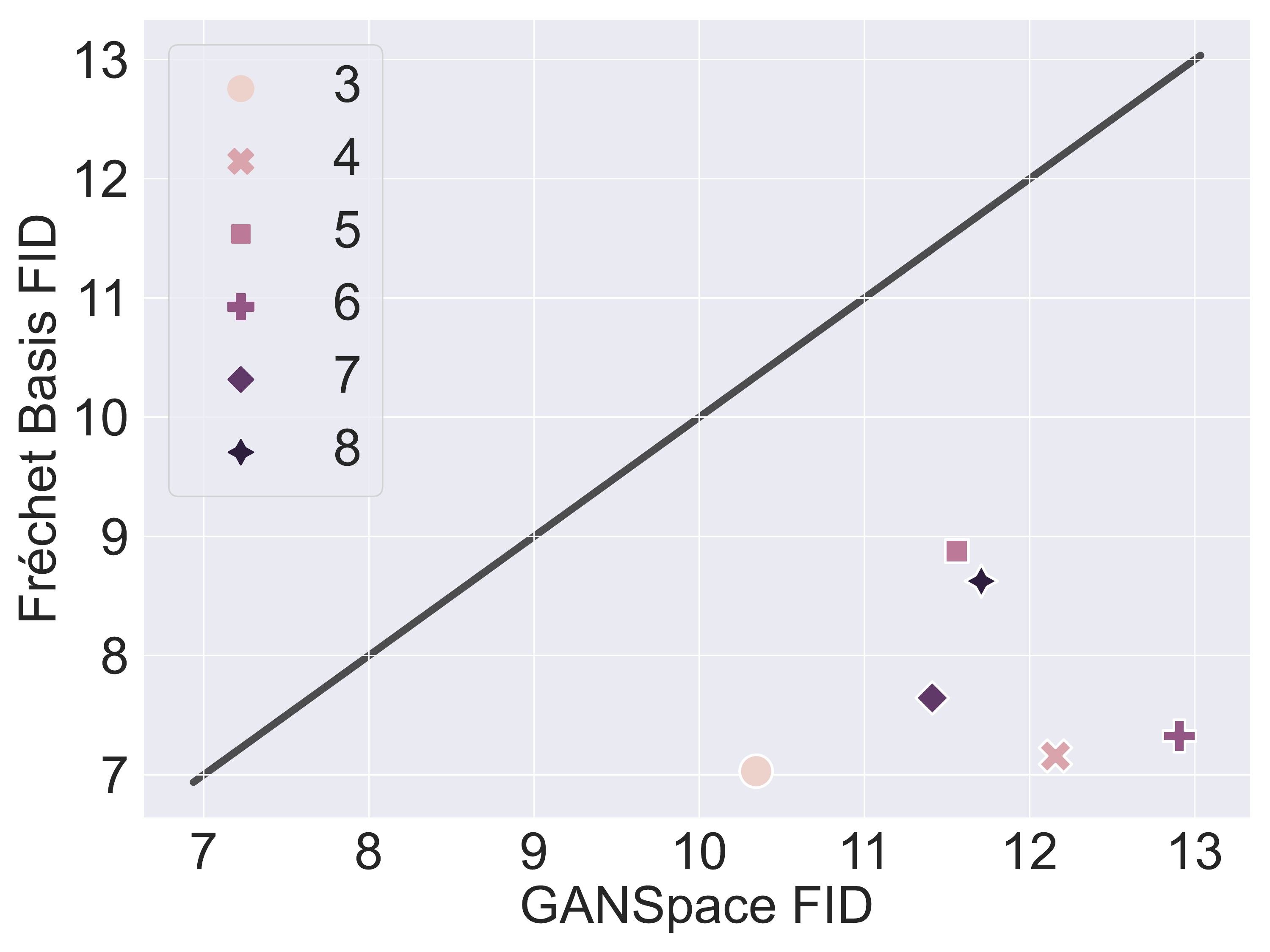}
    \caption{$I = 1.0$}
    \end{subfigure}
    \quad
    \begin{subfigure}[b]{0.4\linewidth}
    \centering
    \includegraphics[width=1\linewidth]{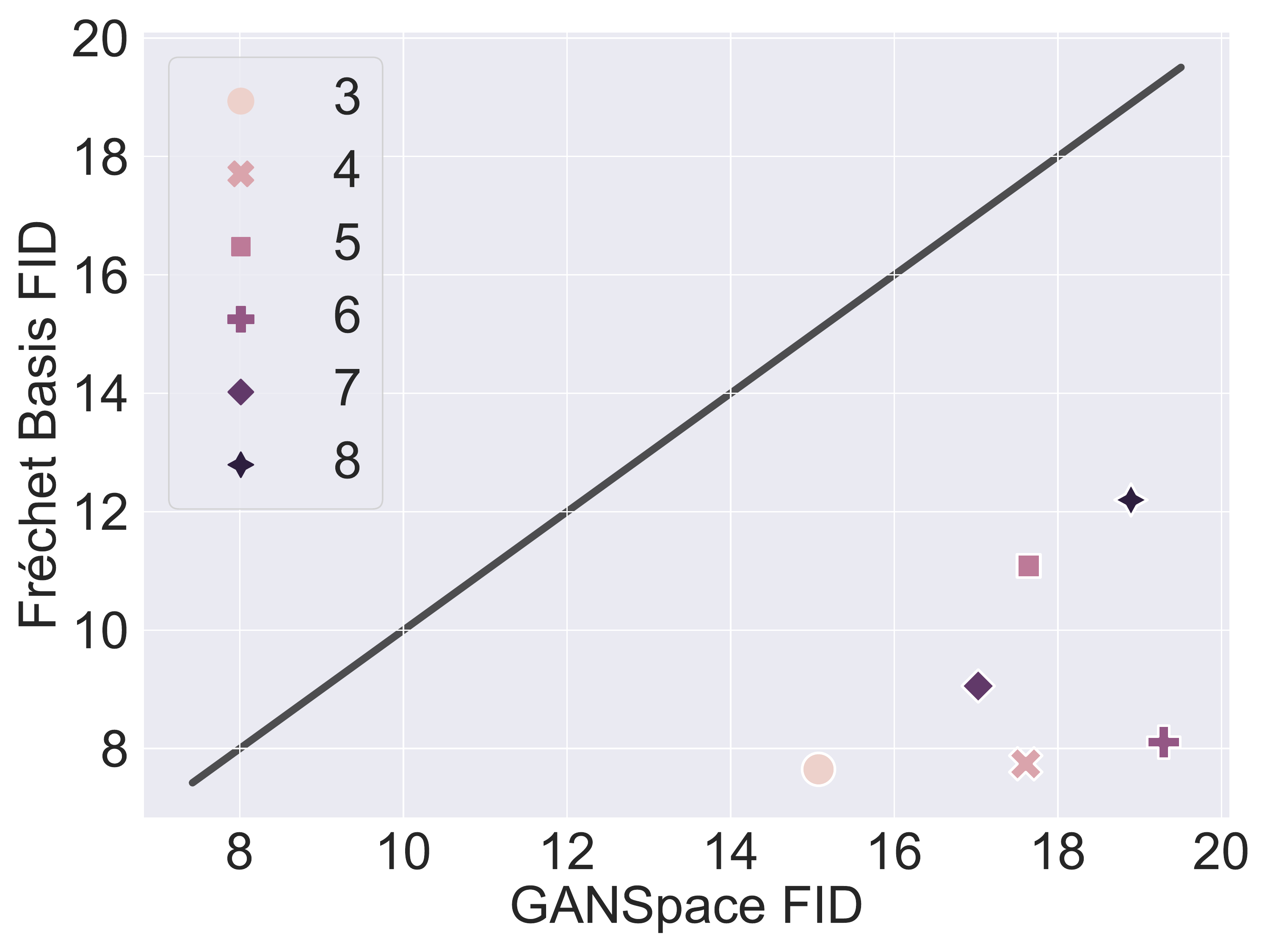}
    \caption{$I = 1.5$}
    \end{subfigure}
    \\
    \begin{subfigure}[b]{0.4\linewidth}
    \centering
    \includegraphics[width=1\linewidth]{figure/frechet_fid/fid_StyleGAN1_thres_0.01_ptb_2.0.pdf}
    \caption{$I = 2.0$}
    \end{subfigure}
    \quad
    \begin{subfigure}[b]{0.4\linewidth}
    \centering
    \includegraphics[width=1\linewidth]{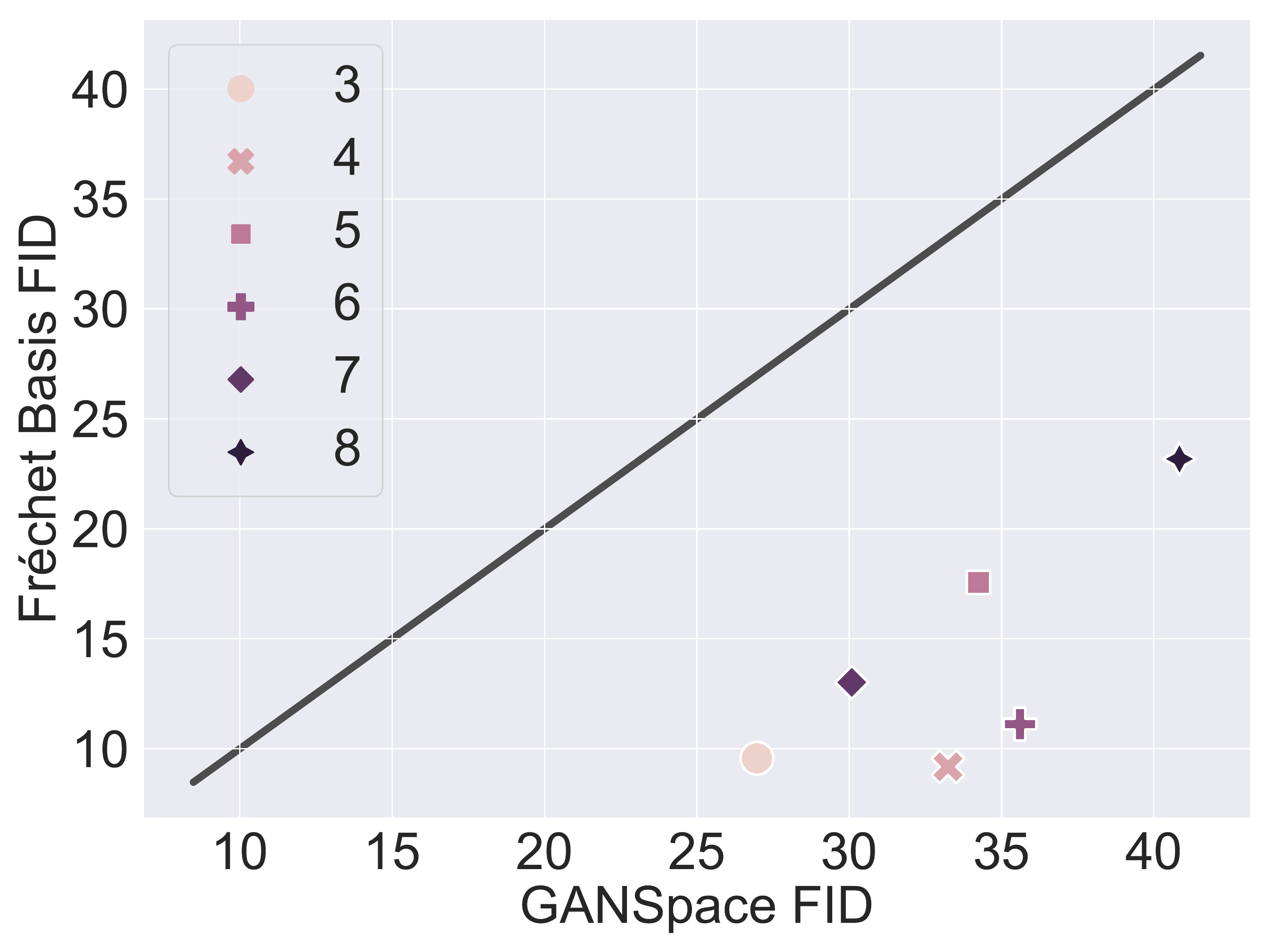}
    \caption{$I = 2.5$}
    \end{subfigure}
    \caption{
    \textbf{Quantitative Comparison of Robustness} between GANSpace and Fréchet basis on StylGAN1-FFHQ \citep{karras2019style} for various perturbation intensity $I$. The perturbation intensity $I$ is measured by the $\normltwo$-norm on each latent space. The image fidelity under the latent traversal is evaluated by FID ($\downarrow$). The black line indicates where two FIDs are equal.}
\end{figure}

\begin{figure}[h]
    \centering
    \begin{subfigure}[b]{0.4\linewidth}
    \centering
    \includegraphics[width=1\linewidth]{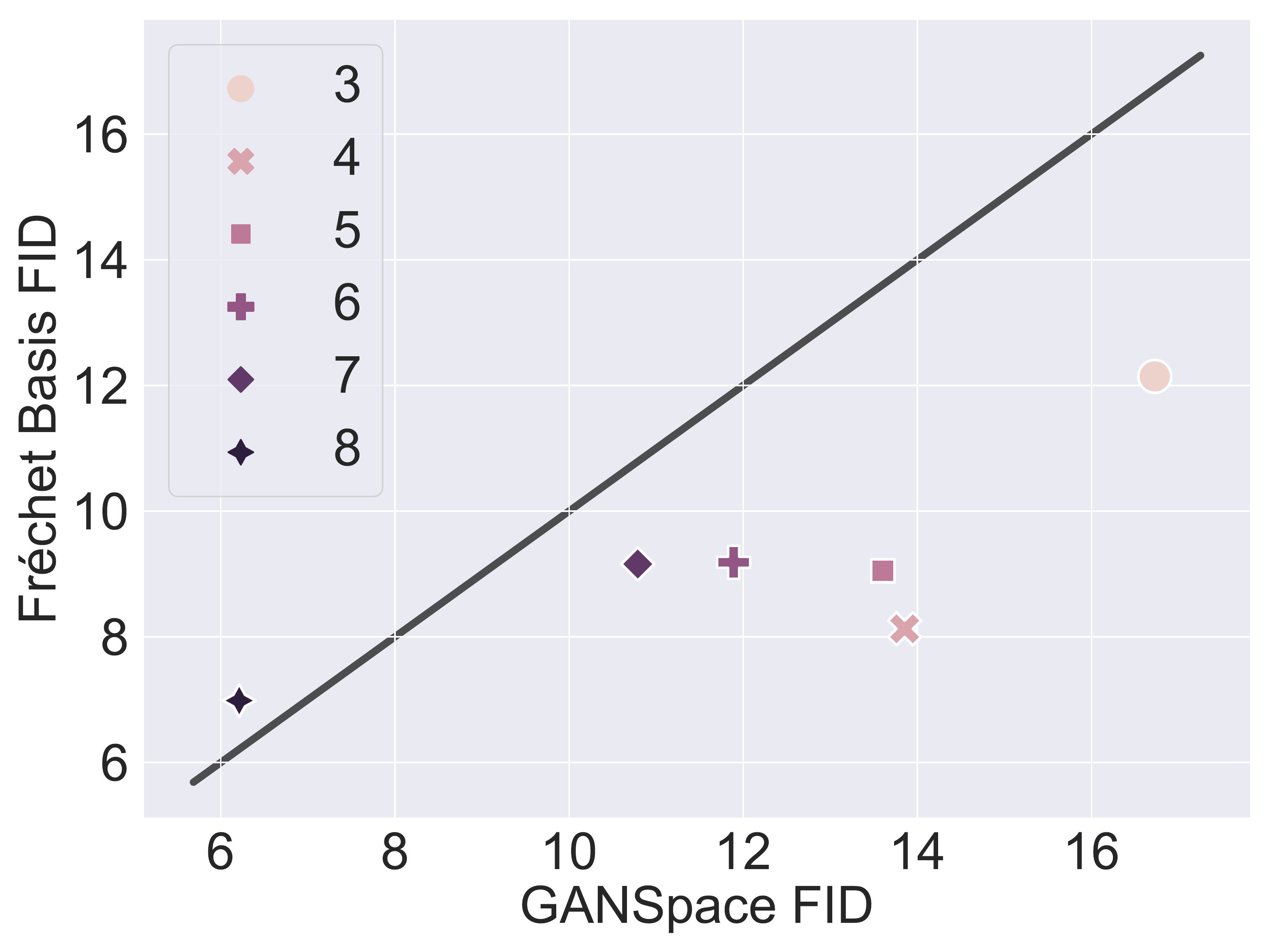}
    \caption{$I = 3.0$}
    \end{subfigure}
    \quad
    \begin{subfigure}[b]{0.4\linewidth}
    \centering
    \includegraphics[width=1\linewidth]{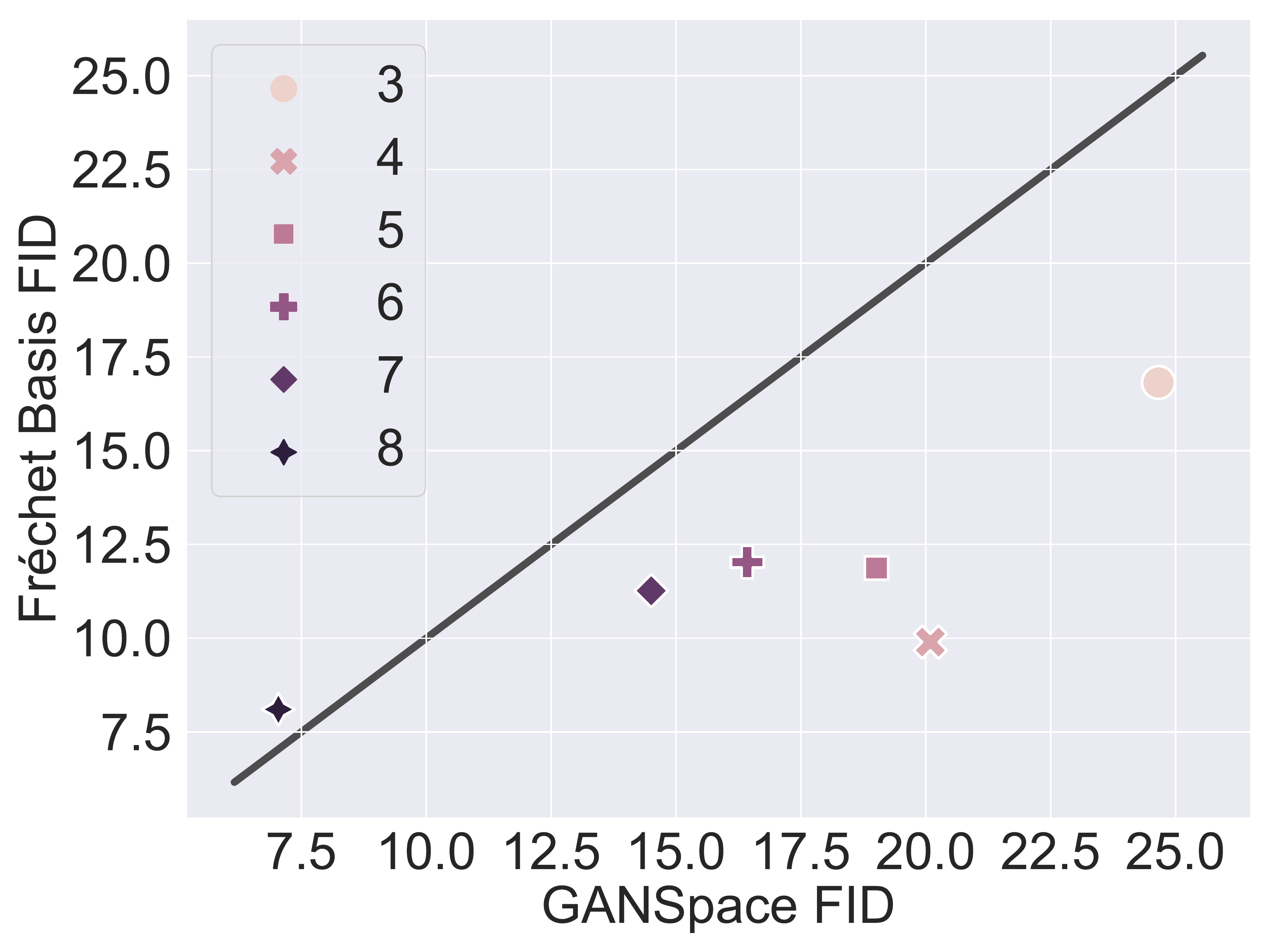}
    \caption{$I = 4.0$}
    \end{subfigure}
    \\
    \begin{subfigure}[b]{0.4\linewidth}
    \centering
    \includegraphics[width=1\linewidth]{figure/frechet_fid/fid_StyleGAN2-e_thres_0.01_ptb_5.0.pdf}
    \caption{$I = 5.0$}
    \end{subfigure}
    \quad
    \begin{subfigure}[b]{0.4\linewidth}
    \centering
    \includegraphics[width=1\linewidth]{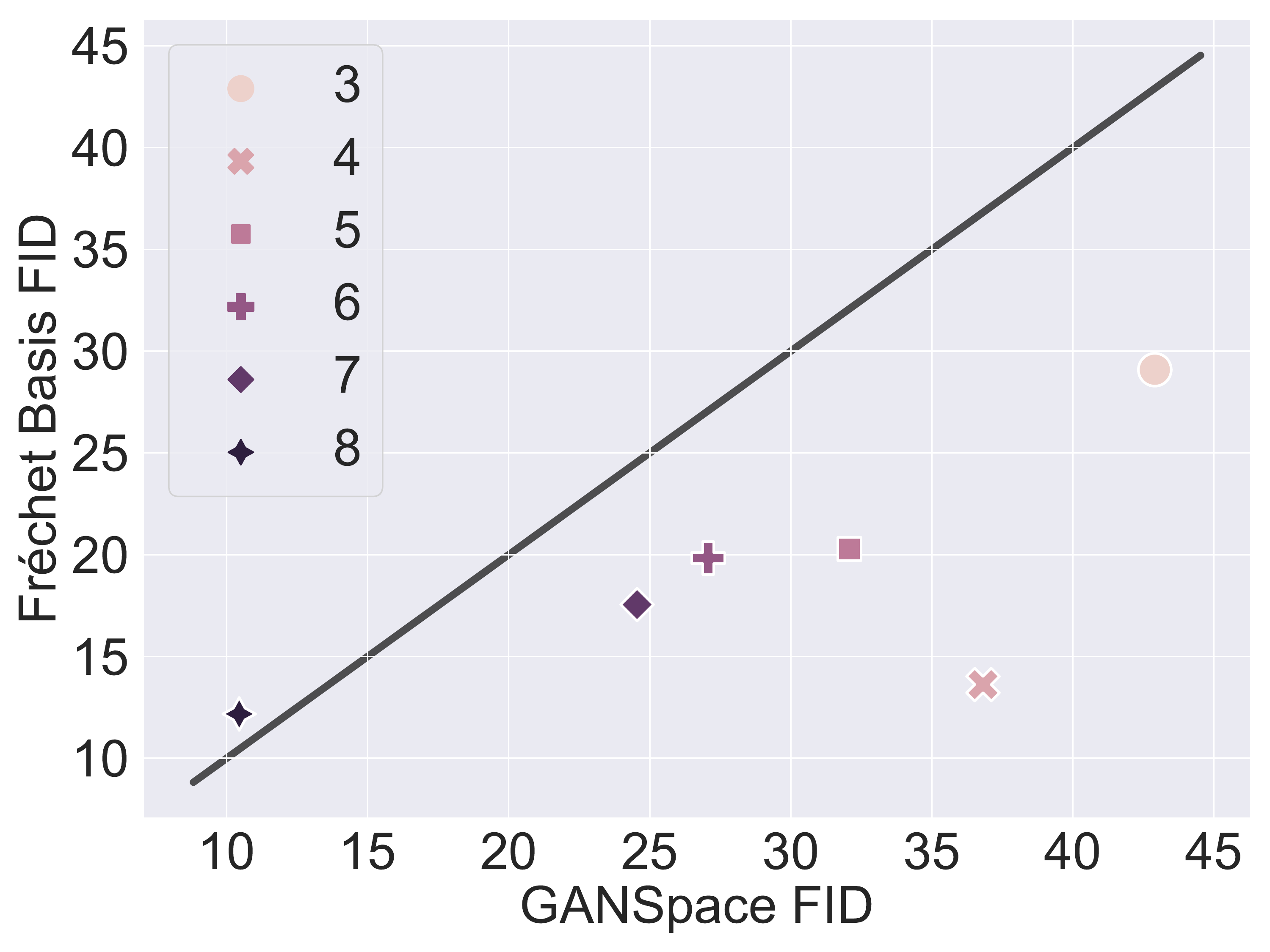}
    \caption{$I = 6.0$}
    \end{subfigure}
    \caption{
    \textbf{Quantitative Comparison of Robustness} between GANSpace and Fréchet basis on StylGAN2-e-FFHQ \citep{karras2020analyzing} for various perturbation intensity $I$. The image fidelity under the latent traversal is evaluated by FID ($\downarrow$). The black line indicates where two FIDs are equal.}
\end{figure}

\begin{figure}[h]
    \centering
    \begin{subfigure}[b]{0.4\linewidth}
    \centering
    \includegraphics[width=1\linewidth]{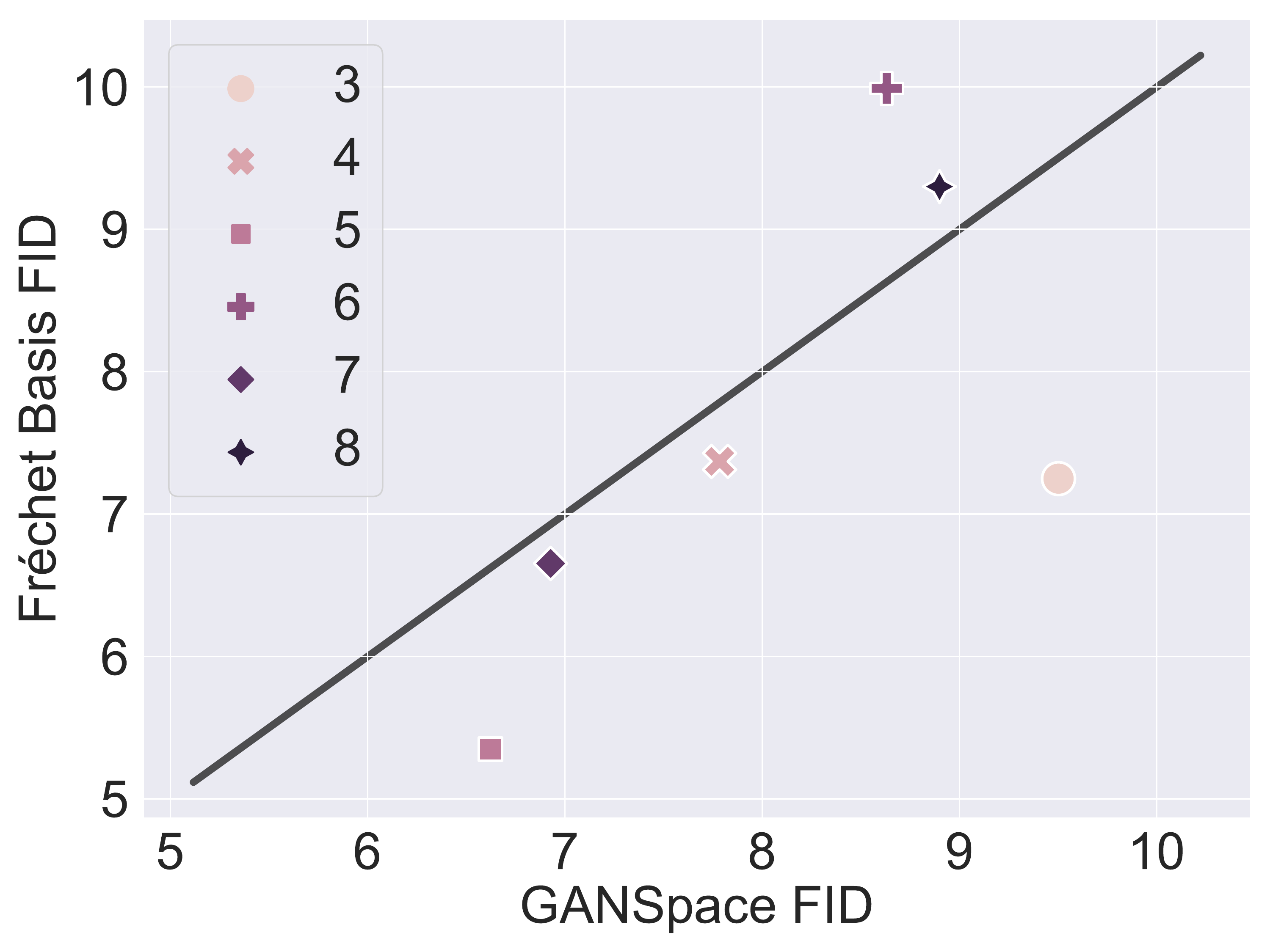}
    \caption{$I = 2.0$}
    \end{subfigure}
    \quad
    \begin{subfigure}[b]{0.4\linewidth}
    \centering
    \includegraphics[width=1\linewidth]{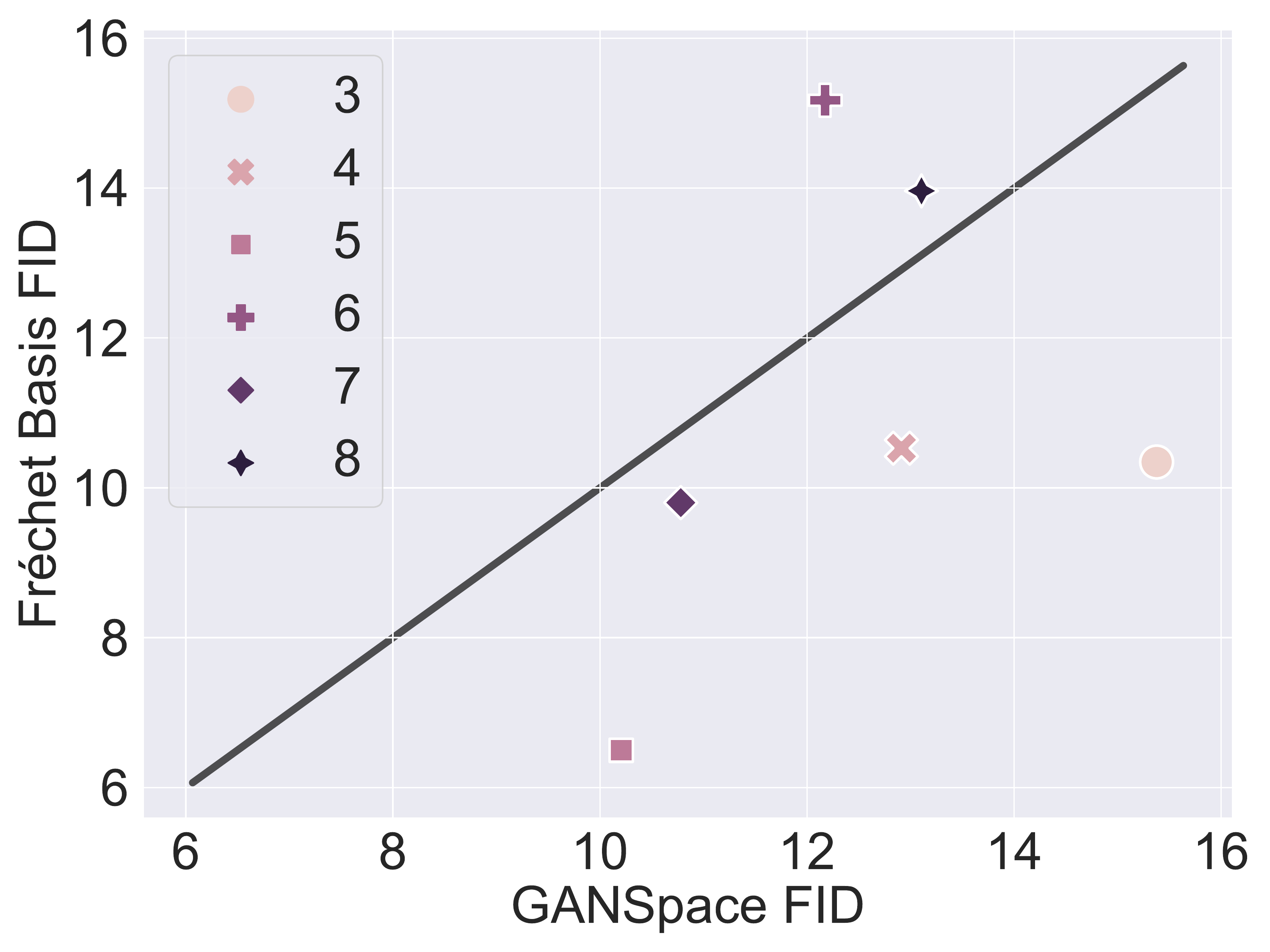}
    \caption{$I = 3.0$}
    \end{subfigure}
    \\
    \begin{subfigure}[b]{0.4\linewidth}
    \centering
    \includegraphics[width=1\linewidth]{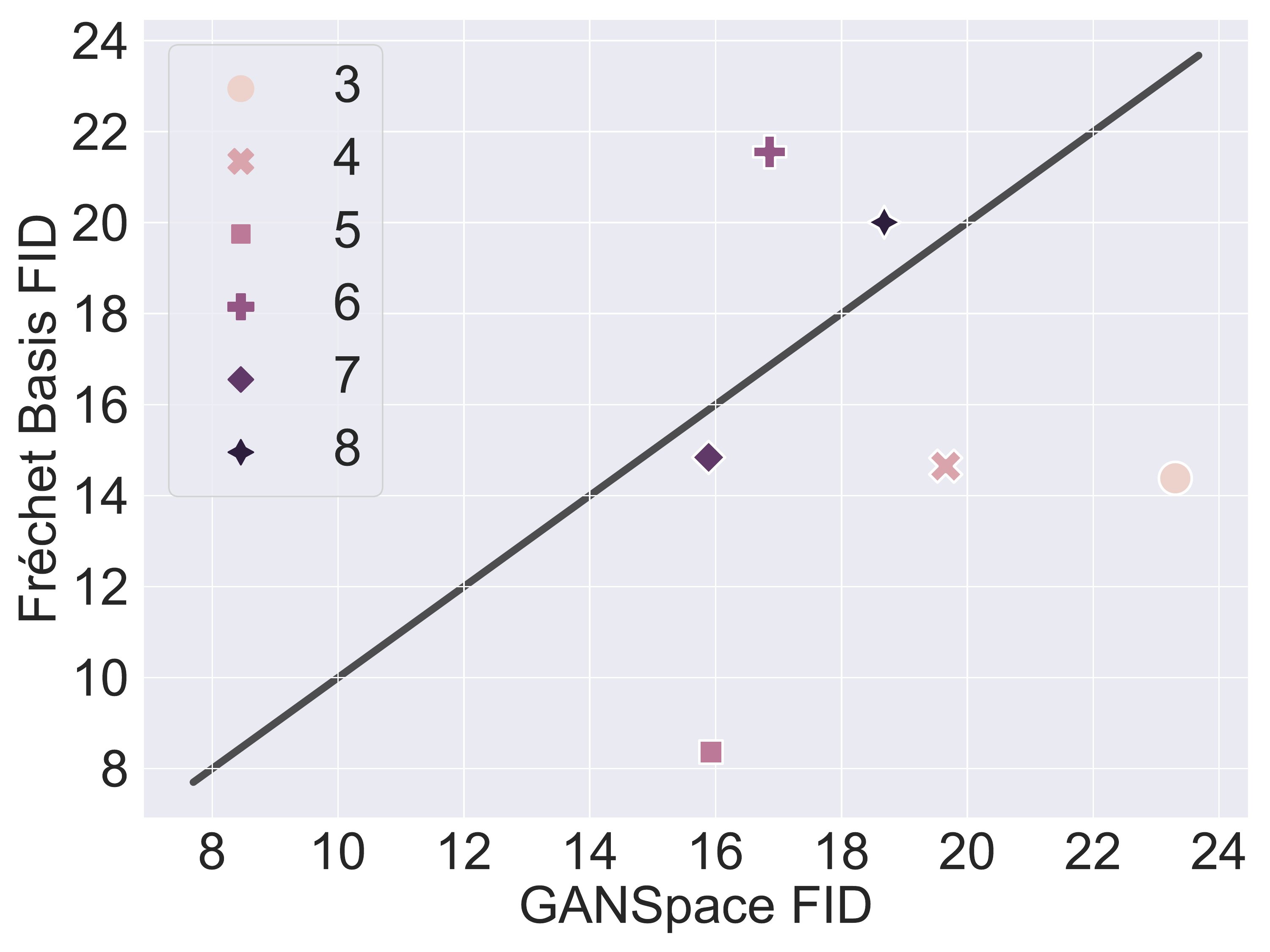}
    \caption{$I = 4.0$}
    \end{subfigure}
    \quad
    \begin{subfigure}[b]{0.4\linewidth}
    \centering
    \includegraphics[width=1\linewidth]{figure/frechet_fid/fid_StyleGAN2-f_thres_0.01_ptb_5.0.pdf}
    \caption{$I = 5.0$}
    \end{subfigure}
    \caption{
    \textbf{Quantitative Comparison of Robustness} between GANSpace and Fréchet basis on StylGAN2-FFHQ \citep{karras2020analyzing} for various perturbation intensity $I$. The image fidelity under the latent traversal is evaluated by FID ($\downarrow$). The black line indicates where two FIDs are equal.}
\end{figure}

\clearpage
\section{Ablation study on the other means of Grassmannian manifold}
In this section, we conducted an ablation study on defining the global semantic subspace in the Grassmanifold. In particular, we compared the Fréchet mean \cite{FrechetMeanOrg, FrechetMeanRef} and extrinsic mean \cite{ExtrinsicMean} of the Grassmannian manifold. The extrinsic mean $\mu_{E}$ is defined as the minimizer of squared extrinsic metrics $d_{E}$, i.e., for $x_1, \ldots, x_{n} \in Gr(k, \sR^{n})$,
\begin{equation}
    \mu_{E} = \argmin_{\mu \in X} \sum_{1\leq i\leq n} d_{E}
    \left( \mu, x_{i}  \right)^{2},
    \quad \textrm{ where } d_{E} \left( \mu, x_{i}  \right) 
    = d_{\Phi} \left( \Phi(\mu), \Phi(x_{i})  \right),
\end{equation}
where $\Phi$ denotes an appropriate embedding of $Gr(k, \sR^{n})$. Following \cite{FrechetMeanRef}, we set the embedding $\Phi$ to be the corresponding projection $\mathbb{P}_{x_{i}}$, i.e., $\Phi(x_{i}) = \mathbb{P}_{x_{i}} = M_{x_{i}}^{\top} M_{x_{i}}$ where $M_{x_{i}} \in \sR^{n \times k}$ indicates the column-wise concatenation of an orthonormal basis of $x_{i}$. Also, the Frobenius norm is adopted for $d_{\Phi}$. 

\begin{figure}[h]
    \centering
    \begin{subfigure}[b]{0.45\linewidth}
    \centering
    \includegraphics[width=1\linewidth]{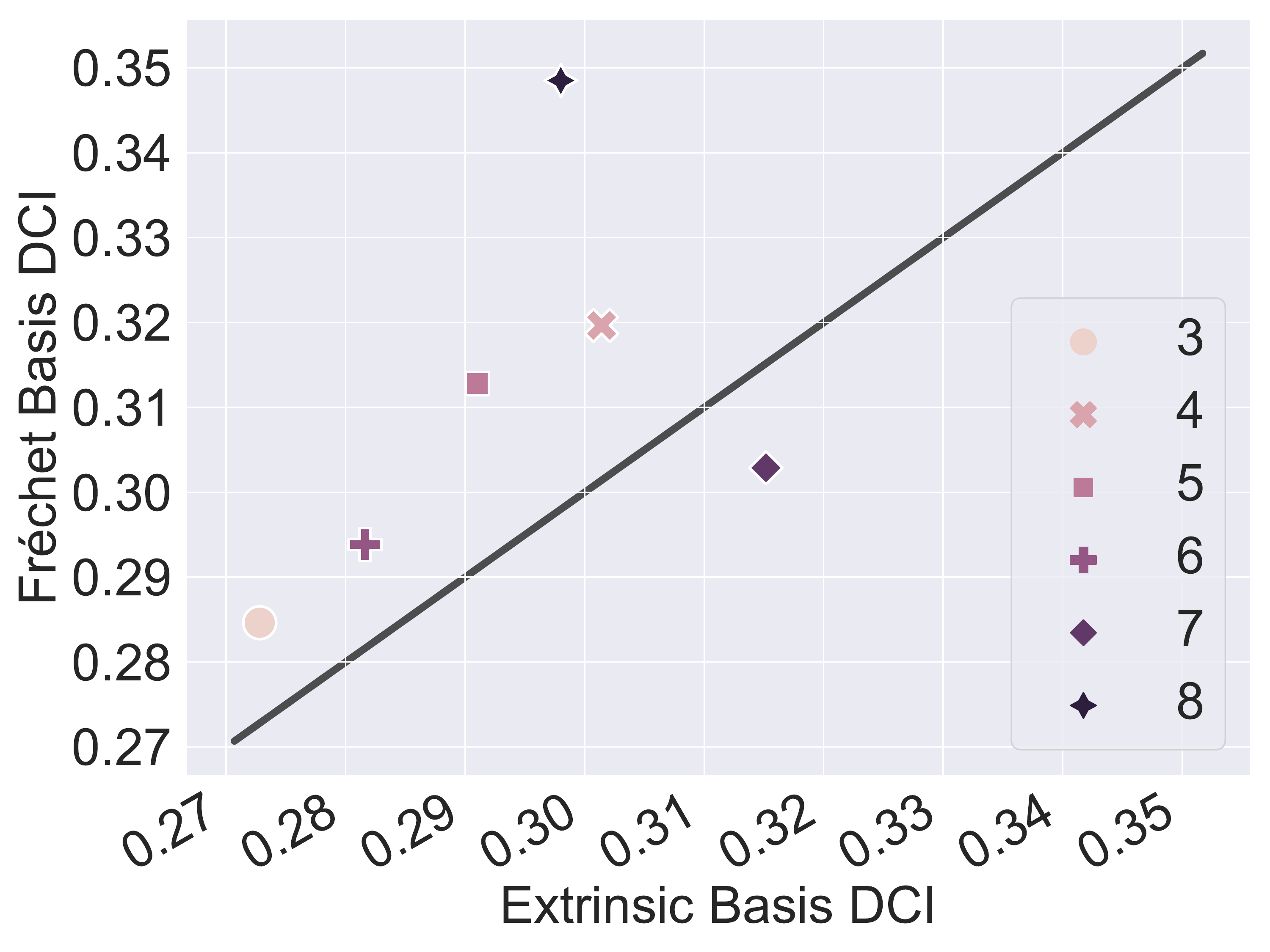}
    \caption{DCI ($\uparrow$)}
    \end{subfigure}
    \quad
    \begin{subfigure}[b]{0.45\linewidth}
    \centering
    \includegraphics[width=1\linewidth]{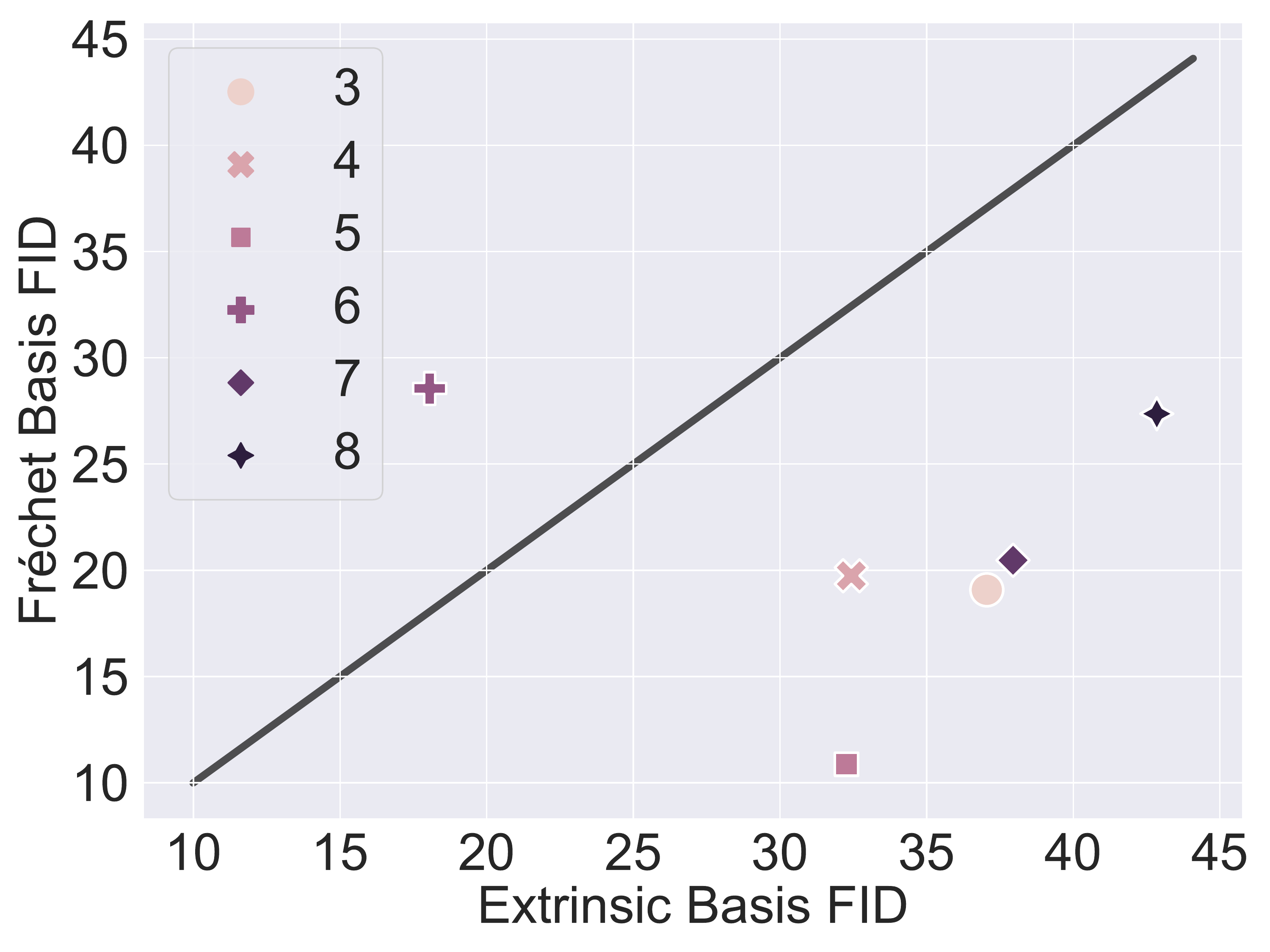}
    \caption{FID ($\downarrow$) with $I = 3.0$}
    \end{subfigure}
    \caption{
    \textbf{Ablation study on the means of Grassmannian manifold} on StyleGAN2-FFHQ. Extrinsic basis is a variant of Fréchet basis where the global semantic basis is discovered by the extrinsic mean of the Grassmannian manifold. In both scores, our Fréchet basis outperforms Extrinsic basis in 5 out 6 intermediate layers in the mapping network.
    }
\end{figure}

\clearpage
\section{Comparison to Local Basis} \label{sec:app_comparison_to_LB}
\subsection{Quantitative Comparison}
In this section, we introduced a new experiment to quantitatively compare semantic factorization between Fréchet basis and Local Basis \cite{LocalBasis}. Intuitively, this experiment measures the average of local DCI scores. To be more specific, consider $n$-samples $\{\mathbf{z}_{i}\}_{1 \leq i \leq n} \subset \mathcal{Z} $ of input Gaussian noise. (We set $n=100$) Then, take $m$-samples from the neighborhood of each $\mathbf{z_{i}}$ and map them to the target latent space $\mathcal{W} = f(\mathcal{Z})$, where $f$ denotes the subnetwork from $\mathcal{Z}$ to $\mathcal{W}$:
\begin{equation}
    \mathbf{w_{i, j}} = f(\mathbf{z}_{i} + \epsilon_{i, j}) \qquad \textrm{ where } \quad \epsilon_{i, j} \sim \mathcal{N}(0, \sigma^{2} I)
    \,\,\textrm{ and }\,\, \mathbf{w}_{i} = f(\mathbf{z}_{i}),
\end{equation}
for sufficiently small $\sigma > 0$. (We set $m=1,000$ and $\sigma=0.5$) Then, we measure DCI score for each neighborhood of $\mathbf{w}_{i}$, i.e., $\{\mathbf{w}_{i, j}\}_{1 \leq j \leq m}$, after representing these latent variables with each semantic basis as in Eq \ref{eq:basis_represent} (Fréchet basis and Local Basis at $\mathbf{w}_{i}$).
The averages of these local DCI scores are compared between Fréchet basis and Local Basis.
Below, we included the evaluated scores on $\mathcal{W}$-space of StyleGAN2, StyleGAN2-config-e, and StyleGAN1 trained on FFHQ. Note that the overall DCI scores are higher than Fig 4 because it is easier for a semantic basis to satisfy semantic consistency on the local region than on the entire latent space. As in the qualitative examples, Fréchet basis outperforms Local Basis in the quantitative assessment.

\begin{table}[h]
\caption{\textbf{Quantitative Comparison of Semantic Factorization} by Local DCI ($\uparrow$). Local DCI score is evaluated at $\wset$-space of each StyleGAN model.
} 
\begin{center}
\scalebox{1}{
\begin{tabular}{lcc}
\toprule
\multicolumn{1}{c}{\bf Model}  &\multicolumn{1}{c}{\bf Fréchet Basis} &\multicolumn{1}{c}{\bf Local Basis}
\\ \midrule
StyleGAN2            & \textbf{0.740}  & 0.665 \\
StyleGAN2-e     & \textbf{0.726}  & 0.704 \\
StyleGAN1         & \textbf{0.733}  & 0.677 \\ \bottomrule
\end{tabular}
}
\end{center}
\end{table}

\subsection{Qualitative Comparison}

\begin{figure}[h]
    \centering
    \begin{subfigure}[b]{0.315\linewidth}
    \centering
    \includegraphics[width=\linewidth]{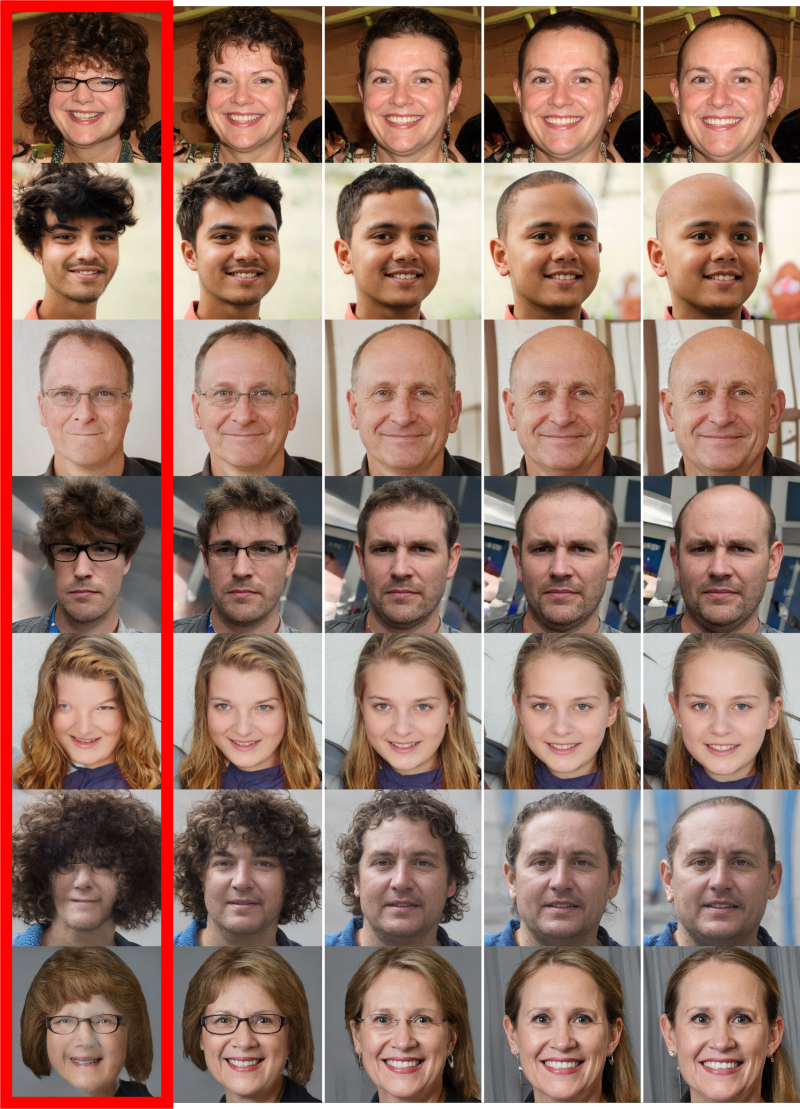}
    \caption{GANspace}
    \end{subfigure}
    \,\,\,
    \begin{subfigure}[b]{0.315\linewidth}
    \centering
    \includegraphics[width=\linewidth]{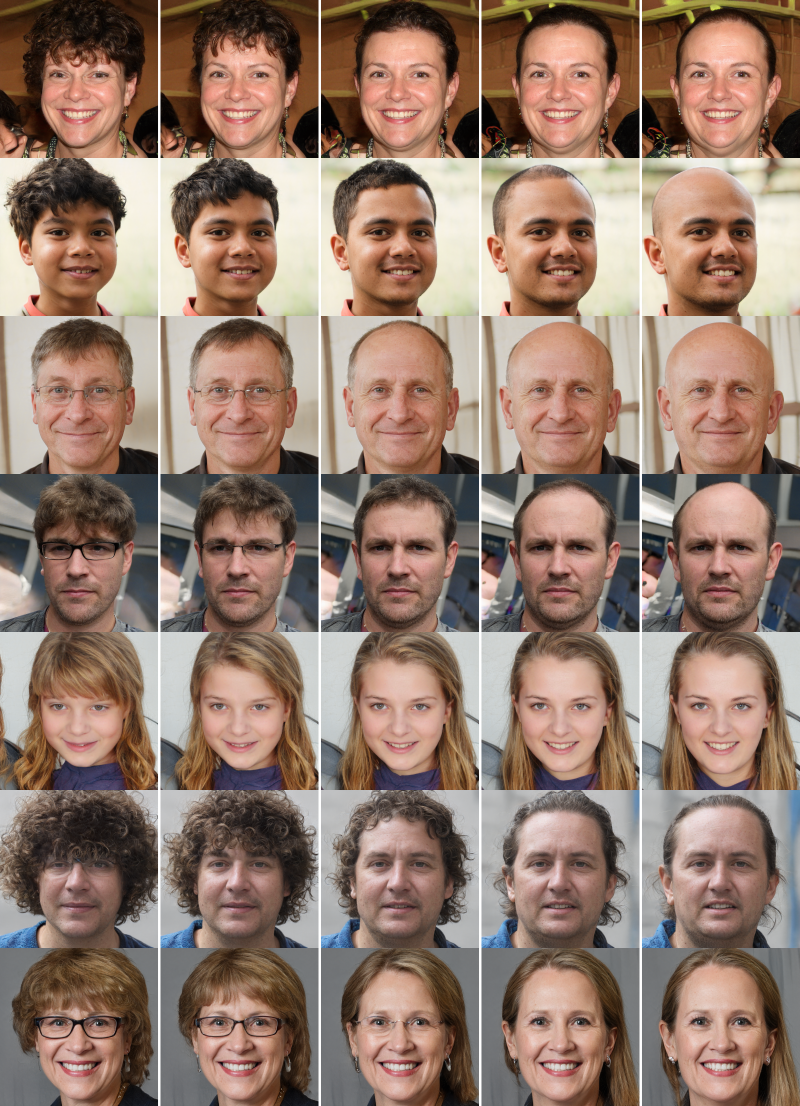}
    \caption{Fréchet Basis}
    \end{subfigure}
    \,\,\,
    \begin{subfigure}[b]{0.315\linewidth}
    \centering
    \includegraphics[width=\linewidth]{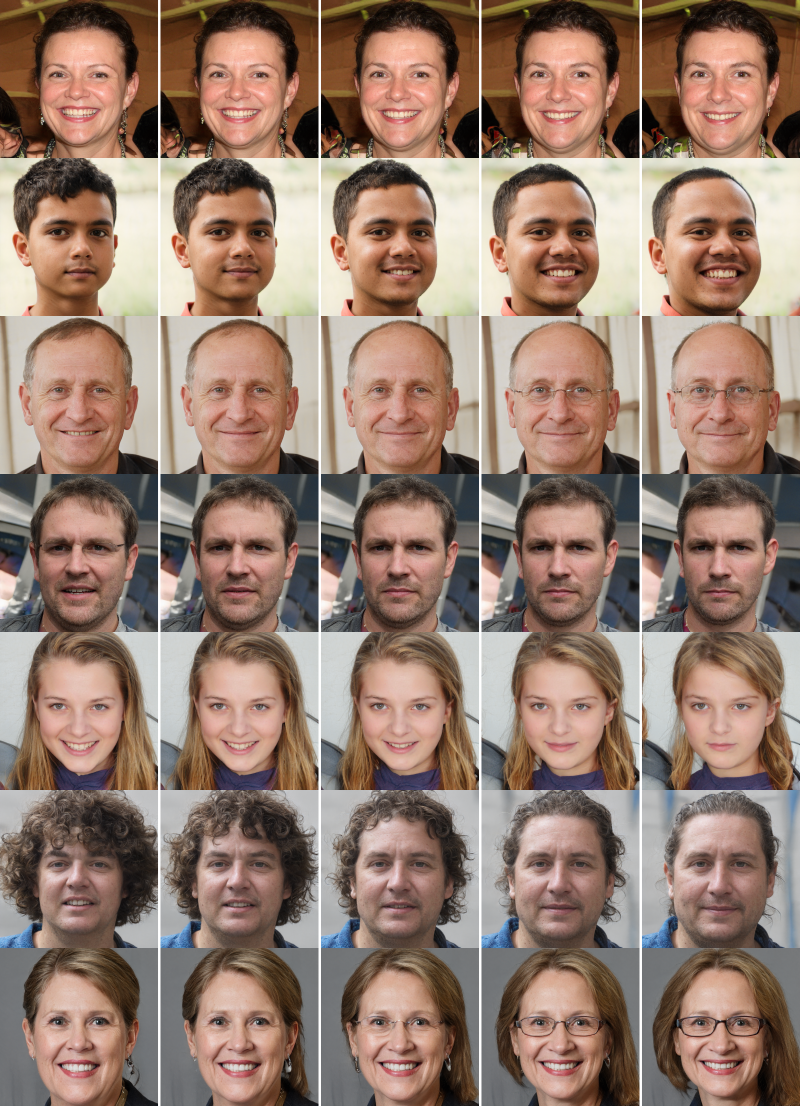}   
    \caption{Local Basis}
    \end{subfigure}
    \caption{
    \textbf{Comparison of Latent Traversals on StyleGAN2-FFHQ.} We used the annotated GANSpace on the semantics of "Bald". The corresponding Fréchet Basis and Local Basis are chosen by the cosine similarity. The traversal images along GANSpace are more deteriorated than the other two bases. The red box indicates where the image deterioration occurred.}
    \label{fig:traversal_ganspace_frechet_local}
\end{figure}

\begin{figure}[h]
    \centering
    \begin{subfigure}[b]{0.315\linewidth}
    \centering
    \includegraphics[width=\linewidth]{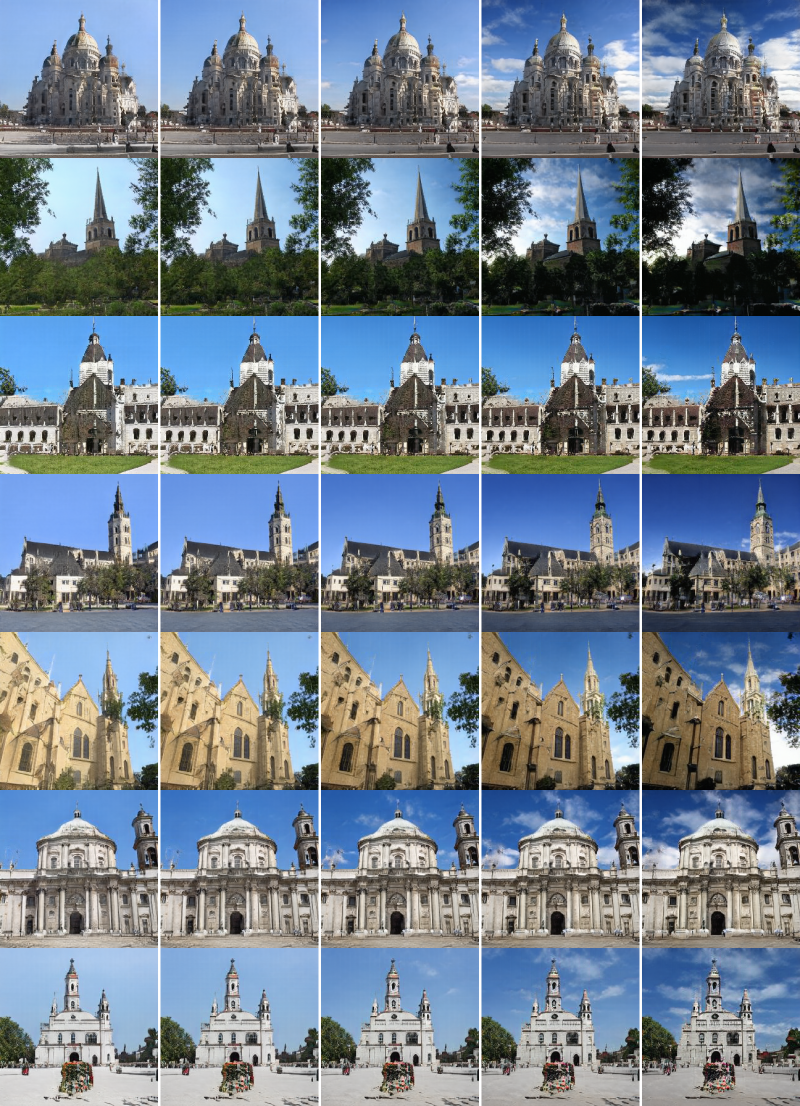}
    \caption{GANspace}
    \end{subfigure}
    \,\,\,
    \begin{subfigure}[b]{0.315\linewidth}
    \centering
    \includegraphics[width=\linewidth]{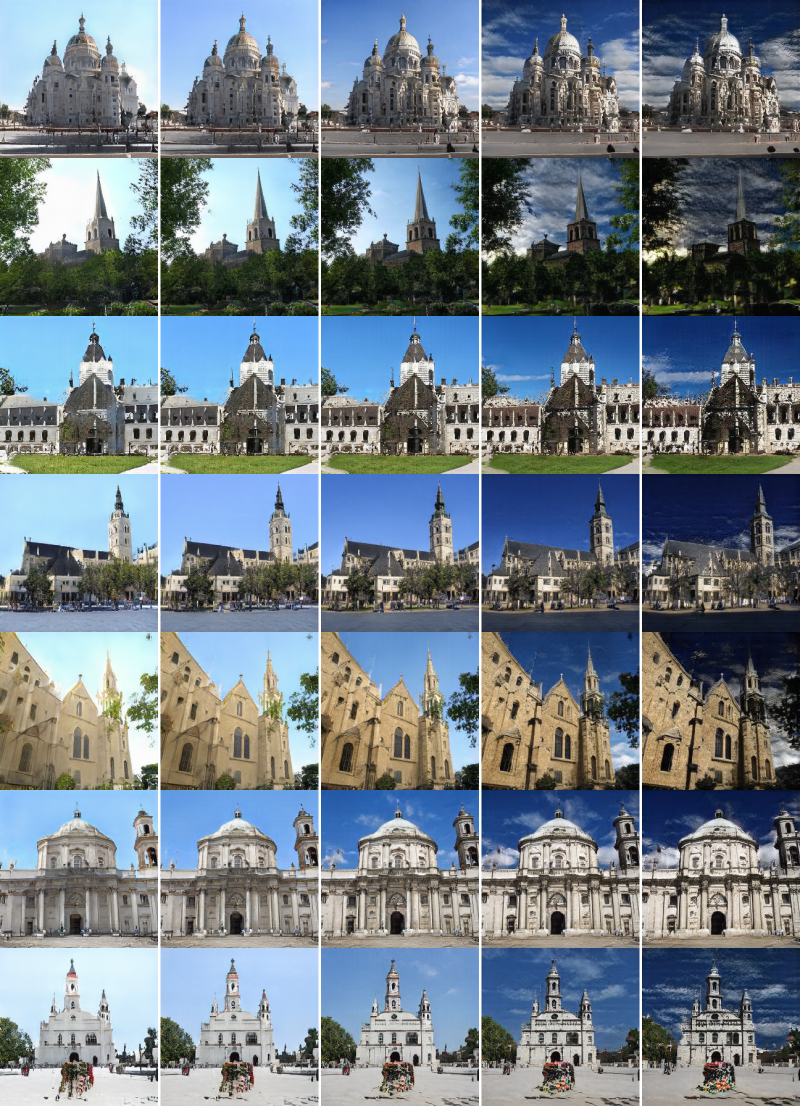}
    \caption{Fréchet Basis}
    \end{subfigure}
    \,\,\,
    \begin{subfigure}[b]{0.315\linewidth}
    \centering
    \includegraphics[width=\linewidth]{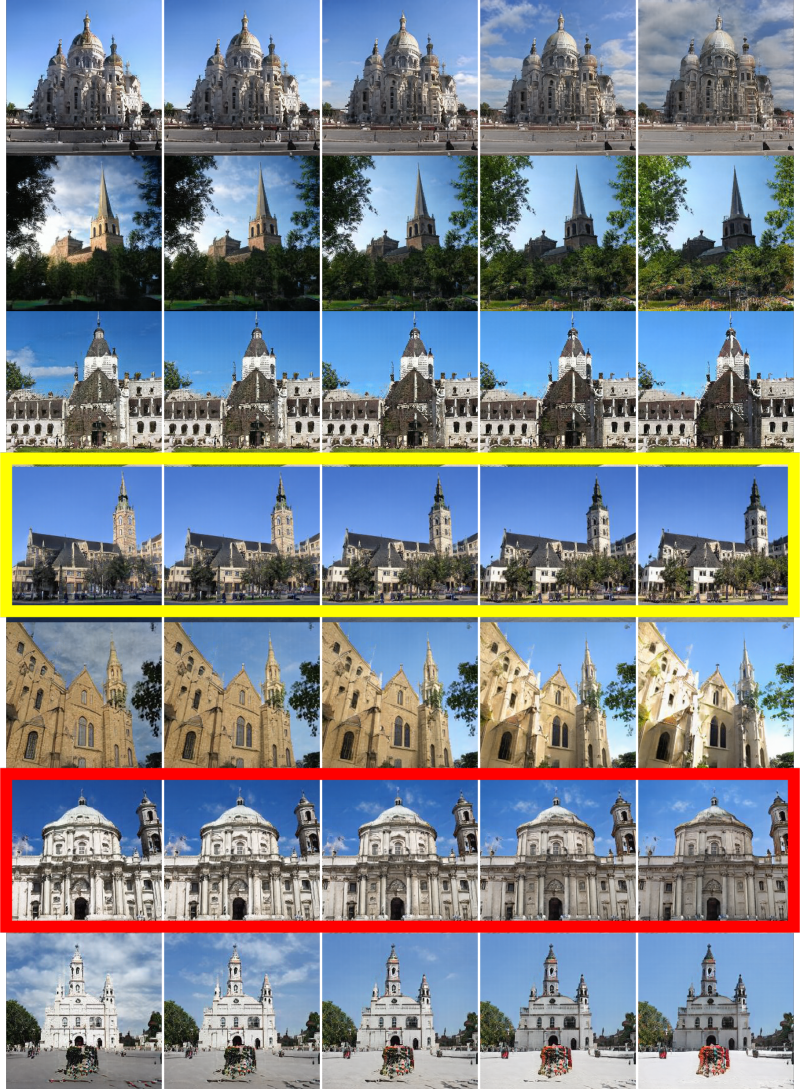}
    \caption{Local Basis}
    \end{subfigure}
    \caption{
    \textbf{Comparison of Latent Traversals on StyleGAN2-LSUN Church.} We used the annotated GANSpace on the semantics of "Clouds". The corresponding Fréchet Basis and Local Basis are chosen by the cosine similarity. Some traversal examples along Local Basis do not show the annotated semantic variations compared to the other two bases. In the yellow box, no clouds appeared. In the red box, clouds appeared in all images.}
    \label{fig:traversal_ganspace_frechet_local_2}
\end{figure}

\begin{figure}[t]
    \centering
    \begin{subfigure}[b]{0.315\linewidth}
    \centering
    \includegraphics[width=\linewidth]{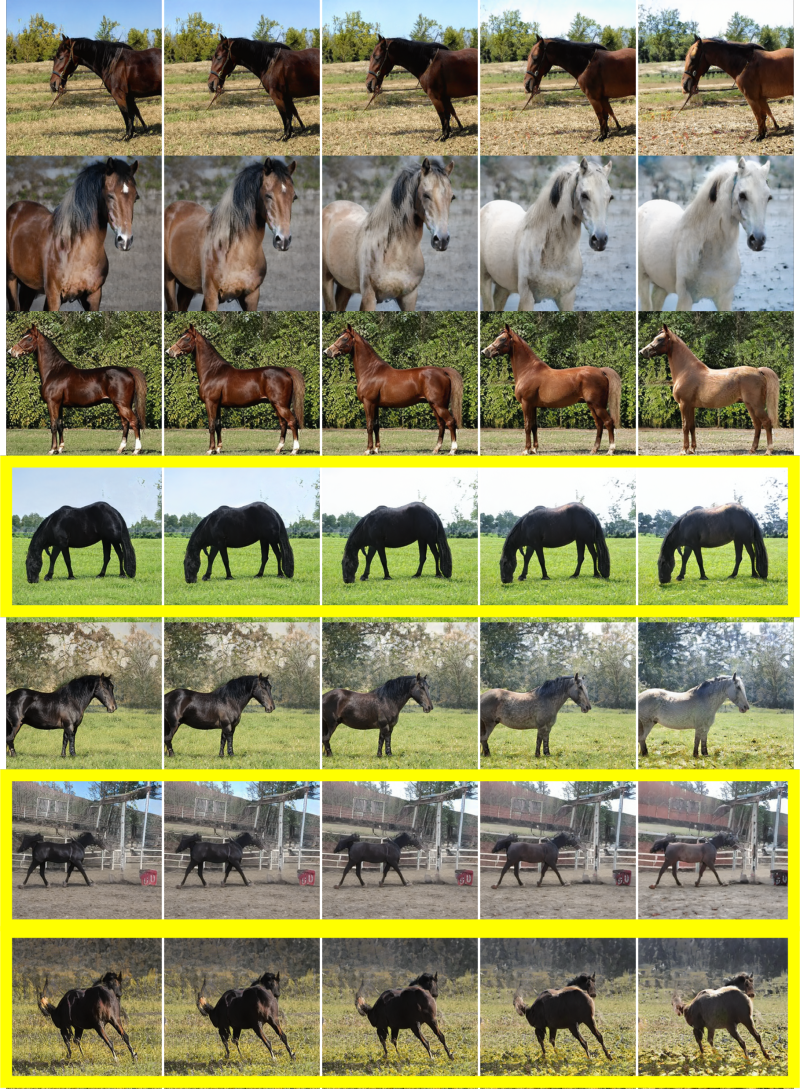}
    \caption{GANspace}
    \end{subfigure}
    \,\,\,
    \begin{subfigure}[b]{0.315\linewidth}
    \centering
    \includegraphics[width=\linewidth]{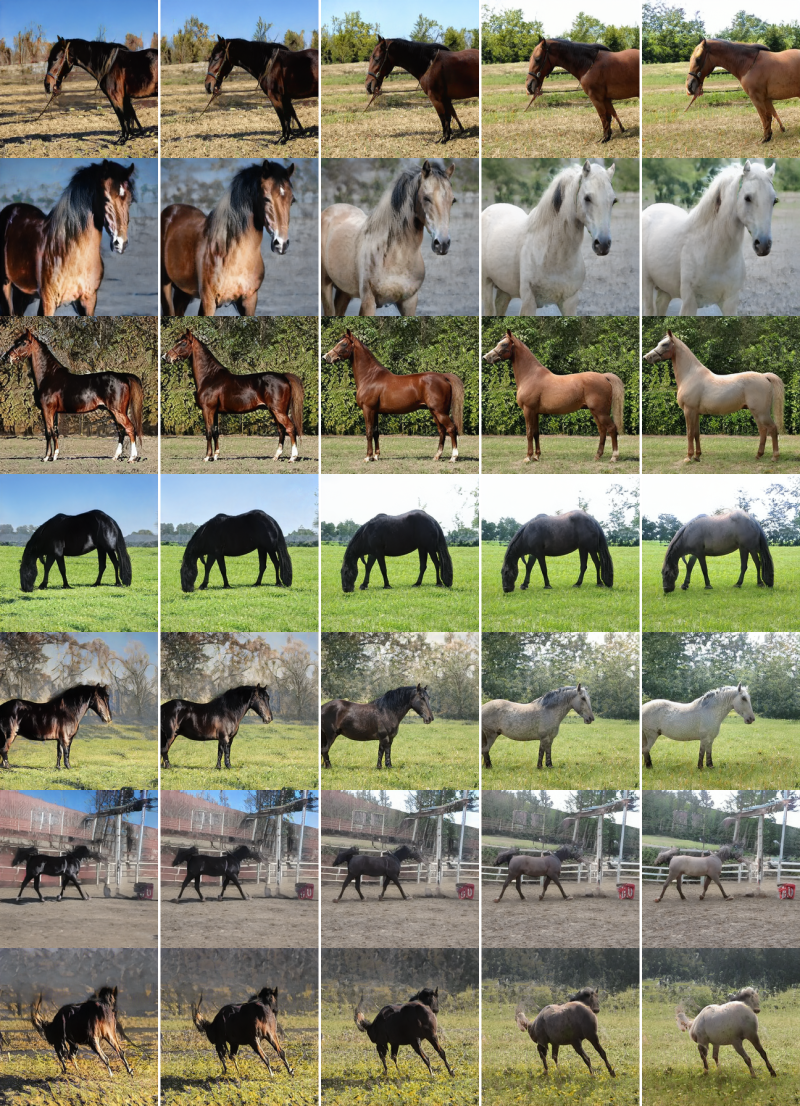}
    \caption{Fréchet Basis}
    \end{subfigure}
    \,\,\,
    \begin{subfigure}[b]{0.315\linewidth}
    \centering
    \includegraphics[width=\linewidth]{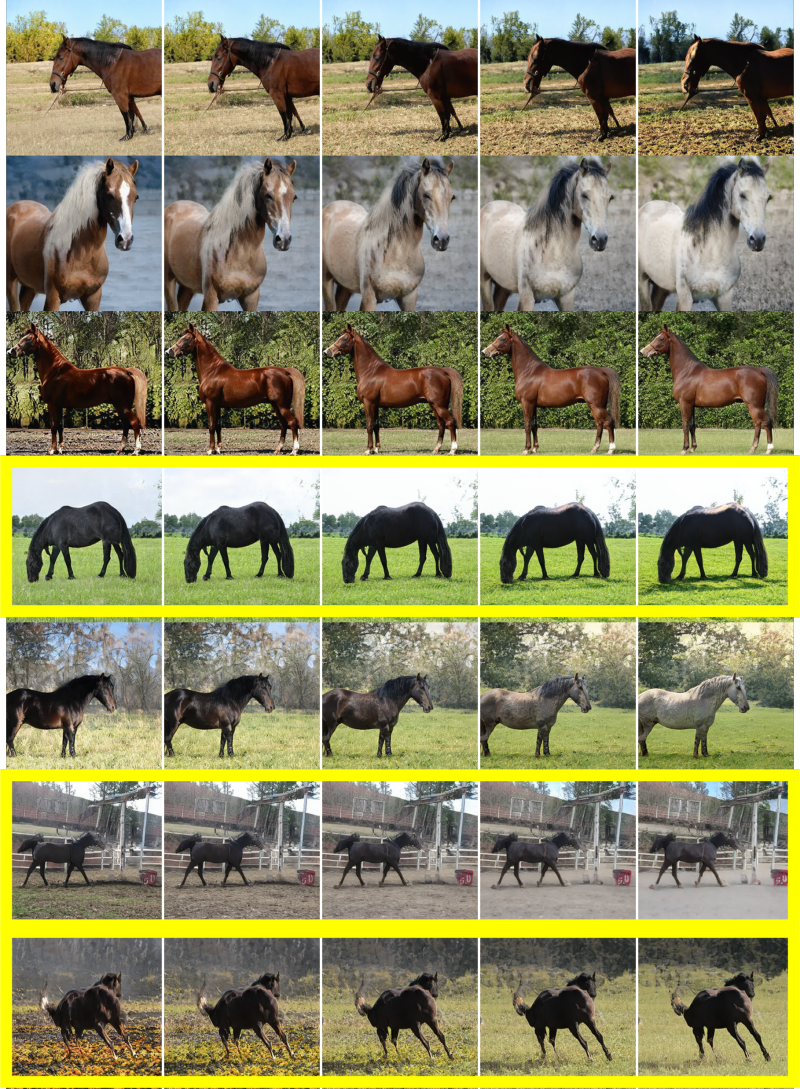}
    \caption{Local Basis}
    \end{subfigure}
    \caption{
    \textbf{Comparison of Latent Traversals on StyleGAN2-LSUN Horse.} We used the annotated GANSpace on the semantics of "White horse". The traversal images of GANspace and Local Basis in the yellow box are less affected than that of Frechet Basis. }
    \label{fig:traversal_ganspace_frechet_local_3}
\end{figure}

\begin{figure}[t]
    \centering
    \begin{subfigure}[b]{0.315\linewidth}
    \centering
    \includegraphics[width=\linewidth]{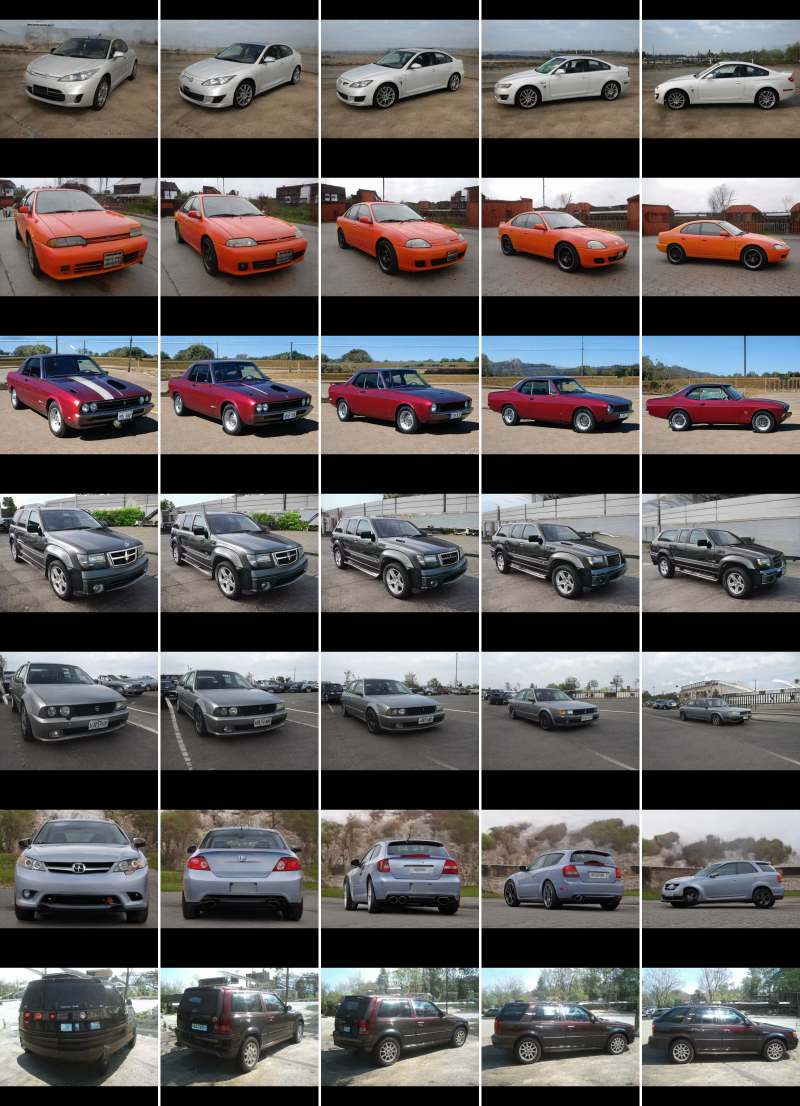}
    \caption{GANspace}
    \end{subfigure}
    \,\,\,
    \begin{subfigure}[b]{0.315\linewidth}
    \centering
    \includegraphics[width=\linewidth]{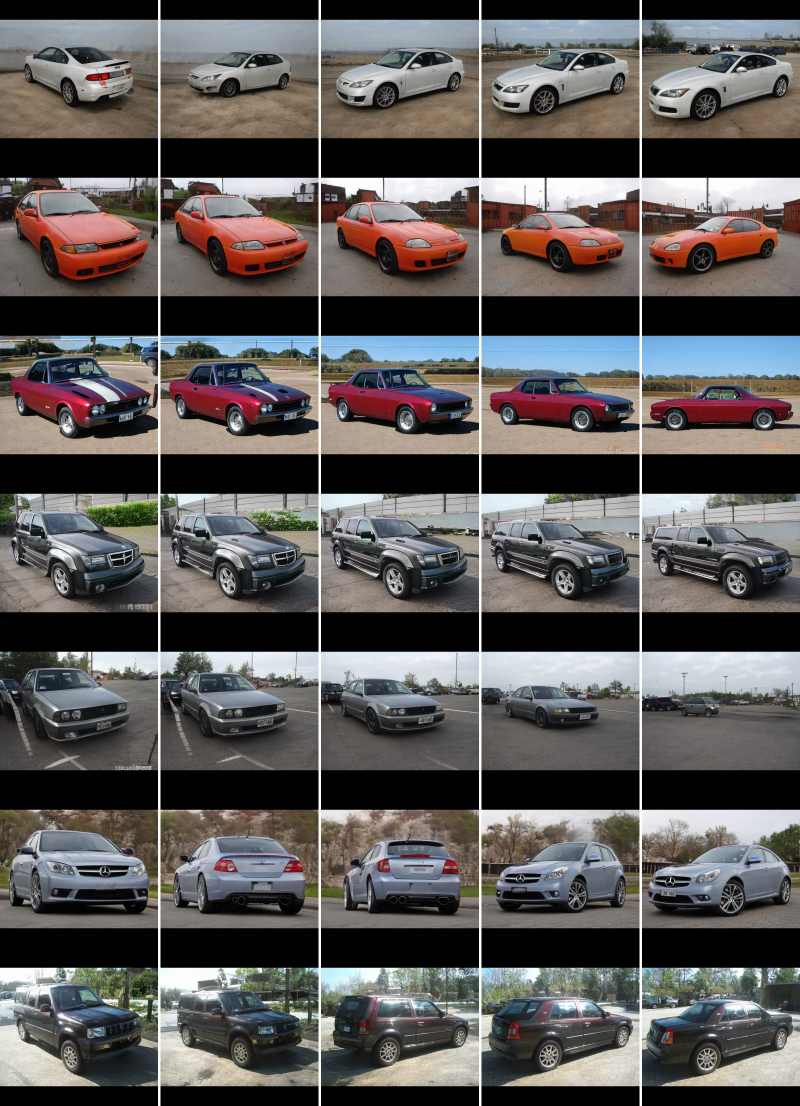}
    \caption{Fréchet Basis}
    \end{subfigure}
    \,\,\,
    \begin{subfigure}[b]{0.315\linewidth}
    \centering
    \includegraphics[width=\linewidth]{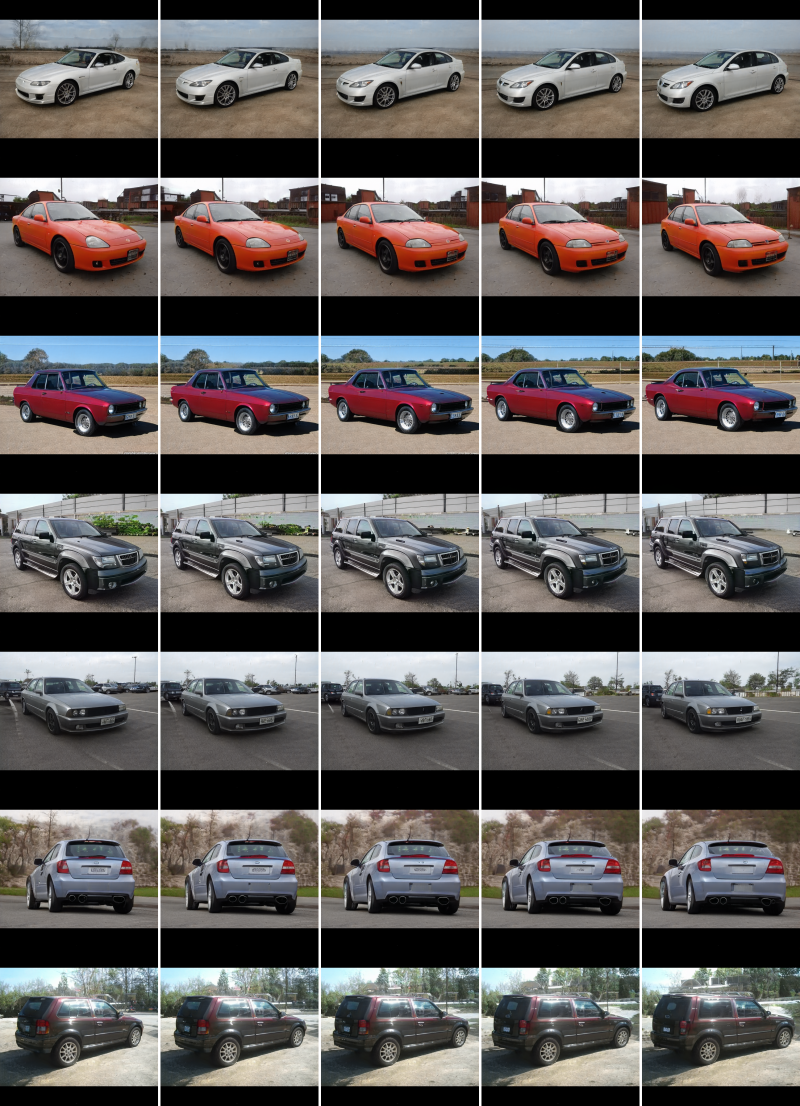}
    \caption{Local Basis}
    \end{subfigure}
    \caption{
    \textbf{Comparison of Latent Traversals on StyleGAN2-LSUN Car.} We used the annotated GANSpace on the semantics of "Side to Front". The traversal images of Local Basis are not affected by perturbations. }
    \label{fig:traversal_ganspace_frechet_local_4}
\end{figure}

\clearpage
\section{Additional Semantic Factorization Comparison} \label{sec:app_samples}

\begin{figure}[h]
    \centering
    \begin{subfigure}[b]{0.4\linewidth}
    \centering
    \includegraphics[width=1\linewidth]{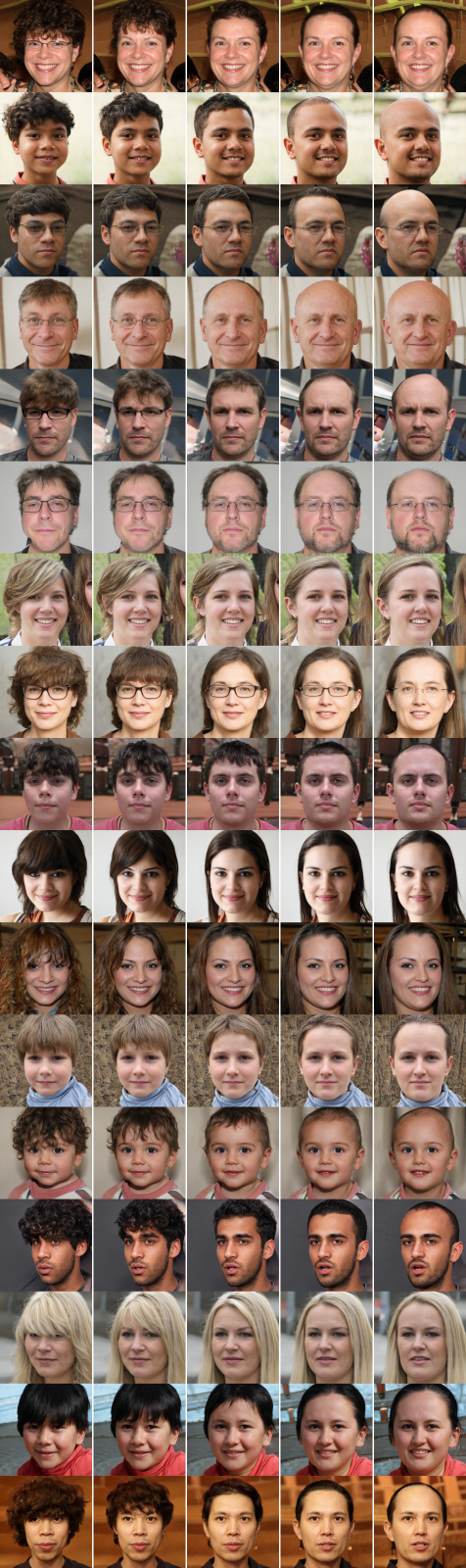}
    \caption{\textbf{Bald} - Fréchet Basis}
    \end{subfigure}
    \quad
    \begin{subfigure}[b]{0.4\linewidth}
    \centering
    \includegraphics[width=1\linewidth]{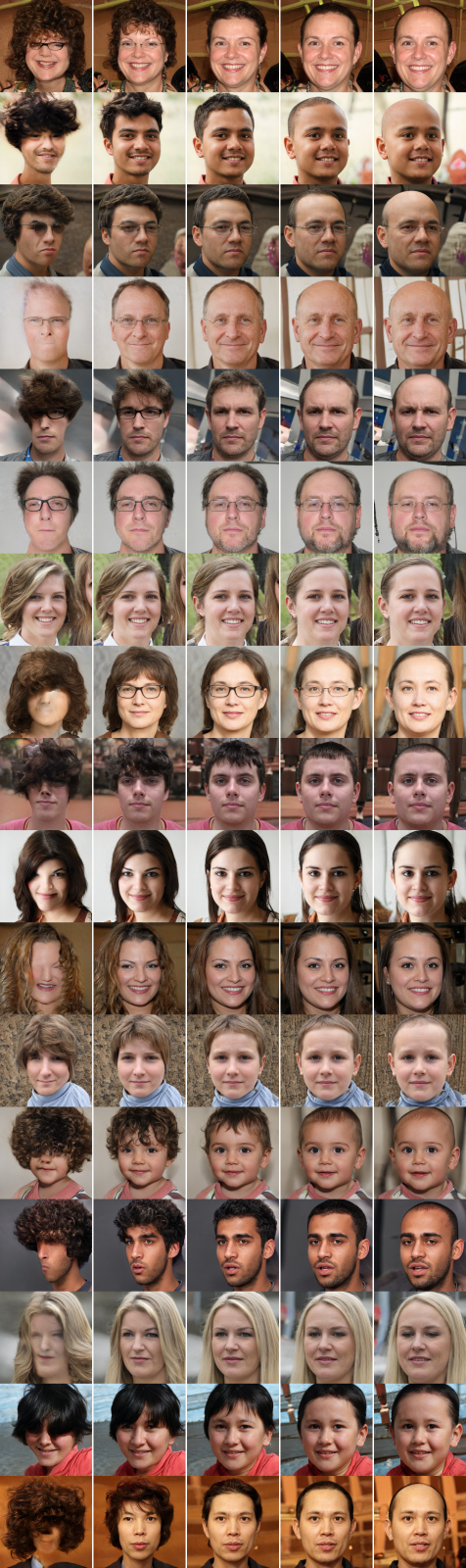}
    \caption{\textbf{Bald} - GANSpace ($v_{21}$, 2-4)}
    \end{subfigure}
    \caption{
    \textbf{Comparison of Semantic Factorization on StyleGAN2-FFHQ} between Fréchet basis and GANSpace. ($v_{i}$, $l_1$-$l_2$) denotes the layer-wise edit along the $i$-th GANSpace component at the $l1$-$l2$ layers.
    }
\end{figure}

\begin{figure}[t]
    \centering
    \begin{subfigure}[b]{0.45\linewidth}
    \centering
    \includegraphics[width=1\linewidth]{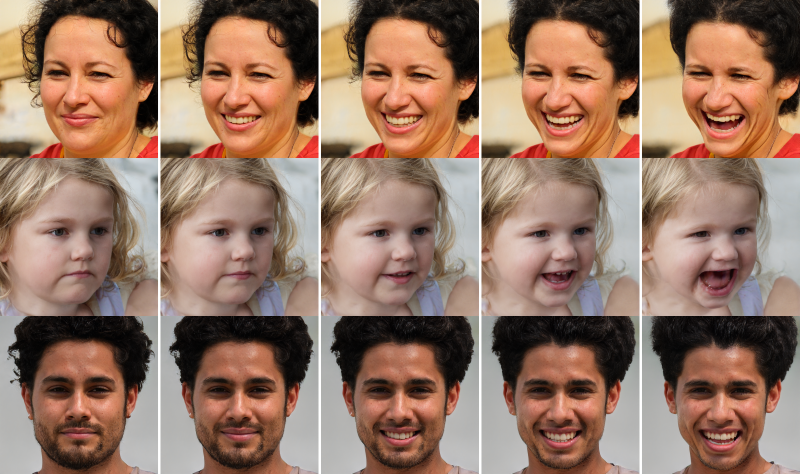}
    \caption{\textbf{Smile} - Fréchet Basis}
    \end{subfigure}
    \quad \quad
    \begin{subfigure}[b]{0.45\linewidth}
    \centering
    \includegraphics[width=1\linewidth]{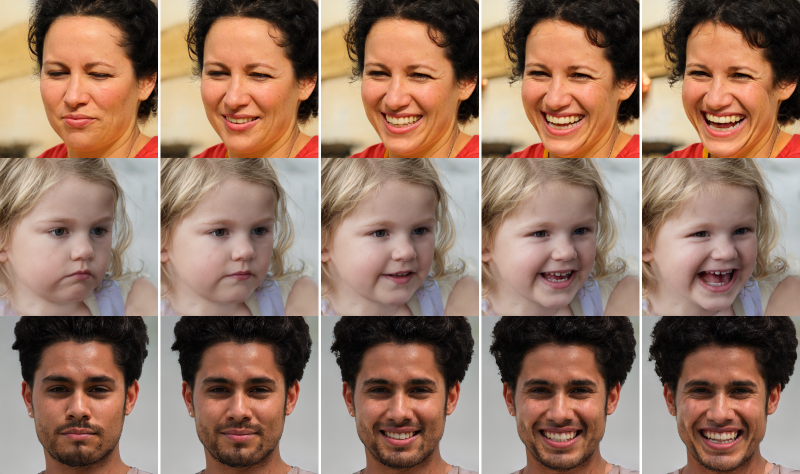}
    \caption{\textbf{Smile} - GANSpace ($v_{36}$, 4-5)}
    \end{subfigure}
    
    \begin{subfigure}[b]{0.45\linewidth}
    \centering
    \includegraphics[width=1\linewidth]{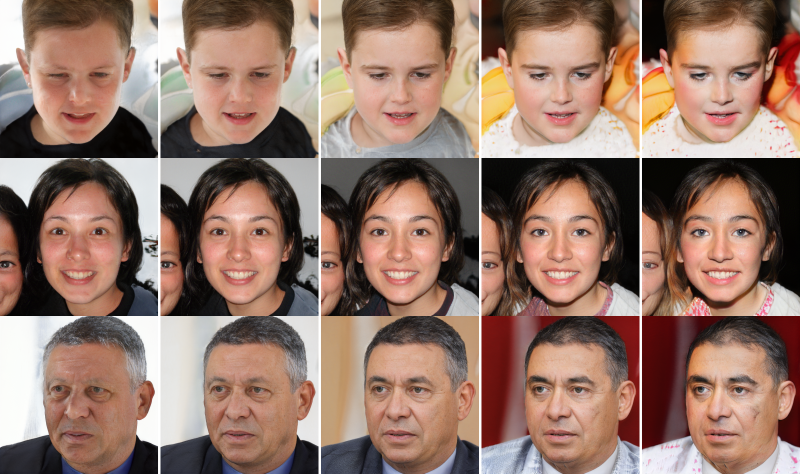}
    \caption{\textbf{Makeup} - Fréchet Basis}
    \end{subfigure}
    \quad \quad
    \begin{subfigure}[b]{0.45\linewidth}
    \centering
    \includegraphics[width=1\linewidth]{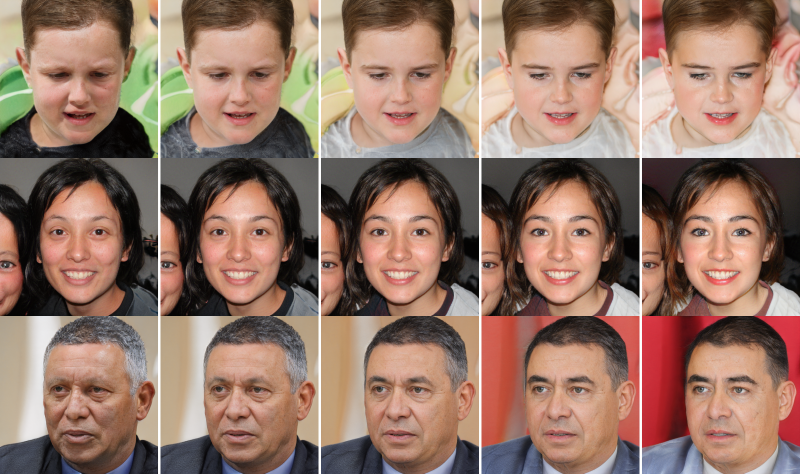}
    \caption{\textbf{Makeup} - GANSpace ($v_{0}$, 8-10)}
    \end{subfigure}

    \begin{subfigure}[b]{0.45\linewidth}
    \centering
    \includegraphics[width=1\linewidth]{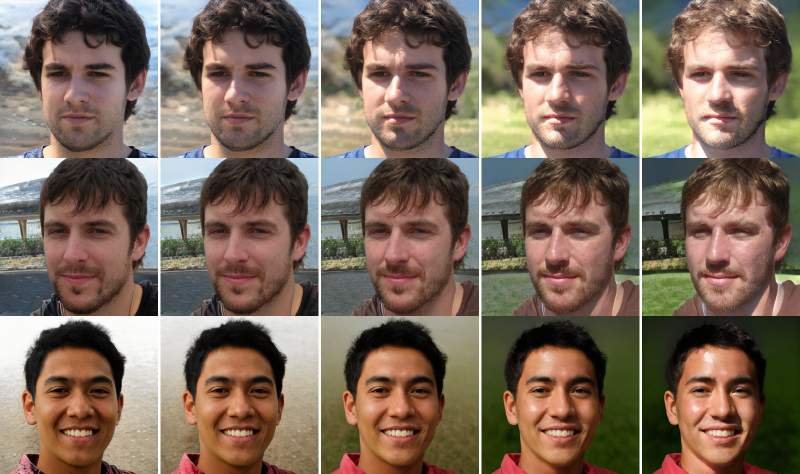}
    \caption{\textbf{Bright BG vs FG} - Fréchet Basis}
    \end{subfigure}
    \quad \quad
    \begin{subfigure}[b]{0.45\linewidth}
    \centering
    \includegraphics[width=1\linewidth]{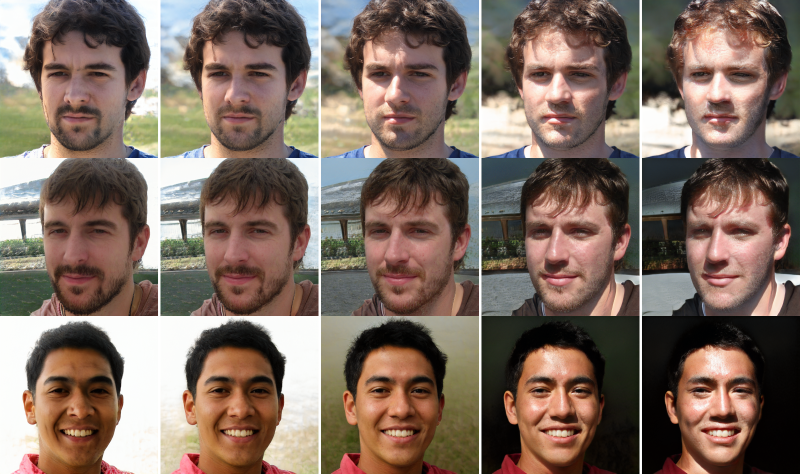}
    \caption{\textbf{Bright BG vs FG} - GANSpace ($v_{13}$, 8)}
    \end{subfigure}
    
    \begin{subfigure}[b]{0.45\linewidth}
    \centering
    \includegraphics[width=1\linewidth]{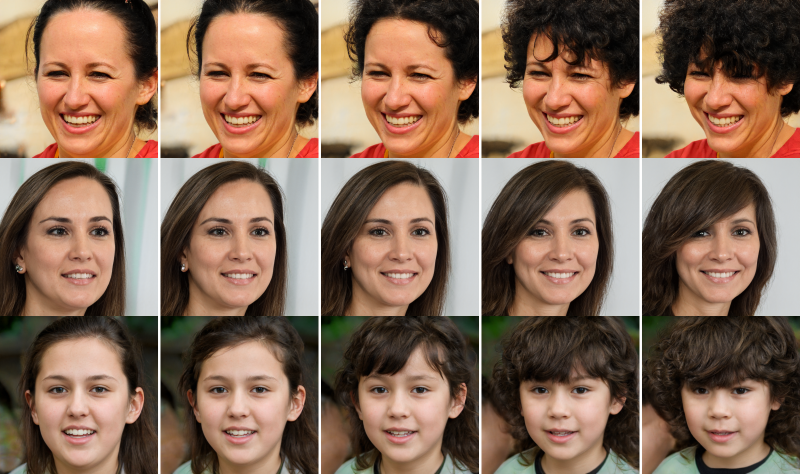}
    \caption{\textbf{Happy frizzy hair} - Fréchet Basis}
    \end{subfigure}
    \quad \quad
    \begin{subfigure}[b]{0.45\linewidth}
    \centering
    \includegraphics[width=1\linewidth]{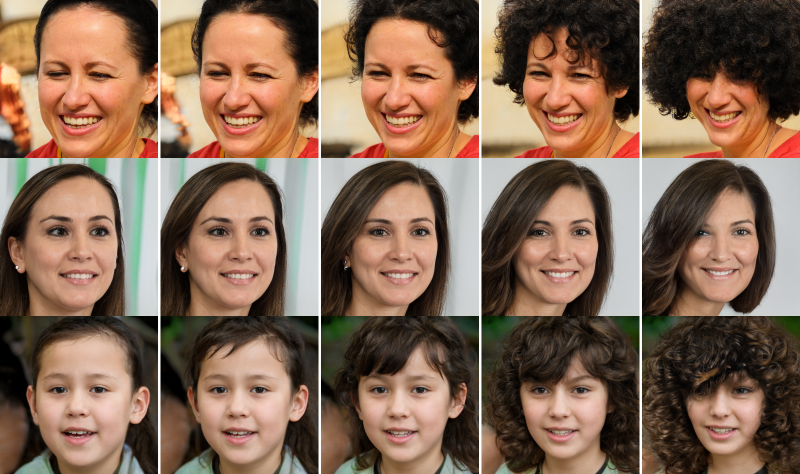}
    \caption{\textbf{Happy frizzy hair} - GANSpace ($v_{21}$, 0-4)}
    \end{subfigure}
    
    \caption{
    \textbf{Comparison of Semantic Factorization on StyleGAN2-FFHQ} between Fréchet basis and GANSpace. ($v_{i}$, $l_1$-$l_2$) denotes the layer-wise edit along the $i$-th GANSpace component at the $l1$-$l2$ layers. }
\end{figure}

\begin{figure}[t]
    \centering
    \begin{subfigure}[b]{0.45\linewidth}
    \centering
    \includegraphics[width=1\linewidth]{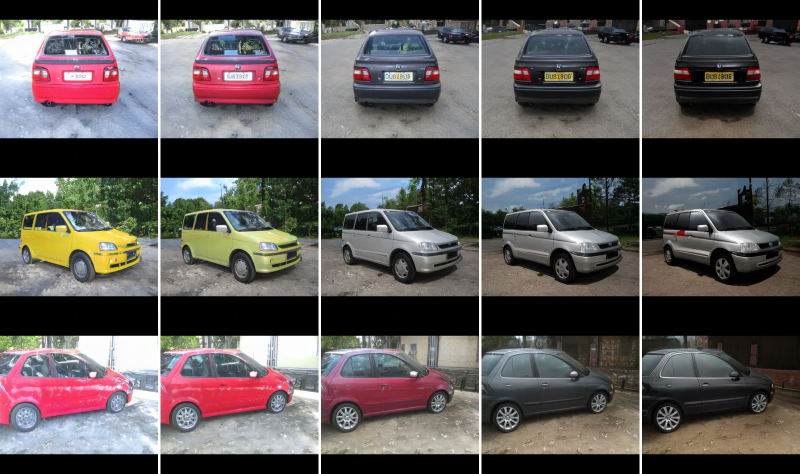}
    \caption{\textbf{Image contrast} - Fréchet Basis}
    \end{subfigure}
    \quad \quad
    \begin{subfigure}[b]{0.45\linewidth}
    \centering
    \includegraphics[width=1\linewidth]{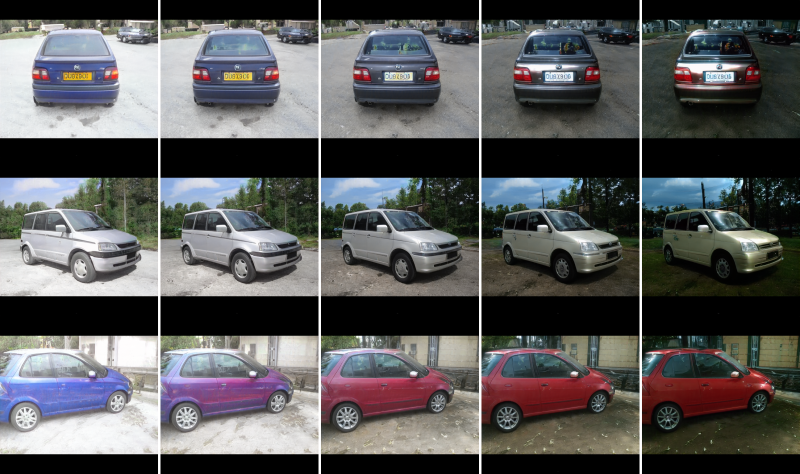}
    \caption{\textbf{Image contrast} - GANSpace ($v_{37}$, 7-15)}
    \end{subfigure}
    
    \begin{subfigure}[b]{0.45\linewidth}
    \centering
    \includegraphics[width=1\linewidth]{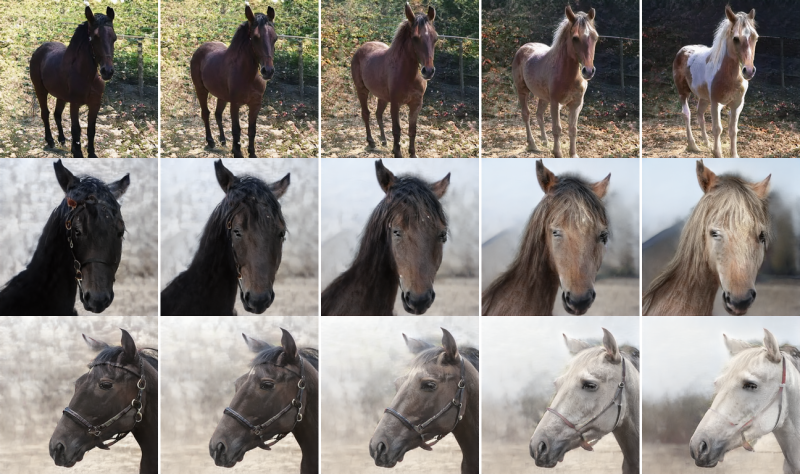}
    \caption{\textbf{Coloring} - Fréchet Basis}
    \end{subfigure}
    \quad \quad
    \begin{subfigure}[b]{0.45\linewidth}
    \centering
    \includegraphics[width=1\linewidth]{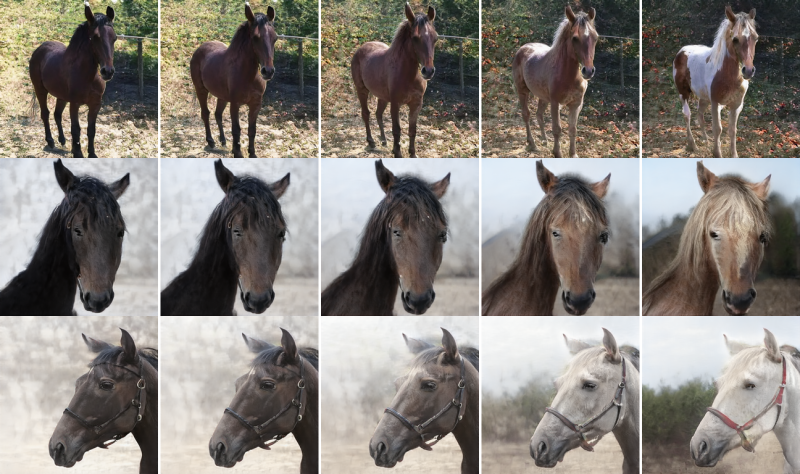}
    \caption{\textbf{Coloring} - GANSpace ($v_{11}$, 5-6)}
    \end{subfigure}

    \begin{subfigure}[b]{0.45\linewidth}
    \centering
    \includegraphics[width=1\linewidth]{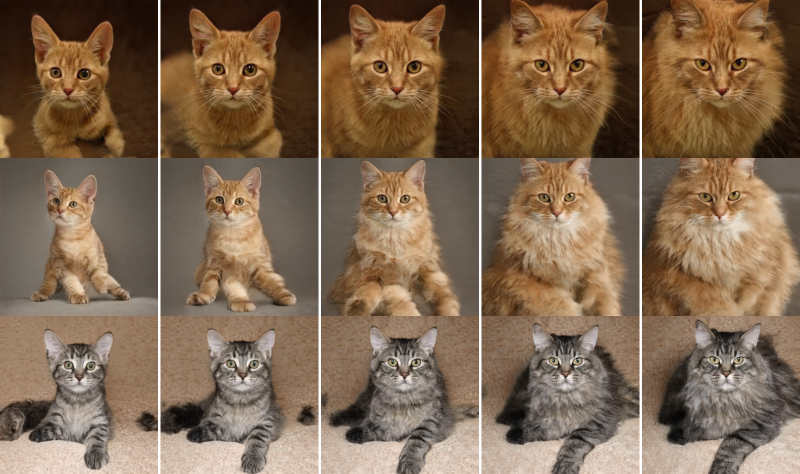}
    \caption{\textbf{Fluffiness} - Fréchet Basis}
    \end{subfigure}
    \quad \quad
    \begin{subfigure}[b]{0.45\linewidth}
    \centering
    \includegraphics[width=1\linewidth]{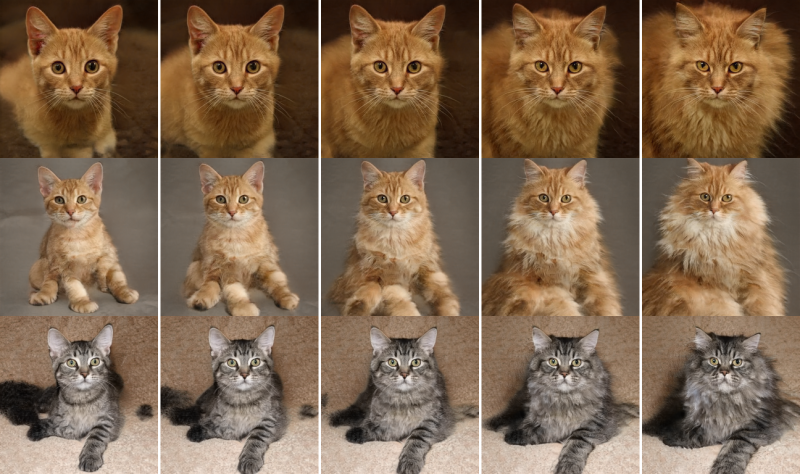}
    \caption{\textbf{Fluffiness} - GANSpace ($v_{27}$, 2-4)}
    \end{subfigure}
    
    \begin{subfigure}[b]{0.45\linewidth}
    \centering
    \includegraphics[width=1\linewidth]{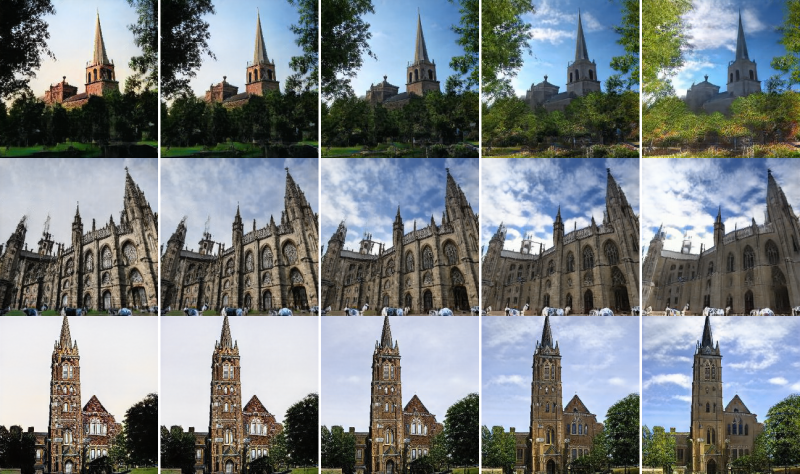}
    \caption{\textbf{Sun direction} - Fréchet Basis}
    \end{subfigure}
    \quad \quad
    \begin{subfigure}[b]{0.45\linewidth}
    \centering
    \includegraphics[width=1\linewidth]{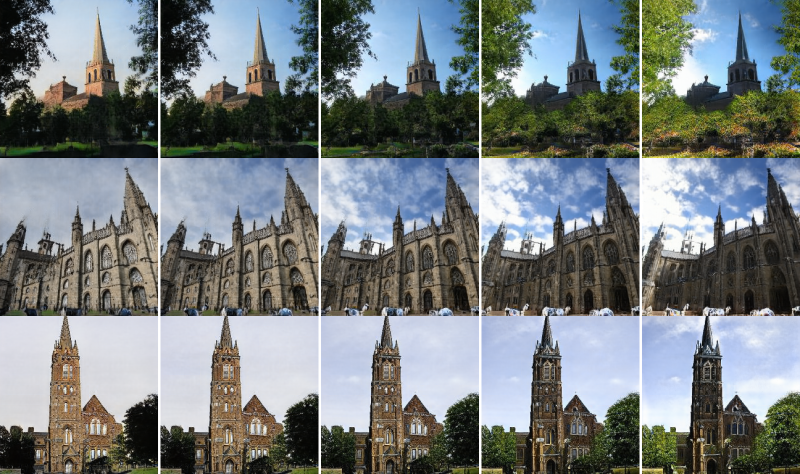}
    \caption{\textbf{Sun direction} - GANSpace ($v_{15}$, 8)}
    \end{subfigure}
    
    \caption{
    \textbf{Comparison of Semantic Factorization} between Fréchet basis and GANSpace. ($v_{i}$, $l_1$-$l_2$) denotes the layer-wise edit along the $i$-th GANSpace component at the $l1$-$l2$ layers. The image traversals are performed on StyleGAN2 (LSUN Car, Horse, Cat, and Church).}
    \label{fig:comparison_semantic_traversal_3}
\end{figure}

\begin{figure}[h]
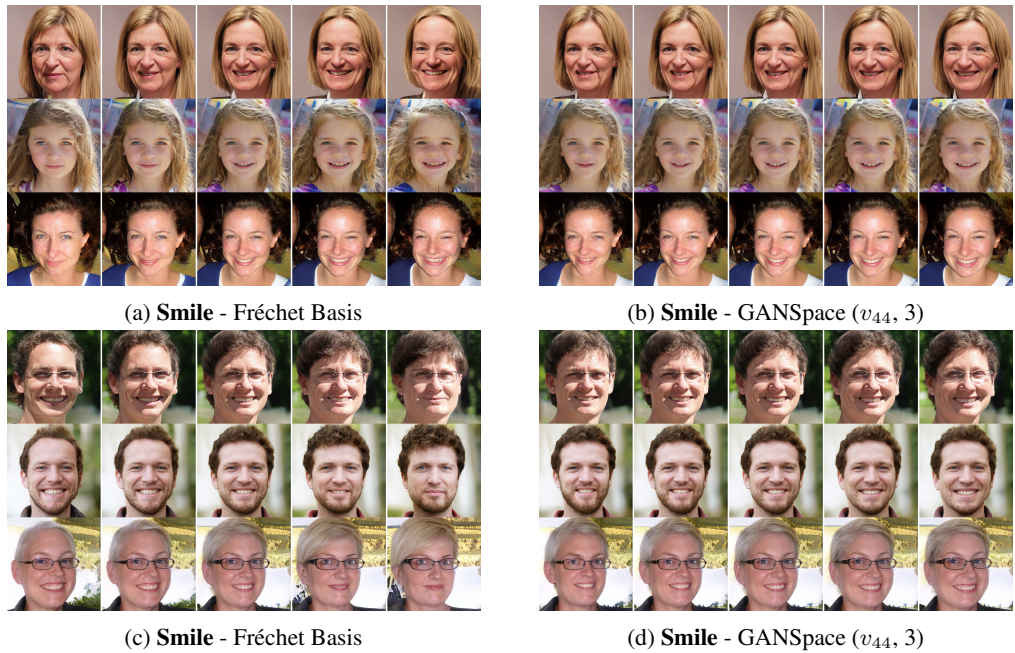

    \centering
    \begin{subfigure}[b]{0.45\linewidth}
    \centering
    \includegraphics[width=1\linewidth]{figure/semantic/new_StyleGAN_3_ffhq_Smile_Frechet_1dim_gsidx44_ptb7.9_frechet_sublayer8.pdf}
    \caption{\textbf{Smile} - Fréchet Basis}
    \end{subfigure}
    \qquad
    \begin{subfigure}[b]{0.45\linewidth}
    \centering
    \includegraphics[width=1\linewidth]{figure/semantic/new_StyleGAN_3_ffhq_Smile_GANspace_1dim_gsidx44_ptb7.9_frechet_sublayer8.pdf}
    \caption{\textbf{Smile} - GANSpace ($v_{44}$, 3)}
    \end{subfigure}
    
    \begin{subfigure}[b]{0.45\linewidth}
    \centering
    \includegraphics[width=1\linewidth]{figure/semantic/StyleGAN_3_ffhq_Smile_Frechet_1dim_gsidx44_ptb7.9_frechet_sublayer8.pdf}
    \caption{\textbf{Smile} - Fréchet Basis}
    \end{subfigure}
    \qquad
    \begin{subfigure}[b]{0.45\linewidth}
    \centering
    \includegraphics[width=1\linewidth]{figure/semantic/StyleGAN_3_ffhq_Smile_GANspace_1dim_gsidx44_ptb7.9_frechet_sublayer8.pdf}
    \caption{\textbf{Smile} - GANSpace ($v_{44}$, 3)}
    \end{subfigure}
    \caption{
    \textbf{Comparison of Semantic Factorization} between Fréchet basis and GANSpace. ($v_{i}$, $l_1$-$l_2$) denotes the layer-wise edit along the $i$-th GANSpace component at the $l1$-$l2$ layers. The image traversals are performed on StyleGAN1-FFHQ.
    }
\end{figure}

\begin{figure}[h]
    \centering
    \begin{subfigure}[b]{0.45\linewidth}
    \centering
    \includegraphics[width=1\linewidth]{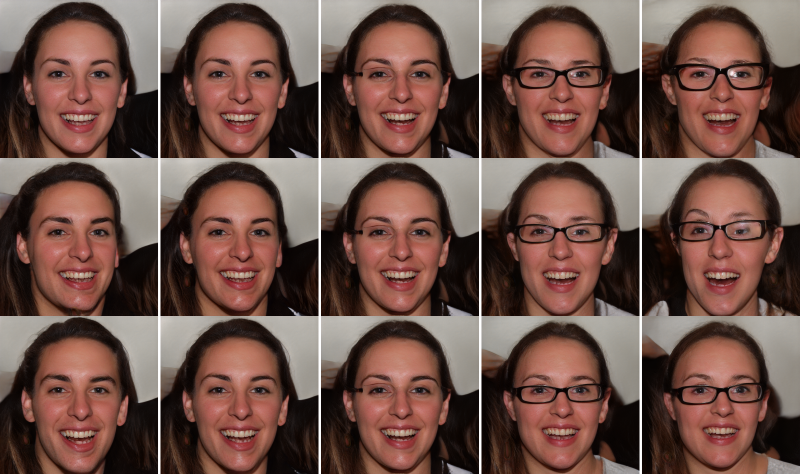}
    \caption{\textbf{Glasses} - StyleGAN1, FFHQ}
    \end{subfigure}
    \qquad
    \begin{subfigure}[b]{0.45\linewidth}
    \centering
    \includegraphics[width=1\linewidth]{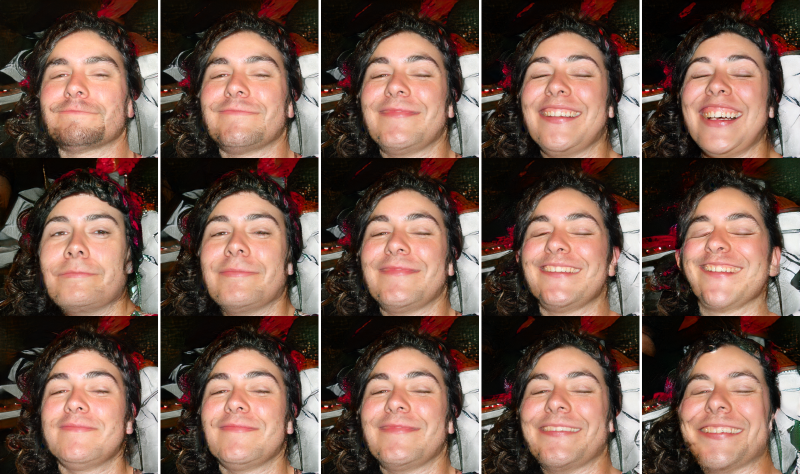}
    \caption{\textbf{Smile} - StyleGAN1, FFHQ}
    \end{subfigure}

    \caption{
    \textbf{Comparison of Supervised method (InterfaceGAN) and Unsupervised methods (GANspace, Fréchet Basis).} The first row shows the results of supervised method (InterfaceGAN \citep{shen2020interpreting}), the second row shows GANspace results, and the last shows our traversing results along Fréchet Basis.}
\end{figure}

\end{document}